\newtheorem{theorem}{Theorem}[section]
\newtheorem{proposition}[theorem]{Proposition}
\newtheorem{lemma}[theorem]{Lemma}
\theoremstyle{definition}
\newtheorem{definition}[theorem]{Definition}
\newtheorem{assumption}[theorem]{Assumption}
\theoremstyle{remark}
\newtheorem{remark}[theorem]{Remark}
\def\elldi{{\tilde{\ell}^{'(t)}_i}}
\def\elld{{\tilde{\ell}^{'(t)}}}
\newcommand{\sone}{\mathds{1}}
\title{How Does Label Noise Gradient Descent Improve Generalization in the Low SNR Regime?}
\author{%
  Wei Huang$^{1,2}$\thanks{Equal Contribution}, Andi Han$^{3,1}$\footnotemark[1], Yujin Song$^{4,1}$, Yilan Chen$^5$, Denny Wu$^{6,7}$,\\ \textbf{Difan Zou$^8$, Taiji Suzuki$^{4,1}$} \\
  $^1$ RIKEN AIP \
  $^2$ The Institute of Statistical Mathematics \\
  $^3$ The University of Sydney \
  $^4$ The University of Tokyo \
  $^5$ University of California San Diego \\
  $^6$ New York University \
  $^7$ Flatiron Institute  \
  $^8$ The University of Hong Kong  \\
 \texttt{wei.huang.vr@riken.jp;} \texttt{andi.han@sydney.edu.au;}\\
   \texttt{y.song.research@gmail.com;}
   \texttt{yic031@ucsd.edu;}
   \texttt{dennywu@nyu.edu;}\\
   \texttt{dzou@hku.hk;}
   \texttt{taiji@mist.i.u-tokyo.ac.jp}
}
\begin{document}

\maketitle

\begin{abstract}
 The capacity of deep learning models is often large enough to both learn the underlying statistical signal and overfit to noise in the training set. This noise memorization can be harmful especially for data with a low signal-to-noise ratio (SNR), leading to poor generalization. Inspired by prior observations that label noise provides implicit regularization that improves generalization, in this work, we investigate whether introducing label noise to the gradient updates can enhance the test performance of neural network (NN) in the low SNR regime. Specifically, we consider training a two-layer NN with a simple label noise gradient descent (GD) algorithm, in an idealized signal-noise data setting. We prove that adding label noise during training suppresses noise memorization, preventing it from dominating the learning process; consequently, label noise GD enjoys rapid signal growth while the overfitting remains controlled, thereby achieving good generalization despite the low SNR. In contrast, we also show that NN trained with standard GD tends to overfit to noise in the same low SNR setting and establish a non-vanishing lower bound on its test error, thus demonstrating the benefit of introducing label noise in gradient-based training.
\end{abstract}

\section{Introduction}

The success of deep learning across various domains \citep{lecun2015deep,silver2016mastering,brown2020language} is often attributed to their ability to extract features \citep{girshick2014rich,devlin2018bert} via gradient-based training \citep{damian2022neural,ba2022high}. 
One desirable property of gradient-based feature learning is the algorithmic regularization that prioritizes learning of the underlying signal instead of overfitting to noise: real-world data contains noise due to mislabeling, data corruption, or inherent ambiguity, yet despite having the capacity to memorize noise, neural networks (NNs) trained by gradient descent (GD) tend to identify informative features and ``low-complexity'' solutions that generalize \citep{zhang2021understanding,rahaman2019spectral}.

To understand this behavior, recent theoretical works considered data models that partition the features into signal and noise components \citep{ghorbani2020neural,ben2022high,wang2024nonlinear}, and studied the performance of gradient-based training in different signal-noise conditions. Among existing theoretical settings, the signal-noise model proposed in \cite{allen2020towards,cao2022benign} has been extensively studied in the feature learning theory literature.
In this model, input features are constructed by combining a label-dependent \textit{signal} with label-independent \textit{noise}. The signal represents meaningful patterns relevant to the predictive task while the noise component captures background features unrelated to the learning task. This idealized setting has shed light on how various algorithms, neural network architectures, and other factors influence optimization and generalization of neural networks, depending on the signal-to-noise ratio (SNR) \citep{frei2022benign, zou2023benefits,jelassi2022towards,huang2025quantifying, xu2023benign,chen2022towards, huang2023understanding,han2024feature,huang2024comparison,han2025on,li2025on}.

In such model, it is known that the SNR dictates a transition from \textit{benign overfitting} to \textit{harmful overfitting}. In the high SNR regime, gradient-based feature learning prioritizes signal learning over noise memorization; hence upon convergence, the trained NN recovers the signal and generalizes to unseen data despite some degree of noise memorization, a phenomenon known as benign overfitting \citep{bartlett2020benign,tsigler2023benign,li2023benign,sanyal2020benign,shamir2023implicit,jiang2024unveil}. 
In contrast, when the SNR is low, noise memorization dominates the training dynamics, and the network fails to identify useful features before the training loss becomes small, leading to harmful overfitting \citep{cao2022benign,kou2023benign}.

Given these challenges, recent works have explored algorithmic modifications that either enhance signal learning or suppress noise memorization, to improve generalization in the challenging low SNR regime. \cite{huang2025quantifying} showed that the smoothing effect of graph convolution in graph neural networks mitigates overfitting to noise; however, this approach requires the graph to be sufficiently dense and exhibits high homophily. 
\cite{chen2023does} found that the sharpness-aware minimization (SAM) method \citep{foret2020sharpness} prevents noise memorization in early stages of training, thereby promoting effective feature learning; this being said, SAM has higher computational cost than standard GD due to the two forward and backward passes per step, and it involves more complex hyperparameter tuning. 
The goal of this work is to address the following question: 

\begin{center}
   \textit{Is there a simple modification of GD with no computational overhead that achieves small generalization error in low SNR settings where standard GD fails to generalize?
   } 
\end{center}
\subsection{Our contributions}

We provide an affirmative answer to the question above by introducing \textbf{random label noise} to the training dynamics as a form of regularization, inspired by label noise (stochastic) gradient descent (GD) \citep{blanc2020implicit,shallue2019measuring,szegedy2016rethinking}. Specifically, we analyze the classification extension of label noise GD considered in \citep{haochen2021shape,damian2021label}, where random label flipping is introduced to prevent overfitting. 

\begin{wrapfigure}{r}{0.42\textwidth}  
\vspace{-5.6mm}
\centering
     \includegraphics[width=0.42\textwidth]{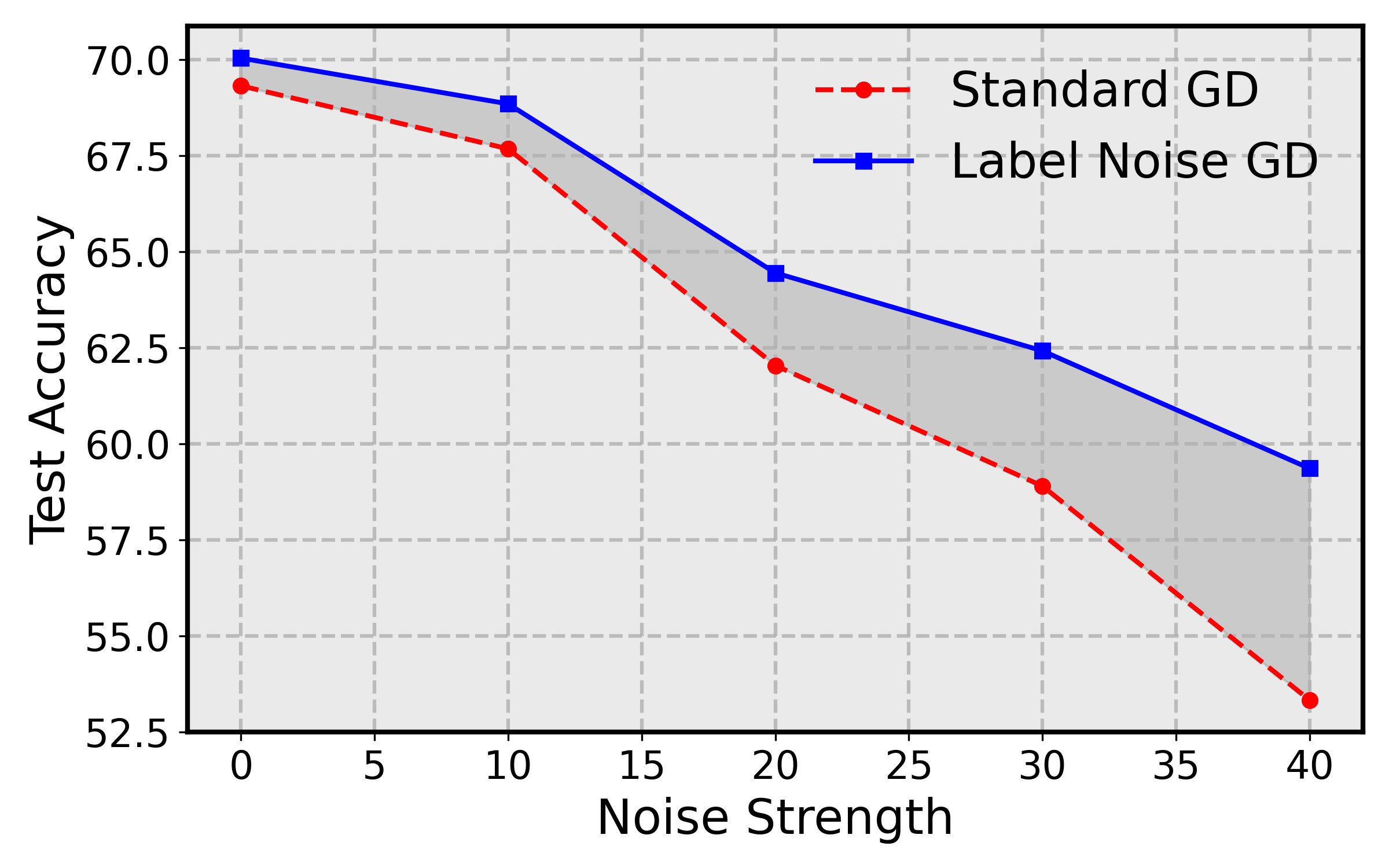}
    \vspace{-7.2mm}
    \caption{\small
    \textbf{Test accuracy of VGG-16 on CIFAR-10 with varying SNR}. Label noise GD consistently outperforms standard GD, and the gap increases with the noise strength. }
    \label{fig:snr}
     \vspace{-4mm}
\end{wrapfigure}

\textbf{Empirically}, we first present findings in a controlled classification setting, where we train a VGG-16 model on (a subset of) the CIFAR-10 dataset. To modulate the SNR, we follow \cite{ghorbani2020neural} and add varying levels of noise to the high-frequency Fourier components of the images -- higher noise strength corresponds to lower SNR and vice versa. The results, shown in Figure \ref{fig:snr}, demonstrate that as the SNR decreases, the performance gap between label noise GD and standard GD becomes more significant, hence suggesting that label noise GD improves generalization in the low SNR regime. The goal of this work is to rigorously establish this separation \textit{in an idealized theoretical model}.

\textbf{Theoretically}, we characterize the properties of Label Noise GD in low SNR regimes, by considering the learning of a two-layer convolutional neural network in a binary classification problem studied in \cite{cao2022benign}, and show that by randomly flipping the labels of a small proportion of training samples at each iteration, noise memorization can be suppressed despite the low SNR, whereas signal learning experiences a period of fast growth. As a result, neural network trained by label noise GD attains good generalization performance in regimes where standard GD fails, as summarized in the following informal theorem:

\begin{theorem}[Informal] Given $n$ training samples drawn from the distribution in Definition \ref{def:data_gene} in the low SNR regime where $n^{-1} \mathrm{SNR}^{-2} = \tilde{\Omega}(1) $. 
Then for any $\epsilon>0$, after a polynomial number of training steps $t$ (depending on $\epsilon$), with high probability we have: (i) \textbf{Standard GD} minimizes the logistic training loss to $L_S^{(t)} \le \epsilon$, but the generalization error (0-1 loss) remains large, i.e., $L^{(t)}_\mathcal{D} = \Omega(1)$. (ii) \textbf{Label noise GD} cannot reduce the logistic training loss to a small value $L_S^{(t)} = \Omega(1)$, but achieves small generalization error (0-1 loss), i.e., $L^{(t)}_\mathcal{D} = o(1)$.
\end{theorem}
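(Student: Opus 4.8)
The plan is to prove both parts inside the \emph{signal-noise decomposition} framework of \cite{cao2022benign}, writing each filter as $\bw_{j,r}^{(t)} = \bw_{j,r}^{(0)} + \gamma_{j,r}^{(t)}\,\bmu/\|\bmu\|_2^2 + \sum_{i=1}^n \rho_{j,r,i}^{(t)}\,\bxi_i/\|\bxi_i\|_2^2$, so that the whole dynamics reduces to tracking the scalar signal-learning coefficients $\gamma_{j,r}^{(t)}$ and the noise-memorization coefficients $\rho_{j,r,i}^{(t)}$. Standard Gaussian concentration on the noise patches ($\|\bxi_i\|_2^2 = \Theta(\sigma_p^2 d)$ and $\langle\bxi_i,\bxi_{i'}\rangle = \tilde O(\sigma_p^2\sqrt d)$ for $i\neq i'$) makes the decomposition essentially diagonal, so each coefficient obeys a one-dimensional recursion driven by the (flipped) logistic derivatives $\elldi$. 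The low-SNR hypothesis $n^{-1}\mathrm{SNR}^{-2} = \tilde\Omega(1)$ enters as the statement that a single noise coefficient receives a per-step push of order $\eta\sigma_p^2 d/n$, while the per-sample signal push is only of order $\eta\|\bmu\|_2^2$.

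For part (i) I would follow the two-stage argument of \cite{cao2022benign,kou2023benign}. Starting from the $\tilde O(1)$ initialization, noise memorization outpaces signal learning in the low-SNR regime, so some $\rho_{j,r,i}$ reach $\Theta(1)$ while every $\gamma_{j,r}$ stays $o(1)$; once the noise is memorized the network interpolates, and a standard convexity/loss-decomposition argument drives the logistic training loss below any $\epsilon$ in $\mathrm{poly}(1/\epsilon)$ steps. For the test error I would use that a fresh $(\bm{x},y)$ has noise patch independent of the training set, so $\langle\bw_{j,r}^{(t)},\bm{x}\text{-noise}\rangle$ concentrates at $0$ and the prediction is governed by the under-learned signal term; since $\gamma = o(1)$ cannot overcome the $\Theta(1)$ fluctuations, $\mathrm{sign}(f^{(t)}(\bm{x}))\neq y$ with constant probability and $L_\mathcal{D}^{(t)} = \Omega(1)$.

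Part (ii) is the crux, and I would treat $\{\gamma_{j,r}^{(t)},\rho_{j,r,i}^{(t)}\}$ as random processes adapted to the filtration generated by the per-step flips, with flip probability $p$. Two coupled effects must be established. First, the flips keep the loss derivatives bounded away from zero: the expected per-sample drift vanishes only at the \emph{flip-balanced margin} $f^\ast = \log\tfrac{1-p}{p} = \Theta(\log(1/p))$, so the loss never collapses and $|\elldi| = \Theta(p)$ persists throughout training; this removes the loss-collapse that stalls signal learning in part (i), letting the aggregated signal drift $\propto \|\bmu\|_2^2\sum_{i}|\elldi|$ keep accumulating over a long horizon. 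Second, the flips impose a \emph{state-dependent restoring force} on each per-sample noise coefficient: once $\rho_{j,r,i}$ is large the margin $\tilde y_i^{(t)}f^{(t)}(\bm{x}_i)$ is large, so on the probability-$p$ flip event $\elldi = \Theta(1)$ and pushes $\rho_{j,r,i}$ back down, whereas on the non-flip event it has already saturated; the resulting conditional drift becomes negative once $\rho_{j,r,i}$ exceeds a threshold $\orho = \tilde O(\log(1/p))$, and combining this supermartingale-type drift with a Freedman-type concentration bound, uniformly over all $i$ and all $\mathrm{poly}(n)$ iterations, yields the cap $\max_{j,r,i}|\rho_{j,r,i}^{(t)}| \le \orho$. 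Since $\gamma$ aggregates the unflipped majority of each class it continues to grow monotonically up to lower-order fluctuations, so the margin eventually becomes signal-dominated.

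Given these bounds the conclusions follow: because the filters carry a capped per-sample noise component while the signal has grown large, a test point — whose independent noise again contributes only $o(1)$ through the near-orthogonal inner products $\langle\bxi_i,\bm{x}\text{-noise}\rangle = \tilde O(\sigma_p^2\sqrt d)$ — is classified by its signal, giving $L_\mathcal{D}^{(t)} = o(1)$; conversely, with negligible noise memorization $f^{(t)}(\bm{x}_i)$ is nearly constant within each class and pinned near $f^\ast$, so the true-label logistic loss cannot be driven to zero and $L_S^{(t)} = \Omega(1)$. I expect the main obstacle to be the stochastic control of $\rho$ in the previous paragraph: the flips make the coefficient updates a \emph{dependent} process whose restoring force is intermittent and nonlinear in $\rho$, and one must show the cap holds \emph{uniformly over all $n$ samples and all $\mathrm{poly}(n)$ steps} while \emph{simultaneously} proving, under the precise low-SNR condition, that the aggregated $\Theta(n)$-sample signal drift provably overtakes the idiosyncratic single-sample noise drift — a separation that the standard-GD analyses obtain for free from loss collapse but which here must be argued directly against the persistent $\Theta(p)$ loss derivatives.
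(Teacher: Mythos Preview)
Your proposal is essentially correct and follows the same route as the paper: the signal--noise decomposition of \cite{cao2022benign}, a two-stage analysis for standard GD, and for label noise GD a supermartingale argument to cap noise memorization while signal learning continues to grow because the loss derivatives stay bounded away from zero. The paper uses Azuma rather than Freedman, applied not to $\rho_{j,r,i}$ directly but to $M_i^{(t)}=(\iota_i^{(t)}-\iota_i^*)^2$ with $\iota_i^{(t)}=\tfrac1m\sum_r(\overline\rho_{y_i,r,i}^{(t)})^2$, which makes the restoring-drift computation cleaner since $\iota_i$ is what actually enters $f_i^{(t)}$; but this is a technical choice, not a different idea.

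One point to tighten in part~(i): your test-error sketch is internally inconsistent --- you first say $\langle\bw_{j,r}^{(t)},\bxi_{\mathrm{new}}\rangle$ ``concentrates at $0$'' and then invoke ``$\Theta(1)$ fluctuations.'' In fact both the noise contribution and the signal contribution to $yf(\mathbf{W},\mathbf{x})$ are $o(1)$ here; the lower bound requires showing the noise term exceeds the tiny signal term $\tilde O(\sigma_0^2\|\bmu\|_2^2)$ with constant probability. The paper does this not by a direct tail bound but by an anti-concentration/shift argument: it constructs $\mathbf{v}=\lambda\sum_{i:y_i=1}\bxi_i$ with $\|\mathbf{v}\|_2\le 0.02\sigma_p$, shows $\sum_{j=\pm1}[g(j\bxi+\mathbf{v})-g(\bxi)]\ge 4\tilde\Omega(\sigma_0^2\|\bmu\|_2^2)$, and then uses a total-variation comparison between $\mathcal{N}(0,\sigma_p^2\mathbf{I})$ and $\mathcal{N}(\mathbf{v},\sigma_p^2\mathbf{I})$ to conclude $\mathbb{P}(|g(\bxi)|\ge\tilde\Omega(\sigma_0^2\|\bmu\|_2^2))\ge 0.24$. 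Your concentration-only reasoning would not deliver this step.
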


We make the following remarks on our main results. 
\begin{itemize}[leftmargin=*]
    \item \textbf{Improved Generalization due to Label Noise.} The theorem provides an upper bound on the test error of label noise GD and lower bound on the error of standard GD. This demonstrates that incorporating label noise into the gradient descent updates improves generalization in the low SNR regime. We note that our conditions on label noise GD learnability are weaker than those required for SAM as specified in \cite{chen2023does}, even though our studied algorithm is arguably simpler and more computationally efficient -- see Section~\ref{sec:main-result} for more comparisons. 

    \item \textbf{Analysis of Feature Learning Dynamics.} We establish the main theorem via a refined characterization of the training dynamics of label noise GD on a two-layer convolutional NN with squared ReLU activation. A key observation in our analysis is that label noise introduces regularization to the noise memorization process, preventing it from growing beyond a constant level; meanwhile, signal learning continues to exhibit a rapid growth rate, allowing the model to identify the informative features and avoid harmful overfitting in low SNR regimes. 

\end{itemize}


\section{Problem Setup} 

In this section, we describe the signal-noise data model, the neural network architecture used for training, and the label noise gradient descent algorithm considered in this work.

\paragraph{Data generating process.} We consider the signal-noise data model from \cite{cao2022benign,chen2023does,huang2025quantifying}. Let $\boldsymbol{\mu} \in \mathbb{R}^d$ be a fixed signal vector, and for each data point $(\mathbf{x}, y)$, the feature $\mathbf{x}$ is composed of two patches, denoted as $\mathbf{x} = \{ \mathbf{x}^{(1)}, \mathbf{x}^{(2)} \} \in \mathbb{R}^{2d}$. The target variable $y$ is a binary label, taking values in $ \{ \pm 1\}$. Then the data is generated according to the following process. 

\begin{definition} \label{def:data_gene} We consider the following generating process for $(\mathbf{x},y)$:
\begin{enumerate}[leftmargin=0.3in]
    \item The true label ${y}$ is drawn from a Rademacher distribution, i.e., $\mathbb{P}[{y} = 1] = \mathbb{P}[y = -1] = 1/2$. 
    \item One of the patches, $\mathbf{x}^{(1)}$ or $\mathbf{x}^{(2)}$ is randomly selected to be $y  \boldsymbol{\mu}$ (the signal), while the other is set to be $\boldsymbol{\xi}_i \sim \mathcal{N} (0, \sigma_p^2 (\mathbf{I}_d - \boldsymbol{\mu} \boldsymbol{\mu}^\top \| \boldsymbol{\mu} \|^{-2}_2))$ (the noise). Here, $\sigma^2_p$ denotes the strength of the noise vector.
\end{enumerate}
\end{definition}

We make the following remarks on the data distribution.
\begin{itemize}[leftmargin=*]
    \item The data model simulates a setting where the input features are composed of both signal and noise components. Specifically, each data point is divided into two patches, and one of these patches contains meaningful information (signal) related to the classification label, while the other patch only contains random noise independent of the label. The noise covariance $\sigma_p^2 (\mathbf{I}_d - \boldsymbol{\mu} \boldsymbol{\mu}^\top \| \boldsymbol{\mu} \|^{-2}_2)$ is set to ensure that the noise vector is orthogonal to the signal vector for simplicity.
    \item  This setup is designed to reflect real-world scenarios where data contains a mix of relevant and irrelevant features (see Appendix A in \cite{allen2020towards} for discussions). 
    Note that in high dimensions ($n\ll d$), the NN can achieve small training loss just by overfitting to the noise component. Therefore, the challenge for the learning algorithm in the low SNR regime is to identify and learn the signal patch while ignoring the noisy patch. 
    \item We use the minimum number of patches in the multi-patch model for concise presentation. Our results can be extended to more general cases where the number of patches is greater than 2; see~\cite{allen2020towards,shen2022data} for such extension.
\end{itemize}

\paragraph{Neural network and loss function.}
Following \cite{cao2022benign}, we consider a two-layer convolutional neural network with squared ReLU activation and shared filters applied separately to each patch. The network is defined as $f(\mathbf{W}, \mathbf{x}) = F_{+1}(\mathbf{W}_{+1}, \mathbf{x}) - F_{-1} (\mathbf{W}_{-1}, \mathbf{x})$, where 
\begin{align*}
    F_{j} (\mathbf{W}_j, \mathbf{x})  = \frac{1}{m} \sum_{r = 1}^m \sum_{p = 1}^2 \sigma \big( \langle \mathbf{w}_{j,r}, \mathbf{x}^{{(p)}} \rangle \big)  = \frac{1}{m} \sum_{r = 1}^m \Big( \sigma \big(\langle \mathbf{w}_{j,r}, y \boldsymbol{\mu} \rangle \big) +  \sigma \big(\langle \mathbf{w}_{j,r}, \boldsymbol{\xi}_i \rangle \big) \Big),
\end{align*}
in which $m$ denotes the size of the hidden layer, and $\sigma(z) = ( \max \{0, z \} )^2$. Note that $j \in \{ -1, +1\}$ corresponds to the fixed second-layer. 
The symbol $\mathbf{W}_j$ represents the collection of weight vectors in the first layer, i.e., $
\mathbf{W}_j = \left[ \mathbf{w}_{j,1}, \mathbf{w}_{j,2}, \ldots, \mathbf{w}_{j,m} \right] \in \mathbb{R}^{d \times m}$, where $\mathbf{w}_{j,r} \in \mathbb{R}^{d}$ is the weight vector of the $r$-th neuron. Here, $j \in \{-1, +1\}$ indicates the fixed value in the second layer. The initial weights $\mathbf{W}_{\pm1}$ has entries sampled from $\mathcal{N}(0, \sigma_0^2)$.

\begin{remark}
Since we do not optimize the 2nd-layer parameters, we expect the 2-homogeneous squared ReLU activation to mimic the behavior of training both layers simultaneously in a ReLU network; such higher-order homogeneity amplifies feature learning (e.g., see \citep{chizat2020implicit,glasgow2023sgd}) and creates a significant gap between signal learning and noise memorization. Similar effect can be achieved by smoothed ReLU with local polynomial growth as in \cite{allen2020towards,shen2022data}. 
\end{remark}

We use the logistic loss computed over $n$ training samples, denoted as $ S = \{ (\mathbf{x}_i, y_i) \}_{i \in [n]}$: 
\begin{equation*}
    L_S(\mathbf{W}) = \frac{1}{n} \sum_{i\in [n]} \ell( y_i f(\mathbf{W}, \mathbf{x}_i)), 
\end{equation*}
where $\ell(z) = \log(1 + \exp(-z))$. To evaluate the generalization performance of the trained network, we measure its expected 0-1 loss on unseen data, defined as
\begin{align} \label{eq:population}
    L^{0-1}_\mathcal{D}(\mathbf{W}) = \mathbb{E}_{(\mathbf{x}, y) \sim \mathcal{D}} [ \mathds{1}(y \neq \mathrm{sign}(f(\mathbf{W}, \mathbf{x})) ],
\end{align}
where $\mathcal{D}$ denotes the data distribution specified in Definition \ref{def:data_gene}, and $\sone(\cdot)$ is the indicator function.

\paragraph{Label noise GD for binary classification.}
We train the above neural network by gradient descent on either (i) the original loss function~(standard GD), or (ii) the loss function with label-flipping noise defined as
\begin{align*}
    L^\epsilon_{S}(\mathbf{W}^{(t)}) \triangleq \frac{1}{n} \sum_{i \in [n]} \ell \big(\epsilon^{(t)}_i y_i f(\mathbf{W}^{(t)}, \mathbf{x}_i) \big).
\end{align*}
Here, $ \epsilon_i^{(t)} $ is a random variable equal to $1$ with probability $1-p$ and $-1$ with $p$, i.e., $\epsilon_i^{(t)} \sim \mathrm{Rademacher}(1-p,p)$. In other words, labels flip with probability $p$ independently at each step.

\begin{remark}
We remark that label smoothing \cite{shallue2019measuring,szegedy2016rethinking} and label flipping are equivalent in expectation. This connection has also been discussed in \cite{li2020regularization}. However, note that this equivalence in expectation does not imply closeness in training dynamics due to the stochasticity introduced by the label-flipping.
\end{remark}

The label noise GD update is then given as follows: 
\begin{align}  
    \mathbf{w}_{j,r}^{(t+1)} & = \mathbf{w}_{j,r}^{(t)} - \frac{\eta}{nm} \sum_{i=1}^n \tilde{\ell}^{' (t)}_{i} \sigma'(\langle \mathbf{w}_{j,r}^{(t)}, y_i \boldsymbol{\mu} \rangle) \epsilon_i^{(t)} j \boldsymbol{\mu} - \frac{\eta}{nm} \sum_{i=1}^n \tilde{\ell}^{'(t)}_{i} \sigma'(\langle \mathbf{w}_{j,r}^{(t)}, \boldsymbol{\xi}_i \rangle) \epsilon_i^{(t)} y_i j \boldsymbol{\xi}_i, \label{gd_update_w}
\end{align}
where $\eta$ is the learning rate, and we defined $\tilde{\ell}^{'(t)}_{i} = \ell'( \epsilon_i^{(t)} y_i f(\mathbf{W}^{(t)}, \mathbf{x}_i))$ as the derivative of the loss function. This \textit{label noise GD} training procedure is outlined in Algorithm \ref{l_noise_sgd_algo}. Observe that the proposed algorithm is \textit{computationally efficient}, as the introduced label noise does not modify the original gradient descent framework. Hence this method is simple to implement, does not add significant computational overhead, and requires no complex hyperparameter tuning.
\begin{algorithm}[t]
 \caption{Label noise gradient descent}
 \label{l_noise_sgd_algo}
 \begin{algorithmic}[1]
  \STATE Initialize $\mathbf{W}_0$, step size $\eta$, flipping probability $p \in [0,1]$ 
  \FOR{$t = 0,...,T-1$} 
  \STATE Sample $\epsilon_i^{(t)}  \sim \mathrm{Rademacher}(1-p,p)$, $\forall i \in [n]$.
  \STATE $\mathbf{W}^{(t+1)} = \mathbf{W}^{(t)} - \eta \nabla_{\mathbf{W}} L^\epsilon_{S}(\mathbf{W}^{(t)})$, where $L^\epsilon_{S}(\mathbf{W}^{(t)}) = \frac{1}{n} \sum_{i \in [n]} \ell \big(\epsilon^{(t)}_i y_i f(\mathbf{W}^{(t)}, \mathbf{x}_i) \big)$. 
  \ENDFOR
 \end{algorithmic} 
\end{algorithm}

\section{Main results}

\label{sec:main-result}

In this section, we quantify the benefits of label noise gradient descent by comparing its generalization performance against standard gradient descent (GD) training without label noise. We begin by outlining the assumptions that apply to both label noise GD and standard GD.

\begin{assumption} \label{ass:main}
Define $\mathrm{SNR} = \frac{\| \boldsymbol{\mu}\|_2}{\sigma_p \sqrt{d}}$. We consider the following setting for both algorithms:
    \begin{enumerate}[leftmargin=*,topsep=0.5mm,itemsep=0.5mm,label=(\roman*)]
        \item data dimension $d = \tilde{\Omega}( \max\{ n^2 , n \| \boldsymbol{\mu} \|^2_2/\sigma^2_p \})$; signal-to-noise ratio $\mathrm{SNR} = \tilde{O}(1/\sqrt{n})$.
        \item network width $m = \tilde{\Omega}(1)$; number of training samples $n = \tilde{\Omega}(1) $.
        \item learning rate $\eta \le \tilde{O}(\sigma^{-2}_p d^{-1})$.
        \item initialization variance $ \tilde{O}(n \sigma^{-1}_p d^{-3/4}) \le \sigma_0 \le \tilde{O}( \min\{  \| \boldsymbol{\mu} \|^{-1}_2 d^{-5/8}, \sigma^{-1}_p d^{-1/2} \}  ) $.
        \item flipping rate of label noise $p$ lies in the interval $p \in ({\frac{C \log d}{\sqrt{mn}}}, \frac{1}{C} )$, where $C$ is a sufficient large constant.
    \end{enumerate}
\end{assumption}

We make the following remarks on the above assumption. 
\begin{itemize}[leftmargin=*,topsep=0.7mm,itemsep=0.7mm]
    \item The high-dimensional assumption $(i)$ is standard in the benign overfitting analysis of NNs (e.g., see \cite{cao2022benign,frei2022benign}). 
    The low SNR condition is derived from the comparison between the magnitude of signal learning and noise memorization -- see Section~\ref{sec:sn-decomp}; similar conditions has been established in \cite{cao2022benign,kou2023benign} for different activations. 
    \item The requirements on the hidden layer size $m$ and the sample size $n$ being at least polylogarithmic in the dimension $d$ ensure that certain statistical properties regarding weight initialization and the training data hold with high probability at least $1-1/d$.
    \item The upper bound on the learning rate $\eta$ ensures that the iterates in (\ref{eq:gamma_update}-\ref{eq:rho2_update}) remain bounded, which is required for standard GD to reach low training loss;  see Proposition~\ref{prop:upper_entire}. 
    \item The upper bound on initialization scale $\sigma_0$ is used to ensure convergence of GD, and the lower bound is used for anti-concentration at initialization. Similar requirements can be found in \cite[Condition 4.2]{cao2022benign}. 
    \item The lower bound ensures that the number of flipped samples concentrates around its expectation so that our theoretical analysis remains valid, while the upper bound on label flipping rate $p$ prevents the label noise from dominating the true signal.
\end{itemize}
We first state the negative result for standard gradient descent (GD) without label noise.
\begin{theorem} [GD fails to generalize under low SNR] \label{thm:GD}
   Under Assumption \ref{ass:main}, for any $\epsilon > 0$, there exists $t = \Theta(\frac{nm \log(1/(\sigma_0 \sigma_p \sqrt{d}))}{\eta  \sigma^2_p d} + \frac{ m^3 n}{\eta \epsilon  \sigma^2_p d})$, such that with probability at least $1-d^{-{1/4}}$, it holds that
  \begin{itemize}[topsep=0.5mm]
      \item The training error converges, i.e., $L_S(\mathbf{W}^{(t)}) \le \epsilon$.
      \item The test error is large, i.e., $L_\mathcal{D}(\mathbf{W}^{(t)}) \ge  0.24$.
  \end{itemize} 
\end{theorem}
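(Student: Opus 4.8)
The plan is to port the signal--noise decomposition framework of \cite{cao2022benign,kou2023benign} to this squared-ReLU setting. Since $\sigma$ is $2$-homogeneous, I would write each filter as $\mathbf{w}_{j,r}^{(t)} = \mathbf{w}_{j,r}^{(0)} + j\,\gamma_{j,r}^{(t)}\,\|\bmu\|_2^{-2}\bmu + \sum_{i=1}^n \rho_{j,r,i}^{(t)}\,\|\bxi_i\|_2^{-2}\bxi_i$, where $\gamma_{j,r}^{(t)}$ tracks \emph{signal learning} and $\rho_{j,r,i}^{(t)}$ tracks \emph{noise memorization}, with scalar update rules as in (\ref{eq:gamma_update})--(\ref{eq:rho2_update}); the whole argument then reduces to tracking these coefficients. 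The first step is to fix a family of inductive invariants and propagate them over the entire horizon $t$: concentration of the training noise ($\|\bxi_i\|_2^2 = \Theta(\sigma_p^2 d)$ and $|\langle\bxi_i,\bxi_{i'}\rangle| = \tilde{O}(\sigma_p^2\sqrt d)$ for $i\ne i'$), smallness of all initialization inner products, nonnegativity and monotonicity of the dominant coefficients, and---crucially---the \emph{scale separation} in which $\max_{j,r,i}\rho_{j,r,i}^{(t)} = \Theta(1)$ while the signal stays bounded, $\max_{j,r}\gamma_{j,r}^{(t)} = \tilde{O}(1)$, which is exactly where the low-SNR condition $n\,\mathrm{SNR}^2 = \tilde{O}(1)$ of Assumption \ref{ass:main}(i) enters.

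Second, I would prove the training-loss bound (first bullet) via a two-phase analysis. In the early phase the loss derivatives $-\elldi$ remain $\Theta(1)$, and by (\ref{eq:rho2_update}) the leading noise coefficient obeys approximately $\rho^{(t+1)} \approx \rho^{(t)}\big(1 + \tfrac{2\eta\sigma_p^2 d}{nm}(-\elldi)\big)$, i.e.\ geometric growth at rate $\Theta(\eta\sigma_p^2 d/(nm))$; growing the initial inner product $\langle\mathbf{w}_{j,r}^{(0)},\bxi_i\rangle = \tilde\Theta(\sigma_0\sigma_p\sqrt d)$ up to $\Theta(1)$ therefore takes $\Theta\big(\tfrac{nm}{\eta\sigma_p^2 d}\log(1/(\sigma_0\sigma_p\sqrt d))\big)$ steps, matching the first time term. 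By the end of this phase every training point is fit through its own memorized noise, so $L_S = \Theta(1)$; a subsequent descent/potential argument---using the iterate bounds of Proposition \ref{prop:upper_entire} together with the elementary logistic inequality $\ell(z) \le -C\,\ell'(z)$---drives the loss from $\Theta(1)$ down to $\epsilon$ in the additional $\Theta\big(\tfrac{m^3 n}{\eta\epsilon\sigma_p^2 d}\big)$ steps, yielding $L_S(\mathbf{W}^{(t)})\le\epsilon$.

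Third, for the test-error lower bound (second bullet) I would evaluate $y f(\mathbf{W}^{(t)},\mathbf{x})$ on a fresh sample and split it into a signal part and a noise part. The signal part is governed by $\gamma$ and is $\tilde{O}(1)$ by the scale separation, so it cannot grow large enough to control the sign within the polynomial horizon. For the noise part $\tfrac1m\sum_r[\sigma(\langle\mathbf{w}_{+1,r}^{(t)},\bxi\rangle)-\sigma(\langle\mathbf{w}_{-1,r}^{(t)},\bxi\rangle)]$, the fresh noise $\bxi$ is (given $\mathbf{W}^{(t)}$) independent of the memorized directions and orthogonal to $\bmu$; since $|\langle\bxi_i,\bxi\rangle| = \tilde{O}(\sigma_p^2\sqrt d)$, the memorization leakage $\sum_i\rho_{j,r,i}\langle\bxi_i,\bxi\rangle\|\bxi_i\|_2^{-2} = \tilde{O}(\sqrt{n/d}) = \tilde{o}(1)$ is negligible, so $\langle\mathbf{w}_{j,r}^{(t)},\bxi\rangle$ is dominated by the mean-zero Gaussian $\langle\mathbf{w}_{j,r}^{(0)},\bxi\rangle\sim\mathcal N(0,\sigma_0^2\|\bxi\|_2^2)$, whose law is symmetric and \emph{independent of $y$}. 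Hence the noise part is a sign-symmetric fluctuation with non-degenerate spread that is at least of the order of the signal part, and a Gaussian anti-concentration estimate gives $\mathbb{P}[yf(\mathbf{W}^{(t)},\mathbf{x})<0]\ge 0.24$, i.e.\ $L_\mathcal{D}(\mathbf{W}^{(t)})\ge 0.24$.

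The main obstacle is twofold. The first is propagating the coupled invariants---above all the scale separation $\gamma \ll \rho$---over the \emph{entire} polynomially long trajectory: both coefficients grow, so one must run a simultaneous induction on matching upper and lower bounds for $\gamma_{j,r}^{(t)}$ and $\rho_{j,r,i}^{(t)}$ and rule out any late-time catch-up of signal learning, which is precisely where the quantitative $n\,\mathrm{SNR}^2 = \tilde{O}(1)$ budget must be spent. The second is the test step: converting ``signal no larger than the fresh-noise spread'' into the \emph{explicit} constant $0.24$ requires a quantitative anti-concentration bound for the non-Gaussian squared-ReLU output, together with uniform control of the negligible leakage and signal terms, rather than a mere order-of-magnitude comparison.
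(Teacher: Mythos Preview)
Your two–phase plan for the first bullet and the signal–noise decomposition are essentially what the paper does, but the scale separation you invoke is too weak and this propagates into a real gap. You write $\max_{j,r}\gamma_{j,r}^{(t)}=\tilde O(1)$; the paper actually shows the much stronger bound $\gamma_{j,r}^{(t)}\le\tilde O(\sigma_0\|\bmu\|_2)$ \emph{throughout} both phases (Lemmas~\ref{lem:gd_stage1} and~\ref{lem:gd_stage2}). In the second phase this bound does not follow from Proposition~\ref{prop:upper_entire} or from ``$\ell(z)\le -C\ell'(z)$''; it is obtained by first constructing the reference point $\mathbf{W}^\ast$ and deriving $\sum_{t=T_1}^{T_2}L_S(\mathbf{W}^{(t)})\le\tilde O(\eta^{-1}m^3 n\sigma_p^{-2}d^{-1})$, and then feeding this summability back into the $\gamma$–update to cap the total second–stage increment of $\gamma$ by $\tilde O(n\,\mathrm{SNR}^2)\cdot\hat\beta$. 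Without this step you cannot rule out $\gamma$ creeping up to $\Theta(1)$ during the long $m^3 n/(\eta\epsilon\sigma_p^2 d)$ tail, and then the test–error argument collapses because the signal part of $yf$ would be $\Theta(1)$, comparable to the noise part.

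Your test–error strategy is also genuinely different from the paper's and has a soft spot. The paper does \emph{not} argue that $\langle\mathbf w_{j,r}^{(t)},\bxi\rangle$ is dominated by the initialization term; instead it exploits the large memorized coefficients $\overline{\rho}^{(t)}_{1,r,i}\ge 1$ directly, via a shift vector $\mathbf v=\lambda\sum_{i:y_i=1}\bxi_i$ with $\|\mathbf v\|_2\le 0.02\sigma_p$: convexity of $\sigma$ forces $\sum_{j\in\{\pm1\}}[g(j\bxi+\mathbf v)-g(j\bxi)]\ge 4\,\tilde\Omega(\sigma_0^2\|\bmu\|_2^2)$, so at least one of $\{\pm\bxi,\pm\bxi+\mathbf v\}$ lies in the bad set, and a total–variation bound between $\mathcal N(\mathbf 0,\sigma_p^2\mathbf I)$ and $\mathcal N(\mathbf v,\sigma_p^2\mathbf I)$ turns ``one in four'' into the constant $0.24$. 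Your route—declare the learned leakage negligible and rely on symmetry of the initialization inner products—may be salvageable once you have the tight bound $\gamma=\tilde O(\sigma_0\|\bmu\|_2)$, but note that the ``noise part'' of $yf$ is a difference of squared–ReLU's and is \emph{not} automatically sign–symmetric: its mean is $\tfrac{\sigma_p^2}{2m}\sum_r(\|\mathbf w_{y,r}^{(t)}\|_2^2-\|\mathbf w_{-y,r}^{(t)}\|_2^2)$, which depends on the learned $\overline{\rho}$'s you just discarded. You would need to control this bias quantitatively before any anti–concentration step, which is precisely the complication the paper's shift/TV argument sidesteps.
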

Theorem \ref{thm:GD} indicates that even though standard GD can minimize the training error to an arbitrarily small value, the generalization performance remains poor. This is mainly because the neural network overfits to the noise components in the input data instead of learning the useful features. Next, we present the positive result for label noise GD.
\begin{theorem} [Label Noise GD generalizes under low SNR] \label{thm:lnGD}
    Under Assumption \ref{ass:main}, there exists $t = \Theta(\frac{nm \log(1/(\sigma_0 \sigma_p \sqrt{d}))}{\eta  \sigma^2_p d} + \frac{m\log(6/(\sigma_0\| \boldsymbol{\mu}\|_2)) }{\eta \| \boldsymbol{\mu} \|_2^{2}  } )$ and constants $C >0$, such that with probability at least $1-d^{-1/4}$, it holds that
  \begin{itemize}[topsep=0.5mm]
      \item The training error is at constant order, i.e., $L_S(\mathbf{W}^{(t)}) = {\Theta(1)} $.
      \item The test error is small, i.e., $L_\mathcal{D}(\mathbf{W}^{(t)}) \le 2 \exp \left( -  \frac{Cd}{  n^2 } \right)$.
  \end{itemize} 
\end{theorem}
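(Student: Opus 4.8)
The plan is to reduce the theorem to a handful of statements about the scalar coefficients in the signal-noise decomposition of Section~\ref{sec:sn-decomp}. Writing $\mathbf{w}_{j,r}^{(t)} = \mathbf{w}_{j,r}^{(0)} + \gamma_{j,r}^{(t)}\|\boldsymbol{\mu}\|_2^{-2}\boldsymbol{\mu} + \sum_{i=1}^{n}\rho_{j,r,i}^{(t)}\|\boldsymbol{\xi}_i\|_2^{-2}\boldsymbol{\xi}_i$, the predictor at any point is determined by the signal coefficients $\gamma_{j,r}^{(t)}$ and the noise coefficients $\rho_{j,r,i}^{(t)}$, whose dynamics are given by (\ref{eq:gamma_update})--(\ref{eq:rho2_update}). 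The first bullet then follows almost immediately from two-sided coefficient bounds: Proposition~\ref{prop:upper_entire} caps all coefficients at $O(1)$, so the margin $y_i f(\mathbf{W}^{(t)},\mathbf{x}_i)$ on every training point is $O(1)$, which forces $\ell(y_i f)=\Omega(1)$ and hence $L_S=\Omega(1)$; conversely the same $O(1)$ cap keeps $f$ bounded, giving $\ell(y_i f)=O(1)$ and $L_S=O(1)$. Thus $L_S=\Theta(1)$ reduces to the coefficient bounds, and the real work is to (i) confine noise memorization to constant order, (ii) prove rapid signal growth, and (iii) convert the resulting signal-dominated predictor into a test-error bound.

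The crux, and the step I expect to be hardest, is (i): showing that label flipping pins $|\rho_{j,r,i}^{(t)}|$ at $\Theta(1)$. In standard GD one has $-\elldi>0$ and $\sigma'\ge 0$, so the increment of $\rho_{j,r,i}^{(t)}$ in (\ref{eq:rho2_update}) is always nonnegative and noise memorization grows unchecked; with label noise, the factor $\epsilon_i^{(t)}$ flips the sign of this increment with probability $p$, turning $\{\rho_{j,r,i}^{(t)}\}_t$ into a stochastic process with a mean-reverting drift. I would analyze this process by isolating its conditional expectation given the history: once $\langle\mathbf{w}_{j,r}^{(t)},\boldsymbol{\xi}_i\rangle$ is large the expected increment carries a factor $-(1-2p)$ times a positive, increasing quantity, so the drift opposes further growth, while the fluctuation is controlled by an Azuma/Bernstein bound over the independent draws $\{\epsilon_i^{(t)}\}_{i,t}$ (the lower bound $p>C\log d/\sqrt{mn}$ guarantees enough flips for concentration). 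Combining the drift and fluctuation bounds via a stopping-time / supermartingale argument should confine $|\rho_{j,r,i}^{(t)}|$ to $O(1)$ with probability at least $1-d^{-1/4}$. This mean-reversion is precisely what standard GD lacks and is the source of the separation from Theorem~\ref{thm:GD}.

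For (ii) I would read off rapid signal growth from (\ref{eq:gamma_update}). The signal increment aggregates coherently across samples -- every clean example pushes $\gamma_{j,r}^{(t)}$ in the same direction and only a fraction $p<1/C$ is flipped -- so the net drift is bounded below by a positive multiple of $\|\boldsymbol{\mu}\|_2^2\,\sigma'(\langle\mathbf{w}_{j,r}^{(t)},\boldsymbol{\mu}\rangle)$. With the $2$-homogeneous squared ReLU this yields a self-reinforcing recursion whose escape time from the $\Theta(\sigma_0\|\boldsymbol{\mu}\|_2)$-scale initialization up to the $\Theta(1)$ activation level is $\Theta\!\big(m\log(1/(\sigma_0\|\boldsymbol{\mu}\|_2))/(\eta\|\boldsymbol{\mu}\|_2^{2})\big)$, matching the second term of the stated $t$ (the first term is the earlier phase during which $\rho$ reaches its $\Theta(1)$ plateau). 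The SNR assumption $\mathrm{SNR}=\tilde O(1/\sqrt n)$ together with the $\Theta(1)$ cap on $\rho$ guarantees that the signal part of the predictor overtakes the noise part before the loss can stall.

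Finally, for (iii) I would evaluate the trained predictor on a fresh sample $(\mathbf{x},y)$ with test noise $\boldsymbol{\xi}$ independent of the training set. Since $\boldsymbol{\mu}\perp\boldsymbol{\xi}_i$, the signal patch contributes a margin of order $\sigma(\langle\mathbf{w}_{j,r}^{(t)},\boldsymbol{\mu}\rangle)=\tilde\Omega(1)$ to $y f(\mathbf{W}^{(t)},\mathbf{x})$, whereas the noise patch interacts with the memorized directions only through the inner products $\langle\boldsymbol{\xi}_i,\boldsymbol{\xi}\rangle$. Conditioning on the weights, this noise term is a mean-zero sub-Gaussian random variable whose scale I would bound using $\rho_{j,r,i}^{(t)}=O(1)$, $\|\boldsymbol{\xi}_i\|_2^2=\Theta(\sigma_p^2 d)$, and the concentration $|\langle\boldsymbol{\xi}_i,\boldsymbol{\xi}_j\rangle|=\tilde O(\sigma_p^2\sqrt d)$; the resulting standard deviation is $\tilde O(n/\sqrt d)$, negligible against the constant-order signal margin once $d=\tilde\Omega(n^2)$. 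A sub-Gaussian tail bound on the event that the noise term overwhelms the signal margin then yields $L_\mathcal{D}(\mathbf{W}^{(t)})\le 2\exp(-Cd/n^2)$, completing the second bullet.
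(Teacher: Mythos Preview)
Your outline matches the paper's at the macroscopic level---same signal--noise decomposition, same two--phase picture, a supermartingale/Azuma argument to cap the noise coefficients, exponential growth for $\gamma$, and a sub-Gaussian tail for the test error. The paper's supermartingale is actually built on the aggregate $\iota_i^{(t)}=\frac{1}{m}\sum_r(\overline{\rho}^{(t)}_{y_i,r,i})^2$ (via $M_i^{(t)}=(\iota_i^{(t)}-\iota_i^*)^2$) rather than on individual $\rho$'s, and Proposition~\ref{prop:upper_entire} only gives a $4\log T^*$ cap, not $O(1)$, so the $\Theta(1)$ training loss really comes from the second-stage analysis, not from the global proposition; but these are refinements, not conceptual gaps.

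The genuine gap is in your mechanism for the mean reversion of $\overline{\rho}$. You attribute it to the sign flip of $\epsilon_i^{(t)}$ and write the expected increment as ``$-(1-2p)$ times a positive quantity.'' But if only the sign flipped, the expected increment would be $(1-p)a-p\,a=(1-2p)a>0$ since $p<1/2$: the drift would still push $\overline{\rho}$ upward. What actually produces a negative drift is the \emph{asymmetry of the logistic derivative under flipping}: once the margin $y_i f_i^{(t)}$ is large, an unflipped step has $|\elldi|=\big(1+\exp(y_if_i^{(t)})\big)^{-1}\approx e^{-f_i^{(t)}}$ (a tiny positive push), whereas a flipped step has $|\elldi|=\big(1+\exp(-y_if_i^{(t)})\big)^{-1}\approx 1$ (a large negative push). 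The conditional expected increment is therefore roughly $(1-p)\,e^{-f_i^{(t)}}-p$, which turns negative exactly when $f_i^{(t)}$ exceeds a threshold set by $p$; this is why the paper's supermartingale computation in Lemma~\ref{lem:second_rho} tracks the factors $1/(1+\exp(\pm(\iota_i^{(t)})^2))$ explicitly. Without this loss-derivative asymmetry your drift calculation gives the wrong sign, and the supermartingale construction fails. You should revise step~(i) to make the dependence of $\elldi$ on $\epsilon_i^{(t)}$ the centerpiece of the argument.
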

Theorem \ref{thm:lnGD} shows that label noise GD achieves vanishing generalization error when the input dimensionality is large (i.e., $d =\Omega(n^2)$) despite the low SNR. 
\begin{remark}
Theorems \ref{thm:lnGD} and \ref{thm:GD} present contrasting outcomes for standard GD and label noise GD in the low SNR regime. In particular, 
\begin{itemize}[leftmargin=*,topsep=0.5mm]
    \item Standard GD minimizes the training error effectively but does so by primarily overfitting to noise in the training data. This significant noise memorization leads to harmful overfitting.
    \item In contrast, label noise GD introduces a regularization effect through label noise, which prevents the network from fully memorizing the noise components. This allows the network to focus on learning the true signal, resulting in a phase of accelerated signal learning. Consequently, the model generalizes even though the training loss does not vanish (due to noise injection). 
\end{itemize}
\end{remark}

\textbf{Comparison with sharpness-aware minimization \citep{chen2023does}.} We briefly discuss the differences between our findings and those in \cite{chen2023does} for the sharpness-aware minimization (SAM) method, where the authors established conditions on the SNR under which SAM can generalize better than stochastic gradient descent (SGD). However, their analysis requires the additional condition that the signal norm satisfies $ \| \boldsymbol{\mu} \|_2 \ge \tilde{\Omega}(1)$, indicating the necessity of a sufficiently strong signal. In contrast, we show that label noise GD enjoys good generalization without this strong signal condition. This highlights the robustness of label noise GD in low SNR regimes (even when the signal strength is considerably weaker compared to the noise).

\textbf{Comparison with stopping times across theorems.} 
We compare the stopping times in Theorems \ref{thm:GD} and \ref{thm:lnGD}. The stopping times for standard GD and label noise GD are not directly comparable, as they correspond to different evaluation criteria. Specifically, the stopping time for standard GD is the number of iterations required for the training loss to converge below a threshold $\epsilon$, whereas the stopping time for label noise GD is defined as the number of iterations needed to achieve sufficiently low $0$-$1$ test loss. To enable a meaningful comparison, we derive the ratio between the two stopping times under Assumption~\ref{ass:main}. By setting 
$ m^2 = \log\!\left(\tfrac{6}{\sigma_0 \| \boldsymbol{\mu}\|_2}\right)/\epsilon$, 
we obtain$
\frac{t_{\rm Standard~GD}}{t_{\rm label ~noise~GD}} = \Theta\!\left(\frac{n \| \boldsymbol{\mu}\|_2^2}{\sigma_p^2 d}\right) = \Theta(n \mathrm{SNR}^2).$
According to Assumption~\ref{ass:main}, we assume $n \mathrm{SNR}^2 \ll 1$, which implies that label noise GD requires more iterations to achieve good test performance compared to the time required for the training loss of standard GD to converge. 

\section{Proof Sketch}

In this section, we give an overview of of our analysis of the optimization dynamics of standard GD and label noise GD . Our key technical contributions are summarized as follows: (i) \textbf{Boundary characterization in low SNR regimes.} Unlike previous studies \cite{cao2022benign,kou2023benign, chen2023does} that focus on the higher polynomial or standard ReLU activation, we analyze the 2-homogeneous squared ReLU activation, leading to a different boundary characterization of the low SNR regime for standard GD -- see Section~\ref{sec:proof-sketch-GD}. (ii) \textbf{Upper bound via supermartingale.} We apply supermartingale arguments with Azuma's inequality to bound noise memorization in label noise GD. This yields high-probability guarantees on training dynamics, previously unestablished in this context.

\subsection{Signal-noise decomposition}\label{sec:sn-decomp}

To analyze the training dynamics, we adopt a parameter decomposition technique from  \citep{cao2022benign,kou2023benign}: there exist $\{\gamma_{j,r}^{(t)}\}$ and $\{\rho^{(t)}_{j,r,i}\}$ such that
\begin{align} 
    \mathbf{w}_{j,r}^{(t)} = \mathbf{w}_{j,r}^{(0)} + j \gamma_{j,r}^{(t)} \| \boldsymbol{\mu} \|^{-2}_2 \boldsymbol{\mu} + \sum_{i=1}^n \rho^{(t)}_{j,r,i} \| \boldsymbol{\xi}_i \|_2^{-2} \boldsymbol{\xi}_i.  \label{eq:w_decomposition} 
\end{align}
This decomposition originates from the observation that the gradient descent update always evolves in the direction of $\boldsymbol{\mu}$ and $\mathbf{x}_i$ for $i \in [n]$. In particular, $\gamma^{(t)}_{j,r} \approx \langle \mathbf{w}^{(t)}_{j,r}, \boldsymbol{\mu} \rangle$ serves as the \textit{signal learning} coefficient, whereas $\rho^{(t)}_{j,r,i} \approx \langle \mathbf{w}^{(t)}_{j,r}, \boldsymbol{\xi}_i \rangle$ characterizes the \textit{noise memorization} during training. 
Next we let $\overline{\rho}^{(t)}_{j,r,i} = \rho^{(t)}_{j,r,i} \sone(y_i = j)$ and $\underline{\rho}^{(t)}_{j,r,i} = \rho^{(t)}_{j,r,i} \sone(y_i =-j)$. Combined with the gradient descent update given by Equation (\ref{gd_update_w}), we obtain the iteration rules for these coefficients:
\begin{align}
    \gamma^{(t+1)}_{j,r} &= \gamma_{j,r}^{(t)} - \frac{\eta}{nm} \sum_{i=1}^n \elldi \sigma'(\langle \mathbf{w}_{j,r}^{(t)}, y_i \boldsymbol{\mu} \rangle)  \| \boldsymbol{\mu}\|^{2}_2 \epsilon_i^{(t)}, \label{eq:gamma_update} \\
    \overline{\rho}^{(t+1)}_{j,r,i} &= \overline{\rho}^{(t)}_{j,r,i} - \frac{\eta}{nm} \elldi \sigma'(\langle \mathbf{w}_{j,r}^{(t)}, \boldsymbol{\xi}_i \rangle) \| \boldsymbol{\xi}_i \|_2^2 \epsilon_i^{(t)} \sone(y_i = j), \label{eq:rho1_update}  \\
    \underline{\rho}^{(t+1)}_{j,r,i} &= \underline{\rho}^{(t)}_{j,r,i} + \frac{\eta}{nm} \elldi \sigma'(\langle \mathbf{w}_{j,r}^{(t)}, \boldsymbol{\xi}_i \rangle) \| \boldsymbol{\xi}_i \|_2^2 \epsilon_i^{(t)} \sone(y_i = -j). \label{eq:rho2_update}
\end{align}
where the initial values of the coefficients are given by $\gamma^{(0)}_{j,r} = 0$ and $\rho^{(0)}_{j,r,i} = 0$ for all $i \in [n]$, $j \in \{-1,1 \}$ and $r \in [m]$.

To analyze the optimization trajectory, we track the dynamics of  signal learning coefficients  ($\gamma^{(t)}_{j,r}$) and noise memorization coefficients ($\rho^{(t)}_{j,r,i}$) using the iteration rules in Equations (\ref{eq:gamma_update}-\ref{eq:rho2_update}). To facilitate a detailed analysis, we first provide upper bounds on the absolute value of both the signal learning and noise memorization coefficients throughout the entire training process. 
\begin{proposition} \label{prop:upper_entire}
 Given Assumption \ref{ass:main} and $\epsilon > 0$. Let $\beta = 2 \max_{j,r,i} \{ |\langle \mathbf{w}_{j,r}^{(0)}, \boldsymbol{\mu} \rangle|, |\langle \mathbf{w}^{(0)}_{j,r}, \boldsymbol{\xi}_i \rangle| \}$ and $\alpha = 4 \log(T^*)$. For $0 \leq t \leq T^*$, where $T^\ast = \eta^{-1} \mathrm{poly}( n,m,d ,\| \boldsymbol{\mu} \|^{-1}_2, (\sigma^2_p d)^{-1}, \sigma^{-1}_0, \epsilon^{-1}  )$, for all $i \in [n]$, $r \in [m]$ and $j \in \{ -1, 1\}$, it holds that
\begin{align}
    0  & \leq \gamma_{j,r}^{(t)}\leq \alpha, \quad  0 \leq \overline{\rho}^{(t)}_{j,r,i}  \leq \alpha, \label{eq:gamma_upper} \\
   0 & \geq \underline{\rho}^{(t)}_{j,r,i} \geq - \beta - 16 \sqrt{\frac{\log(4n^2/\delta)}{d}} n \alpha \geq - \alpha. \label{eq:rho_upper} 
\end{align}   
\end{proposition}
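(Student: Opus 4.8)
The plan is to establish all three bounds simultaneously by induction on $t$, from $t=0$ up to $T^*$, taking the inequalities (\ref{eq:gamma_upper})--(\ref{eq:rho_upper}) at all steps $s\le t$ as the joint induction hypothesis. The base case $t=0$ is immediate, since $\gamma^{(0)}_{j,r}=\rho^{(0)}_{j,r,i}=0$ and $\alpha,\beta\ge 0$. I would first dispose of the sign statements, which deliver the lower bounds on $\gamma_{j,r}$ and $\overline{\rho}_{j,r,i}$ and the nonpositivity of $\underline{\rho}_{j,r,i}$. Since $\ell'(z)=-1/(1+e^{z})<0$ and $\sigma'(z)=2\max\{0,z\}\ge 0$, inspecting the increments in (\ref{eq:gamma_update})--(\ref{eq:rho2_update}) for the standard GD update (where $\epsilon_i^{(t)}\equiv 1$) shows that $\gamma_{j,r}$ and $\overline{\rho}_{j,r,i}$ are nondecreasing while $\underline{\rho}_{j,r,i}$ is nonincreasing; together with the zero initialization this gives $\gamma^{(t)}_{j,r},\overline{\rho}^{(t)}_{j,r,i}\ge 0$ and $\underline{\rho}^{(t)}_{j,r,i}\le 0$. (For label-noise GD the per-step signs fluctuate, so this exact monotonicity is replaced by the supermartingale/Azuma control flagged in the proof sketch, while the two-sided envelope $[-\alpha,\alpha]$ remains the target.)

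The core of the argument is the upper bound $\gamma_{j,r}^{(t)},\overline{\rho}^{(t)}_{j,r,i}\le\alpha$ with $\alpha=4\log(T^*)$, for which I would use a last-crossing-time argument. Fix $(j,r,i)$ and let $t_0\le t$ be the most recent step at which $\overline{\rho}^{(t_0)}_{j,r,i}\le\alpha/2$, so that $\overline{\rho}^{(s)}_{j,r,i}>\alpha/2$ for every $s\in(t_0,t]$. High-probability near-orthogonality of the noise patches (a consequence of Assumption \ref{ass:main}(i), $d=\tilde\Omega(n^2)$) converts the decomposition (\ref{eq:w_decomposition}) into $\langle\mathbf{w}^{(s)}_{j,r},\boldsymbol{\xi}_i\rangle\ge\overline{\rho}^{(s)}_{j,r,i}-o(1)$, whence the correct-class margin $y_i f(\mathbf{W}^{(s)},\mathbf{x}_i)$ is lower-bounded by an increasing function of $\overline{\rho}^{(s)}_{j,r,i}$ --- provided the competing contribution $F_{-y_i}$ is controlled, which is precisely where the induction hypothesis (the upper bounds on all coefficients at step $s$) enters. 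This forces $|\tilde{\ell}^{'(s)}_{i}|$ to be small throughout $(t_0,t]$, so summing the per-step increments $\tfrac{\eta}{nm}|\tilde{\ell}^{'(s)}_{i}|\sigma'(\langle\mathbf{w}^{(s)}_{j,r},\boldsymbol{\xi}_i\rangle)\|\boldsymbol{\xi}_i\|_2^2$ over the at most $T^*$ steps and invoking $\alpha=4\log(T^*)$ together with $\eta\le\tilde O(\sigma_p^{-2}d^{-1})$ bounds the total additional growth by $\alpha/2$. Hence $\overline{\rho}^{(t)}_{j,r,i}\le\alpha/2+\alpha/2=\alpha$, and the bound on $\gamma_{j,r}^{(t)}$ follows from the identical template with $\boldsymbol{\mu},\|\boldsymbol{\mu}\|_2^2$ in place of $\boldsymbol{\xi}_i,\|\boldsymbol{\xi}_i\|_2^2$.

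For the lower bound on $\underline{\rho}^{(t)}_{j,r,i}$ (the $y_i=-j$ coefficients) I would exploit that its increment in (\ref{eq:rho2_update}) is proportional to $\sigma'(\langle\mathbf{w}^{(s)}_{j,r},\boldsymbol{\xi}_i\rangle)$, which vanishes whenever $\langle\mathbf{w}^{(s)}_{j,r},\boldsymbol{\xi}_i\rangle\le 0$. Thus $\underline{\rho}_{j,r,i}$ can decrease only while this inner product is positive, and at any such step the decomposition gives $\underline{\rho}^{(s)}_{j,r,i}\ge-|\langle\mathbf{w}^{(0)}_{j,r},\boldsymbol{\xi}_i\rangle|-\sum_{i'\neq i}|\rho^{(s)}_{j,r,i'}|\,|\langle\boldsymbol{\xi}_{i'},\boldsymbol{\xi}_i\rangle|\,\|\boldsymbol{\xi}_{i'}\|_2^{-2}$. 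The first term is at most $\beta$ by definition of $\beta$; bounding each off-diagonal ratio $|\langle\boldsymbol{\xi}_{i'},\boldsymbol{\xi}_i\rangle|\,\|\boldsymbol{\xi}_{i'}\|_2^{-2}$ by $O(\sqrt{\log(4n^2/\delta)/d})$ via Gaussian concentration (valid with probability at least $1-\delta$) and using the induction hypothesis $|\rho^{(s)}_{j,r,i'}|\le\alpha$ yields the stated floor $-\beta-16\sqrt{\log(4n^2/\delta)/d}\,n\alpha$. The final inequality $\ge-\alpha$ then follows because $d=\tilde\Omega(n^2)$ makes the correction term $o(\alpha)$ and the upper bound on $\sigma_0$ makes $\beta=o(\alpha)$.

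The main obstacle is the interlocking nature of the second step: to cap the growth of the noise and signal coefficients I must lower-bound the margin $y_i f$, yet $y_i f$ contains the competing term $F_{-y_i}$ assembled from other neurons' signal and noise coefficients, so the estimate closes only because the very upper bounds being proven at step $t$ are simultaneously available as induction hypotheses for all coefficients entering $f$. Making this quantitatively tight requires carefully separating the diagonal contribution $\rho_{j,r,i}$ from the accumulated off-diagonal noise correlations and the initialization inner products, and showing each is $o(1)$ relative to $\alpha$ \emph{uniformly} over the polynomially many steps $t\le T^*$; it is this uniform control, rather than any single estimate, that is the delicate part.
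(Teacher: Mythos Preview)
Your proposal is correct and follows essentially the same route as the paper's proof: induction on $t$, a last-crossing-time argument for the upper bounds on $\gamma_{j,r}^{(t)}$ and $\overline{\rho}^{(t)}_{j,r,i}$ (exploiting that once the coefficient exceeds $\alpha/2$ the loss derivative becomes exponentially small), and the $\sigma'(\cdot)=0$ cutoff for the lower bound on $\underline{\rho}^{(t)}_{j,r,i}$, all closed via the near-orthogonality of the $\boldsymbol{\xi}_i$'s and the joint induction hypothesis. Two minor remarks: (i) for $\underline{\rho}$ the paper makes the one-step overshoot explicit by splitting on whether $\underline{\rho}^{(\hat T)}_{j,r,i}$ is already below half the floor, whereas your phrasing leaves the single-step decrement implicit; (ii) the paper handles label-noise GD here by a worst-case choice of $\epsilon_i^{(t)}$ rather than any martingale argument---the supermartingale/Azuma machinery you allude to enters only later, in the second-stage analysis of Lemma~\ref{lem:lngd_stage2}.
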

The proof is in Appendix \ref{sec:prop}. Proposition \ref{prop:upper_entire} shows that throughout training, the absolute values of signal learning and noise memorization coefficients have a logarithmic upper bound. This result is key for a stage-wise characterization of training dynamics. Notably, this bound holds for both standard GD and label noise GD.

\subsection{Proof Sketch for Theorem \ref{thm:GD}}
\label{sec:proof-sketch-GD}

We first establish the negative result for standard GD based on a two-stage analysis. As previously mentioned, we consider the 2-homogeneous $\sigma(z) = \mathrm{ReLU}^2(z)$ which differs from \cite{cao2022benign,kou2023benign,chen2023does}. This leads to a key difference in the boundary characterization of the low SNR regime.

\paragraph{First stage.}
Notice that starting from small initialization, the loss derivative remains close to a constant. Based on this observation, we establish the difference in magnitude between the coefficients of signal learning and noise memorization. 

According to the update rule for the signal learning coefficient given by Equation (\ref{eq:gamma_update}) and by setting $\epsilon_i^{(t)} =1$ for all $t$ and $i \in [n]$ (i.e., no label flipping), the upper bound of signal learning can be achieved as $  \gamma_{j,r}^{(t)} + |\langle \mathbf{w}_{j,r}^{(0)}, \boldsymbol{\mu} \rangle| \leq \exp\big( \frac{2\eta \| \boldsymbol{\mu}\|^2_2}{m}  t \big) |\langle \mathbf{w}_{j,r}^{(0)}, \boldsymbol{\mu} \rangle|$. Meanwhile, the bounds for the noise memorization coefficients can be derived from the update rules (\ref{eq:rho1_update}) and (\ref{eq:rho2_update}). The results are given as $\max_{j,r} |\underline{\rho}_{j,r,i}^{(t)}|    \leq \frac{3\eta \sigma^2_p t d }{nm} \sqrt{\log(8mn/\delta)} \sigma_0 \sigma_p \sqrt{d}$, and $
    \max_{j,r}   \overline{\rho}_{j,r,i}^{(t)}   \ge \exp \big(  \frac{ \eta  C_1 \sigma^2_pd}{2nm} t \big) \sigma_0 \sigma_p \sqrt{d}/4 - 0.6\overline{\beta}$,
for all $i \in [n]$, where we define $\bar{\beta} = \min_{i \in [n]} \max_{r \in [m]} \langle \mathbf{w}^{(0)}_{y_i, r}, \boldsymbol{\xi}_i \rangle$, and use $|\tilde{\ell}^{'(t)}_i | \ge C_1$. In the low SNR setting, where $\sigma_p\sqrt{d}$ is much larger than $\|\boldsymbol{\mu}\|_2$, we observe that noise memorization dominates the feature learning process during the first stage, as shown in the following lemma.
\begin{lemma} \label{lem:gd_stage1}
   Under the same condition as Theorem \ref{thm:GD}, and let $T_1 = \Theta(\frac{nm \log(1/(\sigma_0 \sigma_p \sqrt{d}))}{\eta  \sigma^2_p d})$, the following results hold with high probability at least $1-d^{-1}$: (i) $\max_{j,r} \overline{\rho}^{(T_1)}_{j,r,i} \ge 1 $, for all $i \in [n]$; (ii) $\max_{j,r,i} |\underline{\rho}^{(t)}_{j,r}| \le  \tilde{O}(\sigma_0 \sigma_p \sqrt{d}) $, for all $t \in [T_1]$; (iii) $\max_{j,r} \gamma^{(t)}_{j,r} \le \tilde{O}(\sigma_0 \| \boldsymbol{\mu} \|_2)$, for all $t \in [T_1]$.

\end{lemma}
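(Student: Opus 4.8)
The plan is to run a \emph{simultaneous} induction over the stage-1 window $0 \le t \le T_1$ (with $\epsilon_i^{(t)} \equiv 1$, since there is no label flipping in standard GD), maintaining three coupled invariants: the a priori bounds of Proposition~\ref{prop:upper_entire}, a two-sided control on the loss derivative $C_1 \le |\elldi| \le 1$, and the scalar growth/decay estimates quoted just before the lemma. The three conclusions then reduce to evaluating those estimates at the specific stopping time $T_1 = \Theta\big(\tfrac{nm\log(1/(\sigma_0\sigma_p\sqrt{d}))}{\eta\sigma_p^2 d}\big)$, which is chosen precisely so that the exponential factor $\exp\big(\tfrac{\eta C_1\sigma_p^2 d}{2nm}t\big)$ inflates the initialization seed of order $\sigma_0\sigma_p\sqrt{d}$ up to constant order — this is exactly what will drive claim (i).

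First I would pin down the loss-derivative window. Since $\ell'(z) = -1/(1+e^{z})$, the bound $|\elldi| \le 1$ is automatic, while $|\elldi| \ge C_1$ for a constant $C_1$ follows as soon as $y_i f(\mathbf{W}^{(t)},\mathbf{x}_i) = \tilde{O}(1)$ for all $i$. By the decomposition~\eqref{eq:w_decomposition} together with $\boldsymbol{\mu}\perp\boldsymbol{\xi}_i$ and the near-orthogonality $|\langle\boldsymbol{\xi}_i,\boldsymbol{\xi}_{i'}\rangle| = \tilde{O}(\sigma_p^2\sqrt{d})$ for $i\neq i'$ (this is where $d=\tilde\Omega(n^2)$ enters, to make the $n$ cross terms negligible against $\|\boldsymbol{\xi}_i\|_2^2\approx\sigma_p^2 d$), one has $\langle\mathbf{w}^{(t)}_{j,r},\boldsymbol{\xi}_i\rangle \approx \langle\mathbf{w}^{(0)}_{j,r},\boldsymbol{\xi}_i\rangle + \rho^{(t)}_{j,r,i}$, so the invariants $\gamma^{(t)}_{j,r},\overline\rho^{(t)}_{j,r,i}\le\tilde{O}(1)$ and $|\underline\rho^{(t)}_{j,r,i}|\le\tilde{O}(1)$ keep the output at constant order throughout stage 1. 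With $|\elldi|\ge C_1$ in hand, the recursion~\eqref{eq:rho1_update} for a neuron with positive initial correlation becomes $\overline\rho^{(t+1)}_{j,r,i} \gtrsim \overline\rho^{(t)}_{j,r,i}\big(1 + \tfrac{\eta C_1\sigma_p^2 d}{2nm}\big)$ up to lower-order cross terms, yielding the stated exponential lower bound. Anti-concentration of the Gaussian initialization — guaranteed by the lower bound $\sigma_0 \ge \tilde\Omega(n\sigma_p^{-1}d^{-3/4})$ in Assumption~\ref{ass:main} — ensures $\max_r\langle\mathbf{w}^{(0)}_{y_i,r},\boldsymbol{\xi}_i\rangle = \tilde\Omega(\sigma_0\sigma_p\sqrt{d})>0$ for every $i$, so the seed that ignites the exponential is available; evaluating at $T_1$ gives $\max_{j,r}\overline\rho^{(T_1)}_{j,r,i}\ge 1$, i.e.\ claim (i).

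Claims (ii) and (iii) are then upper-bound computations at $t=T_1$. For the opposite-sign coefficients, the increment in~\eqref{eq:rho2_update} is sign-definite and of magnitude at most $\tfrac{\eta}{nm}|\elldi|\,\sigma'(\langle\mathbf{w}^{(t)}_{j,r},\boldsymbol{\xi}_i\rangle)\|\boldsymbol{\xi}_i\|_2^2$; because the $\sone(y_i=-j)$ branch does not feed its own growth (no self-reinforcing positive ReLU argument builds up), one only gets the \emph{linear}-in-$t$ bound $\max_{j,r}|\underline\rho^{(t)}_{j,r,i}| \le \tfrac{3\eta\sigma_p^2 t d}{nm}\sqrt{\log(8mn/\delta)}\,\sigma_0\sigma_p\sqrt{d}$, and substituting $T_1$ collapses the prefactor to $\tilde{O}(1)$, leaving $\tilde{O}(\sigma_0\sigma_p\sqrt{d})$. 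For the signal, combine the exponential upper bound $\gamma^{(T_1)}_{j,r}\le\exp\big(\tfrac{2\eta\|\boldsymbol{\mu}\|_2^2}{m}T_1\big)|\langle\mathbf{w}^{(0)}_{j,r},\boldsymbol{\mu}\rangle|$ with the low-SNR condition: the exponent equals $\Theta\big(n\,\mathrm{SNR}^2\log(1/(\sigma_0\sigma_p\sqrt{d}))\big)$, and Assumption~\ref{ass:main} is calibrated so that this stays at constant order, whence the exponential factor is $O(1)$ and $\gamma^{(T_1)}_{j,r}\le\tilde{O}(\sigma_0\|\boldsymbol{\mu}\|_2)$ via $|\langle\mathbf{w}^{(0)}_{j,r},\boldsymbol{\mu}\rangle|=\tilde{O}(\sigma_0\|\boldsymbol{\mu}\|_2)$. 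I expect the main obstacle to be the coupled induction of the second paragraph: keeping the $\overline\rho$ recursion genuinely exponential requires simultaneously (a) decoupling the $n$ per-sample noise dynamics from one another and from the signal through the high-dimensional near-orthogonality estimates, and (b) threading the lower bound $|\elldi|\ge C_1$, which itself depends on the very output bound the induction must maintain. Because of this circularity, the activation positivity, the cross-term control, and the constant loss derivative have to be propagated together within a single inductive hypothesis rather than established in sequence.
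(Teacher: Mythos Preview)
Your proposal is correct and follows essentially the same route as the paper: the three claims are obtained exactly by plugging $T_1=\Theta(nm\log(1/(\sigma_0\sigma_p\sqrt d))/(\eta\sigma_p^2 d))$ into the exponential lower bound for $\max_{j,r}\overline\rho^{(t)}_{j,r,i}$, the linear-in-$t$ upper bound for $|\underline\rho^{(t)}_{j,r,i}|$, and the exponential upper bound for $\gamma^{(t)}_{j,r}$ (with the low-SNR condition $n\,\mathrm{SNR}^2\le 1/\log(1/(\sigma_0\sigma_p\sqrt d))$ killing the exponent), each seeded by the Gaussian initialization estimates. The one organizational difference is that the paper separates the a~priori bounds (Proposition~\ref{prop:upper_entire}) from the three growth/decay lemmas and simply \emph{asserts} $|\elldi|\ge C_1$ throughout stage~1 from small initialization, whereas you fold this into a single simultaneous induction; your framing is arguably more honest about the circularity you flag, but the underlying estimates and the use of $d=\tilde\Omega(n^2)$ near-orthogonality and the $\sigma_0$ lower bound (cf.\ Lemma~\ref{lem_betabar_bound}) are identical.
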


Lemma \ref{lem:gd_stage1} indicates that when the SNR is sufficiently low, i.e., $\mathrm{SNR} =  \tilde{O}(1/\sqrt{n})$, noise memorization dominates the training dynamics during the early phase of standard GD optimization. We highlight that this ``low-SNR'' condition differs from that of \cite{cao2022benign,kou2023benign} due to the choice of activation function. In particular, \cite{cao2022benign} assumed $\sigma(z) = (\max \{0,z\})^ q$ with $q>2$ and established a low-SNR boundary $n^{-1} \mathrm{SNR}^{-q} = \tilde{\Omega}(1)$, whereas \cite{kou2023benign} considered the ReLU activation and derived the condition $ n \frac{\|\boldsymbol{\mu} \|^4_2}{\sigma^4_p d} \le O(1)$. 

\paragraph{Second stage.}

After the first stage, the loss derivative is no longer bounded by a constant value. To prove convergence of the training loss $L(t) \le \epsilon$, we build upon the analysis from the first stage and define $\mathbf{w}^\ast_{j,r} = \mathbf{w}^{(0)}_{j,r} + 2m   \log(2/\epsilon) \sum_{i=1}^n \| \boldsymbol{\xi}_i \|^{-2}_2  \boldsymbol{\xi}_i$. We show that, as gradient descent progresses, the distance between $\mathbf{W}^{(t)} $ and $\mathbf{W}^\ast$ decreases until $L(t) \le \epsilon$:
$    \| \mathbf{W}^{(t)} - \mathbf{W}^\ast  \|^2_F -  \|  \mathbf{W}^{(t+1)} - \mathbf{W}^\ast \|^2_F \le \eta L_S(t) - \eta \epsilon$.
Moreover, we show that the difference between signal learning and noise memorization still holds in the second stage, as summarized below. 

\begin{lemma} \label{lem:gd_stage2}
   Let $T_2 = \eta^{-1} \sigma^{-2}_p d^{-1} {nm \log(1/(\sigma_0 \sigma_p d))}  + \eta^{-1} \epsilon^{-1} m^3 n \sigma^{-2}_p d^{-1} $. Under the same assumptions as Theorem \ref{thm:GD}, for training step $ t \in [T_1, T_2]$, it holds that $\gamma^{(t)}_{j,r} \le \tilde{O}(\sigma_0 \| \boldsymbol{\mu} \|_2)$, $|\underline{\rho}^{(t)}_{j,r,i}| \le \tilde{O}(\sigma_0 \sigma_p \sqrt{d} )$, and $\overline{\rho}^{(t)}_{j,r,i} \ge 1 $. Besides, there exists a step $t \in [T_1, T_2]$, such that $L_S(t) < \epsilon$.
\end{lemma}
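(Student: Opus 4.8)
The plan is to propagate the end-of-stage-one estimates of Lemma~\ref{lem:gd_stage1} forward across $[T_1,T_2]$ by induction, and then to obtain loss convergence by a distance-contraction argument against the reference weight $\mathbf{W}^\ast$. Throughout this stage $\epsilon_i^{(t)}=1$ (no flipping), so every loss derivative satisfies $\elldi=\ell'(\cdot)<0$. The lower bound $\max_{j,r}\overline{\rho}^{(t)}_{j,r,i}\ge 1$ is then immediate: by (\ref{eq:rho1_update}) each $\overline{\rho}^{(t)}_{j,r,i}$ is non-decreasing in $t$ (its increment $-\frac{\eta}{nm}\elldi\,\sigma'(\cdot)\|\boldsymbol{\xi}_i\|_2^2\,\sone(y_i=j)\ge 0$), so once the maximum over $r$ exceeds $1$ at $T_1$ it remains $\ge 1$ for all later $t$. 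The real content is the two upper bounds $\gamma^{(t)}_{j,r}\le\tilde{O}(\sigma_0\|\boldsymbol{\mu}\|_2)$ and $|\underline{\rho}^{(t)}_{j,r,i}|\le\tilde{O}(\sigma_0\sigma_p\sqrt{d})$, together with the existence of a step where $L_S<\epsilon$.

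For the signal bound I would run a signal-versus-noise comparison. Summing (\ref{eq:gamma_update}) over the stage and using $\sigma'(\langle\mathbf{w}_{j,r}^{(t)},y_i\boldsymbol{\mu}\rangle)=2(\langle\mathbf{w}^{(0)}_{j,r},y_i\boldsymbol{\mu}\rangle+\gamma^{(t)}_{j,r})_+\le 2(\gamma^{(t)}_{j,r}+\beta)$ bounds the signal increment by $\frac{\eta\|\boldsymbol{\mu}\|_2^2}{nm}(\gamma^{(t)}_{j,r}+\beta)\sum_i|\elldi|$. The cumulative loss-derivative mass is in turn controlled through the noise channel: for each $i$ fix a neuron $r^\ast_i$ with $\overline{\rho}^{(T_1)}_{y_i,r^\ast_i,i}\ge 1$, which by monotonicity keeps $\langle\mathbf{w}_{y_i,r^\ast_i}^{(t)},\boldsymbol{\xi}_i\rangle=\Theta(1)$ (up to lower-order cross terms controlled by near-orthogonality, $d=\tilde{\Omega}(n^2)$), so $\sigma'(\cdot)=\Theta(1)$; then the global bound $\overline{\rho}\le\alpha$ of Proposition~\ref{prop:upper_entire} together with $\|\boldsymbol{\xi}_i\|_2^2=\Theta(\sigma_p^2 d)$ forces $\sum_t|\elldi|\lesssim\frac{nm\alpha}{\eta\sigma_p^2 d}$. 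Combining the two estimates yields total signal growth of order $\tilde{O}(\sigma_0\|\boldsymbol{\mu}\|_2)\cdot n\,\mathrm{SNR}^2\cdot\alpha$, which is $\tilde{O}(\sigma_0\|\boldsymbol{\mu}\|_2)$ under the low-SNR assumption $n\,\mathrm{SNR}^2=\tilde{O}(1)$; a Gr\"onwall-type step closes the induction by absorbing the self-dependence on $\gamma^{(t)}_{j,r}$. The bound on $|\underline{\rho}^{(t)}_{j,r,i}|$ is handled analogously, additionally using that for $j\ne y_i$ the inner product $\langle\mathbf{w}_{j,r}^{(t)},\boldsymbol{\xi}_i\rangle$ stays near its initialization-scale value, because $\underline{\rho}$ only decreases and the off-diagonal terms $\langle\boldsymbol{\xi}_i,\boldsymbol{\xi}_{i'}\rangle=\tilde{O}(\sigma_p^2\sqrt{d})$ are negligible, so $\sigma'(\cdot)$ activates rarely and each activation contributes only at initialization scale.

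For loss convergence I would use $\mathbf{w}^\ast_{j,r}=\mathbf{w}^{(0)}_{j,r}+2m\log(2/\epsilon)\sum_i\|\boldsymbol{\xi}_i\|_2^{-2}\boldsymbol{\xi}_i$, which by near-orthogonality satisfies $\langle\mathbf{w}^\ast_{j,r},\boldsymbol{\xi}_i\rangle\approx 2m\log(2/\epsilon)$ and hence gives every training point a margin large enough that $\ell(y_i f(\mathbf{W}^\ast,\mathbf{x}_i))\le\epsilon$. Expanding $\|\mathbf{W}^{(t+1)}-\mathbf{W}^\ast\|_F^2=\|\mathbf{W}^{(t)}-\mathbf{W}^\ast\|_F^2-2\eta\langle\nabla L_S(\mathbf{W}^{(t)}),\mathbf{W}^{(t)}-\mathbf{W}^\ast\rangle+\eta^2\|\nabla L_S(\mathbf{W}^{(t)})\|_F^2$, I would lower-bound the cross term by $L_S(\mathbf{W}^{(t)})-\epsilon$ using convexity of $\ell$ together with the $2$-homogeneity of $\sigma$ (so that $\langle\nabla_{\mathbf{W}}f,\mathbf{W}^{(t)}\rangle=2f$ and $\langle\nabla_{\mathbf{W}}f,\mathbf{W}^\ast\rangle$ reproduces the large $\mathbf{W}^\ast$-margin), and upper-bound the gradient-norm term via the standard estimate $\|\nabla L_S(\mathbf{W}^{(t)})\|_F^2=\tilde{O}(\sigma_p^2 d)\,L_S(\mathbf{W}^{(t)})$, which is negligible under $\eta\le\tilde{O}(\sigma_p^{-2}d^{-1})$. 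Telescoping over $[T_1,T_2]$ then gives $\frac{1}{T_2-T_1}\sum_{t=T_1}^{T_2-1}L_S(\mathbf{W}^{(t)})\le\epsilon+\frac{\|\mathbf{W}^{(T_1)}-\mathbf{W}^\ast\|_F^2}{\eta(T_2-T_1)}$; since $\|\mathbf{W}^{(T_1)}-\mathbf{W}^\ast\|_F^2=\tilde{O}(m^3 n\log^2(1/\epsilon)/(\sigma_p^2 d))$, the stated choice of $T_2$ makes the last term $\le\epsilon$, so some step attains $L_S<\epsilon$ (after rescaling $\epsilon$ by a constant). The main obstacle is the joint control of the two upper bounds once the loss derivatives are no longer uniformly $\Theta(1)$: because $\gamma$, $\overline{\rho}$ and $\underline{\rho}$ are all coupled through the shared, now data-dependent factors $\elldi$ and through the activation indicators $\sigma'(\langle\mathbf{w}^{(t)}_{j,r},\cdot\rangle)$, the induction cannot treat them in isolation. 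The crux is precisely the transfer inequality that converts the lower bound $\overline{\rho}\ge 1$ into an upper bound on $\sum_t\sum_i|\elldi|$ and then into an upper bound on signal growth; this is where the low-SNR factor $n\,\mathrm{SNR}^2=\tilde{O}(1)$ enters, and one must verify that it dominates every polylogarithmic factor accumulated from $\alpha$ and from the concentration events underlying the near-orthogonality of the noise vectors.
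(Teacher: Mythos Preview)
Your proposal is correct and tracks the paper's proof closely: the monotonicity of $\overline{\rho}$, the reference point $\mathbf{W}^\ast$, the expansion of $\|\mathbf{W}^{(t+1)}-\mathbf{W}^\ast\|_F^2$ using $2$-homogeneity and convexity of $\ell$, the gradient-norm bound $\|\nabla L_S\|_F^2=\tilde{O}(\sigma_p^2 d)L_S$, and the telescoping are all exactly what the paper does (these are packaged as Lemma~\ref{lem:loss_graident}). The one genuine difference is in how you bound the cumulative loss-derivative mass $\sum_t\sum_i|\elldi|$ that drives the signal growth. The paper first runs the telescoping argument to obtain $\sum_{t=T_1}^{T_2}L_S(\mathbf{W}^{(t)})\le\tilde{O}(\eta^{-1}m^3n\sigma_p^{-2}d^{-1})$ and then invokes the elementary inequality $|\ell'|\le\ell$ to convert this into a bound on $\sum_t\sum_i|\elldi|$; the induction on $\gamma^{(t)}_{j,r}$ thus rides on the convergence proof. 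You instead bound $\sum_t|\elldi|$ for each $i$ directly through the noise channel, reading it off from the global cap $\overline{\rho}\le\alpha$ of Proposition~\ref{prop:upper_entire} together with the lower bound $\sigma'(\langle\mathbf{w}^{(t)}_{y_i,r^\ast_i},\boldsymbol{\xi}_i\rangle)\ge\Theta(1)$. Your route decouples the coefficient bounds from the convergence proof and is slightly more self-contained, at the cost of an extra $\alpha=\tilde{O}(1)$ factor in the final estimate; the paper's route is a bit sharper but makes the coefficient bounds logically downstream of the telescoping. Either way the crux is the same transfer inequality you identify, and the low-SNR condition $n\,\mathrm{SNR}^2=\tilde{O}(1)$ is what closes the induction in both arguments.
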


Lemma \ref{lem:gd_stage2} shows that standard GD achieves low training error after polynomially many steps, and noise memorization dominates the entire training process, which results in harmful overfitting.  

\subsection{Proof Sketch for Theorem \ref{thm:lnGD}}

We also divide the training dynamics of label noise GD into two phases. In the {first phase}, both signal learning and noise memorization increase exponentially despite the presence of random label noise. In the {second phase}, label noise suppresses the growth of noise memorization, causing it to oscillate within a constant range; meanwhile, signal learning continues to grow exponentially until stabilizing at constant value, which leads to beneficial feature learning and low generalization error. 

\paragraph{First stage.}

Leveraging the fact that the derivative of the loss function remains within a constant range due to small initialization, we demonstrate that both signal learning and noise memorization exhibit exponential growth rates, even in the presence of label noise. According to the iterative update of the signal learning coefficient in Equation (\ref{eq:gamma_update}), the upper and lower bounds are given as
  $   \gamma_{j,r}^{(t)} + |\langle \mathbf{w}_{j,r}^{(0)}, \boldsymbol{\mu} \rangle|  \leq \exp\big( \frac{2\eta \| \boldsymbol{\mu}\|^2_2}{m}  t \big) |\langle \mathbf{w}_{j,r}^{(0)}, \boldsymbol{\mu} \rangle|$, and $
   \max_{r \in [m]} \{ \gamma_{j,r}^{(t)} + j \langle \mathbf{w}_{j,r}^{(0)}, \boldsymbol{\mu} \rangle \} \geq \exp(\frac{C_0 \eta \| \boldsymbol{\mu}\|_2^2}{8 m}) \big( \max_{r \in [m]} \langle \mathbf{w}_{j,r}^{(0)}, \boldsymbol{\mu} \rangle \big)$, respectively. Here $C_0$ is the lower bound for the absolute loss derivative. These bounds indicate that signal learning grows exponentially with the number of training iterations. On the other hand, from the update equation (\ref{eq:rho1_update}), we characterize the behavior of noise memorization. Despite the injected label noise, we can show a lower bound on the noise memorization rate: $
   \max_{j,r} \{ \overline{\rho}^{(t)}_{j,r,i} + 0.6|\langle \mathbf{w}^{(0)}_{j,r}, \boldsymbol{\xi}_i \rangle| \} \ge \exp(\frac{\eta C_0 \sigma^2_p d}{2nm} )|\langle \mathbf{w}^{(0)}_{j,r}, \boldsymbol{\xi}_i \rangle |$. 
The main results for the first stage are summarized in the following lemma. 

\begin{lemma}
\label{lem:lngd_stage1}
Under the same condition as Theorem \ref{thm:lnGD}, and let $T_1 = \Theta(\frac{nm \log((1/\sigma_0 \sigma_p d))}{\eta  \sigma^2_p d})$.
Then the following holds with probability at least $1-d^{-1}$: (i) $\max_{j,r} \overline{\rho}^{(T_1)}_{j,r,i} \ge 0.1 $, for all $i \in [n]$; (ii) $\max_{j,r,i} |\underline{\rho}^{(t)}_{j,r}| \le  \tilde{O}(\sigma_0 \sigma_p \sqrt{d}) $, for all $t \in [T_1]$. (iii) $\max_{j,r} \gamma^{(t)}_{j,r} \ge \tilde{O}(\sigma_0 \| \boldsymbol{\mu} \|_2)$, for all $t \in [T_1]$.
\end{lemma}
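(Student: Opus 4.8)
The plan is to work entirely in the regime where every coefficient is still $O(1)$, so that the logistic derivatives are frozen to a constant range and the coupled recursions (\ref{eq:gamma_update}--\ref{eq:rho2_update}) decouple into per-neuron/per-sample multiplicative dynamics whose only randomness is the fresh label flip $\epsilon_i^{(t)}$. \textbf{Step 1 (constant loss derivative).} I would first define $T_1$ as the first step at which $\max_{j,r,i}\overline{\rho}^{(t)}_{j,r,i}$ reaches the constant $0.1$. For $t\le T_1$, Proposition~\ref{prop:upper_entire} gives $\gamma^{(t)}_{j,r},\overline{\rho}^{(t)}_{j,r,i},|\underline{\rho}^{(t)}_{j,r,i}|=\tilde O(1)$, and in fact the signal and $\underline{\rho}$ terms are far smaller, so each (possibly flipped) margin satisfies $|\epsilon_i^{(t)}y_if(\mathbf{W}^{(t)},\mathbf{x}_i)|=O(1)$. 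Since $\ell'(z)=-(1+e^{z})^{-1}$, this pins $|\elldi|\in[C_0,1]$ for an absolute constant $C_0>0$, uniformly over whether sample $i$ is flipped. This is the load-bearing step: it severs the dependence of $\elldi$ on the joint flip configuration, leaving $\epsilon_i^{(t)}$ as the sole randomness inside each coefficient update.

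\textbf{Step 2 (signal lower bound, claim (iii)).} Using $\boldsymbol{\xi}\perp\boldsymbol{\mu}$ we have $\langle\mathbf{w}^{(t)}_{j,r},\boldsymbol{\mu}\rangle=\langle\mathbf{w}^{(0)}_{j,r},\boldsymbol{\mu}\rangle+j\gamma^{(t)}_{j,r}$, so $\sigma'(\cdot)=2(\cdot)_+$ converts (\ref{eq:gamma_update}) into a multiplicative recursion for the neuron $r^\star$ with the largest initial correlation. Only samples with $y_i=j$ contribute (the others hit the flat part of $\sigma'$), and $\sum_{i:y_i=j}\epsilon_i^{(t)}$ concentrates at $(n/2)(1-2p)$ by a within-step Hoeffding bound, giving a per-step drift of order $(1-2p)\eta\|\boldsymbol{\mu}\|_2^2/m$ times the current correlation; since $\max_r\langle\mathbf{w}^{(0)}_{j,r},\boldsymbol{\mu}\rangle=\tilde\Omega(\sigma_0\|\boldsymbol{\mu}\|_2)$ by anti-concentration, the signal stays at least of order $\sigma_0\|\boldsymbol{\mu}\|_2$ over $[0,T_1]$. \textbf{Step 3 (noise lower bound, claim (i)).} The same idea is applied per sample $i$: I fix $j=y_i$ and the neuron $r^\star$ with the largest positive $\langle\mathbf{w}^{(0)}_{y_i,r},\boldsymbol{\xi}_i\rangle$, substitute $\|\boldsymbol{\xi}_i\|_2^2=(1\pm o(1))\sigma_p^2 d$, and reduce (\ref{eq:rho1_update}) to tracking the effective pre-activation $a^{(t)}:=\langle\mathbf{w}^{(t)}_{y_i,r^\star},\boldsymbol{\xi}_i\rangle$ obeying $a^{(t+1)}=a^{(t)}(1+c\,\epsilon_i^{(t)})$ with $c=\Theta(\eta C_0\sigma_p^2 d/(nm))\ll1$. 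Passing to logarithms, $\log a^{(T_1)}-\log a^{(0)}\approx c\sum_{t<T_1}\epsilon_i^{(t)}-\tfrac{c^2}{2}T_1$; the quadratic term is $o(1)$ under $\eta\le\tilde O(\sigma_p^{-2}d^{-1})$, and $\sum_t\epsilon_i^{(t)}=T_1-2N_i$ with $N_i\sim\mathrm{Bin}(T_1,p)$, so the net rate is $\Theta((1-2p)cT_1)$. Starting from $a^{(0)}=\tilde\Omega(\sigma_0\sigma_p\sqrt d)$, the calibration $T_1=\Theta(nm\log(1/(\sigma_0\sigma_p d))/(\eta\sigma_p^2 d))$ is exactly what makes $\overline{\rho}^{(T_1)}_{y_i,r^\star,i}$ reach $0.1$.

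\textbf{Step 4 (opposite-sign upper bound, claim (ii)) and assembly.} The coefficients $\underline{\rho}^{(t)}_{j,r,i}$ (those with $y_i=-j$) are self-limiting: whenever the pre-activation $\langle\mathbf{w}^{(t)}_{-y_i,r},\boldsymbol{\xi}_i\rangle$ drops to $\le 0$ the factor $\sigma'(\cdot)=0$ freezes the update, so $|\underline{\rho}|$ cannot exceed $|\langle\mathbf{w}^{(0)}_{-y_i,r},\boldsymbol{\xi}_i\rangle|$ plus residual motion and cross-terms $\sum_{i'\ne i}\rho^{(t)}_{j,r,i'}\langle\boldsymbol{\xi}_{i'},\boldsymbol{\xi}_i\rangle/\|\boldsymbol{\xi}_{i'}\|_2^2$, which are negligible by near-orthogonality $|\langle\boldsymbol{\xi}_{i'},\boldsymbol{\xi}_i\rangle|\le\tilde O(\sigma_p^2\sqrt d)$ and the $\tilde O(1)$ bound of Proposition~\ref{prop:upper_entire}; this gives $|\underline{\rho}^{(t)}_{j,r,i}|\le\tilde O(\sigma_0\sigma_p\sqrt d)$, identically to the standard-GD case of Lemma~\ref{lem:gd_stage1}(ii). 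A union bound over the $n$ samples, $m$ neurons, and the initialization/noise-concentration events then collects the three claims at probability $1-d^{-1}$.

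\textbf{Main obstacle.} The hardest part is the noise lower bound of Step~3: its recursion is multiplicative and re-randomized at every step, so a run of flips could transiently suppress the tracked neuron, and the deterministic ``dominant-term'' argument that suffices for standard GD (where $\epsilon_i^{(t)}\equiv1$) breaks down. I would resolve this by combining (a) a Chernoff/Hoeffding bound showing $N_i$ concentrates around $pT_1$ for every sample simultaneously---feasible precisely because Assumption~\ref{ass:main}(v) enforces $p>C\log d/\sqrt{mn}\gg\sqrt{\log n/T_1}$---with (b) the observation that each multiplicative factor is $1\pm c$ with $c\ll1$, so no single flip undoes more than a vanishing fraction of the accumulated growth. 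Together these yield the $(1-2p)$-rate exponential growth uniformly over all $n$ samples with high probability; note this is a first-stage concentration argument, separate from (and simpler than) the stage-two supermartingale/Azuma control used to cap noise memorization from above.
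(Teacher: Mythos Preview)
Your proposal is essentially correct and tracks the paper closely on Steps~1, 2, and~4: the paper pins $|\elldi|\in[C_0,1]$ exactly as you do, derives the signal lower bound via within-step concentration of $|\mathcal{S}_\pm^{(t)}\cap\mathcal{S}_j|$ (your Step~2 is Lemma~\ref{lem:lowergamma_stage1} verbatim), and bounds $|\underline{\rho}|$ by the same telescoping-plus-small-preactivation argument (Lemma~\ref{lem:upperurho_stage1}, which is reused unchanged from the standard-GD case).

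The genuine difference is Step~3. The paper does \emph{not} pass to logarithms and concentrate only at the terminal time $T_1$; instead it introduces an auxiliary sequence $A^{(t)}_{y_i,r,i}=\overline{\rho}^{(t)}_{j,r,i}+\langle\mathbf{w}^{(0)}_{j,r},\boldsymbol{\xi}_i\rangle-0.4\bar\beta/d^{1/4}$ and proves by induction that $A^{(t)}\ge\zeta:=0.8\bar\beta/d^{1/4}$ for \emph{all} $t$. The reason is that the true recursion is not purely multiplicative: when $\epsilon_i^{(t)}=-1$ the upper bound on the pre-activation picks up the cross-term correction of Lemma~\ref{lemma:mu_xi_inner_bound}, yielding $A^{(t+1)}\ge(1-c_-)A^{(t)}-c_-\zeta$ with an additive offset. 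The paper absorbs this offset via the invariant $A^{(t)}\ge\zeta$, established by a two-regime argument: for $t\le 2\log(4n/\delta)/p^2$ a worst-case ``all flips first'' bound shows the shrink factor cannot reach $2\zeta/A^{(0)}\approx 2/d^{1/4}$, and for larger $t$ the Hoeffding bound $|\mathcal{S}_{i,-}^{(t)}|\le 3pt/2$ gives the clean product $(1+c_+)^{(1-1.5p)t}(1-c_-)^{1.5pt}$. Your log-sum approach is arguably slicker at the endpoint, but it silently drops this additive offset and also needs $a^{(t)}>0$ throughout to take logarithms; you would need to bolt on a side argument (essentially the paper's invariant) to close this. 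Also, your appeal to the lower bound $p>C\log d/\sqrt{mn}$ in Step~3 is not actually needed for terminal-time concentration of $N_i$ --- only $T_1\gg\log n$ and the \emph{upper} bound on $p$ are used there; the paper invokes the lower bound on $p$ to guarantee the transition time $2\log(4n/\delta)/p^2$ sits below $T_1$.
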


Lemma \ref{lem:lngd_stage1} states that both signal learning and noise memorization grow exponentially during the first stage. 

In order to guarantee that the number of flipped labels remains within its expected range with high probability, 
we require $p = \tilde{\Omega}(1/\sqrt{t})$, where $t$ is the number of training steps. We set 
$t = \tilde{\Theta}(n/(\eta\sigma_p^2 d))$ according to Lemma \ref{lem:lngd_stage1}. 
Together with the upper bound on $\eta$ from Assumption~\ref{ass:main}, 
we obtain $p = \tilde{\Omega}(1/\sqrt{mn})$. 

For the analysis of label noise GD, one additional technical challenge is the instability of training dynamics caused by the injected noise, which we address as follows. For signal learning, we make use of the small label flipping rate $p$ and aggregate information across all samples via concentration. Whereas for noise memorization (which is tied to individual samples), we leverage the broad range of time steps in the first stage to establish the overall increment rate.

\paragraph{Second stage.}

As shown in Lemma \ref{lem:lngd_stage1}, at the end of the first phase, noise memorization has reached a significant level, dominating the model's output. However, label noise introduces randomness in the labels, which affects the updates of noise memorization coefficients. We track the evolution of $ \overline{\rho}^{(t)}_{j,r,i}$ via the following approximation.
Define $ \iota^{(t)}_i \triangleq \frac{1}{m} \sum_{r=1}^m  (\overline{\rho}^{(t)}_{y_i, r,i})^2 $. The evolution of noise memorization under label noise GD can be approximated as
\begin{equation*}
    \iota^{(t+1)}_{i}  \approx \begin{cases}
        (1 + \frac{\eta \sigma^2_p d}{(1+\exp((\iota^{(t)}_{i} )^2))nm})^2 \iota^{(t)}_{i},   &\text{ with prob } 1-p. \\
        (1 - \frac{\eta \sigma^2_p d}{(1+\exp(-(\iota^{(t)}_{i} )^2))nm})^2 \iota^{(t)}_{i},   &\text{ with prob } p.
    \end{cases}
\end{equation*}
Unlike conventional approaches such as \citep{cao2022benign, kou2023benign}, we analyze this process using a supermartingale argument and apply Azuma's inequality with a union bound over the second-stage training period. Via a martingale argument, we show that noise memorization remains at a constant level with high probability. While noise memorization stabilizes, signal learning continues to grow exponentially. This discrepancy enables signal learning to eventually dominate the generalization. The analysis of the second stage is summarized by the following lemma. 

\begin{lemma} \label{lem:lngd_stage2}
Under the same condition as Theorem \ref{thm:lnGD}, during $t \in [T_1, T_2]$ with $T_2 = T_1 + \log(6/(\sigma_0\| \boldsymbol{\mu}\|_2)) 4m (1+\exp(c_2)) \eta^{-1} \| \boldsymbol{\mu} \|_2^{-2}$, there exist a sufficient large positive constant $C_\iota$ and a constant $\iota^\ast_i$ depending on sample index $i$ such that the following results hold with probability at least $1- 1/d$:
(i) $| \iota^{(t)}_i -  \iota^\ast_i   |  \le C_\iota$; (ii) $\gamma^{(t)}_{j,r} \le 0.1$ for all $j \in \{ -1,1 \}$ and $r \in [m]$ (iii)$ \frac{1}{2m} ( \sum_{r=1}^m  \overline{\rho}^{(t)}_{y_i, r,i})^2 \le  f^{(t)}_i \le \frac{2}{m} ( \sum_{r=1}^m  \overline{\rho}^{(t)}_{y_i, r,i})^2  $ and (iv) $   \max_{r \in [m]} ( \gamma_{j,r}^{(t)} + |\langle \mathbf{w}_{j,r}^{(0)}, \boldsymbol{\mu} \rangle|) \geq\exp\big( \frac{  \eta \| \boldsymbol{\mu}\|^2_2}{16m}  (t-T_1) \big) \max_{r \in [m]} |\gamma_{j,r}^{(T_1)}+ \langle \mathbf{w}_{j,r}^{(0)}, \boldsymbol{\mu} \rangle|$.
\end{lemma}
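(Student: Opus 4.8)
The plan is to run a \emph{joint} induction over the second-stage window $t\in[T_1,T_2]$, simultaneously maintaining all four invariants (i)--(iv), since they are mutually coupled: the random recursion for $\iota^{(t)}_i$ needs the output approximation (iii) and the boundedness of signal learning (ii), while the envelopes (ii) and (iv) rely on a two-sided bound $C_0\le|\elldi|\le 1$ on the loss derivative that in turn requires $\iota^{(t)}_i$ to stay bounded (i). I would first establish (iii) as a standing tool. Decomposing $f(\mathbf{W}^{(t)},\mathbf{x}_i)=F_{+1}-F_{-1}$ through the signal--noise decomposition (\ref{eq:w_decomposition}) and expanding the squared ReLU, the margin $f^{(t)}_i$ splits into a signal piece controlled by $\gamma^{(t)}_{j,r}$, the same-label noise memorization built from $\overline{\rho}^{(t)}_{y_i,r,i}$, the opposite-label contribution from $\underline{\rho}^{(t)}_{j,r,i}$, and cross terms. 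Feeding in Proposition~\ref{prop:upper_entire}, the bound $|\underline{\rho}^{(t)}_{j,r,i}|=\tilde O(\sigma_0\sigma_p\sqrt d)$ carried over from Lemma~\ref{lem:lngd_stage1}, and the near-orthogonality $|\langle\boldsymbol{\xi}_i,\boldsymbol{\xi}_{i'}\rangle|=\tilde O(\sigma_p^2\sqrt d)$ for $i\ne i'$, every term other than the noise-memorization term appearing in (iii) is lower order once (ii) holds, which yields the sandwich.

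The heart of the argument is (i). Substituting $\elldi=\ell'(\epsilon_i^{(t)}y_i f^{(t)}_i)=-(1+\exp(\epsilon_i^{(t)}y_i f^{(t)}_i))^{-1}$ into the update (\ref{eq:rho1_update}) and using (iii) to express the loss derivative through the current noise level, $\iota^{(t)}_i$ obeys the case-split multiplicative recursion stated above: a gain factor when the label is not flipped (probability $1-p$) and a contraction when it is (probability $p$), both with per-step rate $O\!\big(\tfrac{\eta\sigma_p^2 d}{nm}\big)$ modulated by the loss derivative. I would define the anchor $\iota^*_i$ as the balance point of the one-step conditional mean, i.e.\ the value at which the expected gain from the $(1-p)$ branch exactly cancels the expected contraction from the $p$ branch; to leading order this sets $\exp((\iota^*_i)^2)\asymp(1-p)/p$, so $\iota^*_i$ is of bounded (at most polylogarithmic) order and depends on $i$ only through $\|\boldsymbol{\xi}_i\|_2^2$ and the realized flip frequency. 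Writing the increment as drift plus a martingale difference, $\iota^{(t+1)}_i-\iota^{(t)}_i=g_i(\iota^{(t)}_i)+\Delta M^{(t)}_i$, the drift $g_i$ is mean-reverting (negative above $\iota^*_i+\Omega(1)$, positive below $\iota^*_i-\Omega(1)$), while the martingale increment is bounded, $|\Delta M^{(t)}_i|=\tilde O(1/(nm))$, because Assumption~\ref{ass:main}(iii) forces $\eta\sigma_p^2 d=\tilde O(1)$. The confinement then follows by building an exponential supermartingale of $\iota^{(t)}_i-\iota^*_i$ that captures the restoring drift and applying Azuma's inequality, with a union bound over $i\in[n]$ and $t\in[T_1,T_2]$, to conclude $|\iota^{(t)}_i-\iota^*_i|\le C_\iota$ on the whole window with probability at least $1-1/d$.

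Parts (ii) and (iv) then come from propagating the signal recursion (\ref{eq:gamma_update}). Since (i) pins $\iota^{(t)}_i$ at constant order and (ii) keeps $\gamma^{(t)}_{j,r}\le 0.1$, the margin $f^{(t)}_i$ stays bounded, so $C_0\le|\elldi|\le 1$ holds throughout stage two; combined with $\sigma'(\langle\mathbf{w}^{(t)}_{j,r},y_i\boldsymbol{\mu}\rangle)=2(\gamma^{(t)}_{j,r}+\langle\mathbf{w}^{(0)}_{j,r},y_i\boldsymbol{\mu}\rangle)_+$ this gives a two-sided exponential envelope whose upper and lower rates differ only by the constant factors $1$ and $C_0$. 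Iterating the upper envelope, $\gamma^{(t)}_{j,r}+|\langle\mathbf{w}^{(0)}_{j,r},\boldsymbol{\mu}\rangle|\le\exp(\tfrac{C\eta\|\boldsymbol{\mu}\|_2^2}{m}(t-T_1))\,\tilde O(\sigma_0\|\boldsymbol{\mu}\|_2)$; the definition of $T_2$ (through $c_2$) together with $\sigma_0\le\tilde O(\|\boldsymbol{\mu}\|_2^{-1}d^{-5/8})$ is calibrated so this stays below $0.1$, giving (ii). For (iv), tracking the neuron attaining the maximum over $r$ and using $|\elldi|\ge C_0$ yields the geometric lower bound at rate $\tfrac{\eta\|\boldsymbol{\mu}\|_2^2}{16m}$, the constant $\tfrac1{16}$ absorbing $C_0$ and the max-over-$r$ comparison as in \cite{cao2022benign}.

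The main obstacle is (i). Near the anchor the restoring drift and the martingale fluctuation are both of order $\tfrac{\eta\sigma_p^2 d}{nm}$ per step, so the confinement cannot be read off from a one-step contraction; it must be extracted from the long-run balance between mean reversion and accumulated label-flip noise over a polynomially long horizon, which is precisely why the supermartingale-plus-Azuma route is needed instead of the deterministic monotonicity arguments of \cite{cao2022benign,kou2023benign}. Three points require care: (a) the recursion is only approximate, so the discarded $\underline{\rho}$, signal, and cross terms must be shown lower order at \emph{every} step, which is why (iii) and (ii) are carried inside the same induction; (b) the band width $C_\iota$, the exponential-martingale parameter, and the union bound over the $\Theta(n(T_2-T_1))$ events must be made mutually consistent so the total failure probability is at most $1/d$; and (c) the anchor $\iota^*_i$ must be defined robustly per sample and the concentration shown uniform in $i\in[n]$.
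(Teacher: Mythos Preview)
Your plan is essentially the same as the paper's: a joint induction on $[T_1,T_2]$ maintaining (i)--(iv) simultaneously, with (iii) established from the signal--noise decomposition, (ii) and (iv) from the two-sided envelope on the signal recursion once $|\elldi|$ is pinned in a constant range, and the crux (i) handled by a supermartingale plus Azuma over the polynomially long horizon. The only notable deviation is your choice of Lyapunov function: you propose an \emph{exponential} supermartingale of $\iota^{(t)}_i-\iota^*_i$, whereas the paper uses the simpler quadratic $M^{(t)}_i=(\iota^{(t)}_i-\iota^*_i)^2$ and verifies directly that $\mathbb{E}[M^{(t+1)}_i\mid\iota^{(t)}_i]\le M^{(t)}_i$ by expanding the two branches and checking the drift is nonpositive when $\iota^{(t)}_i\le 4\iota^*_i$ and $p<1/(1+2\exp((\iota^{(t)}_i)^2))$, then bounds the increments by $|M^{(k)}_i-M^{(k-1)}_i|\le\eta C_2$ and applies one-sided Azuma. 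The quadratic choice is slightly cleaner here because it turns the mean-reversion into a single sign check rather than requiring you to tune an exponential parameter against the drift; your route would work but carries more bookkeeping for no gain.
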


Lemma \ref{lem:lngd_stage2} demonstrates that label noise introduces a regularizing effect preventing the noise memorization coefficients from growing unchecked, while simultaneously allowing signal learning to grow to a sufficiently large value. 
Building on this result, we show that both signal learning and noise memorization reach a constant order of magnitude.
Consequently, the population loss can be bounded by $L_\mathcal{D}(\mathbf{W}^{(t)}) \le 2 \exp \left( - \frac{  C d}{  n^2 } \right)$, corresponding to the second bullet point of Theorem \ref{thm:lnGD}.

\section{Synthetic Experiments}

We conduct experiments using synthetic data to validate our theoretical results. The samples are generated according to Definition \ref{def:data_gene}. The train and test sample size is $n = 200$ and $n_{\rm test} = 2000$, and the input dimension is set to $d = 2000$. The label noise flip rate is $p =0.1$. We train the two-layer network with squared ReLU activation using standard GD and label noise GD for $t = 2000$ steps. The network width is $m = 20$ and the learning rate is $\eta = 0.5$. The signal vector is defined as $\boldsymbol{\mu} = [2, 0, 0, \ldots, 0] \in \mathbb{R}^d$ and the noise variance is set to $\sigma^2_p = 0.25 $. 

\textbf{Dynamics of signal and noise coefficients.} In Figure \ref{fig:label_noise_dy}, we present the feature learning coefficients defined in Section~\ref{sec:sn-decomp}, the training loss and test accuracy for both algorithms. 
We observe that GD successfully minimizes the training loss to a near-zero value; however, noise memorization~($\rho$) significantly exceeds signal learning~($\gamma$), leading to poor test performance. In contrast, label noise GD does not fully minimize the training loss, as it oscillates around 0.5; consistent with our theoretical analysis, this behavior causes noise memorization to remain constant in the second stage, while signal learning continues to grow rapidly. Hence the test accuracy of label noise GD steadily improves. 

\begin{figure*}[!htb]
    \centering
    \includegraphics[width=0.9\textwidth]{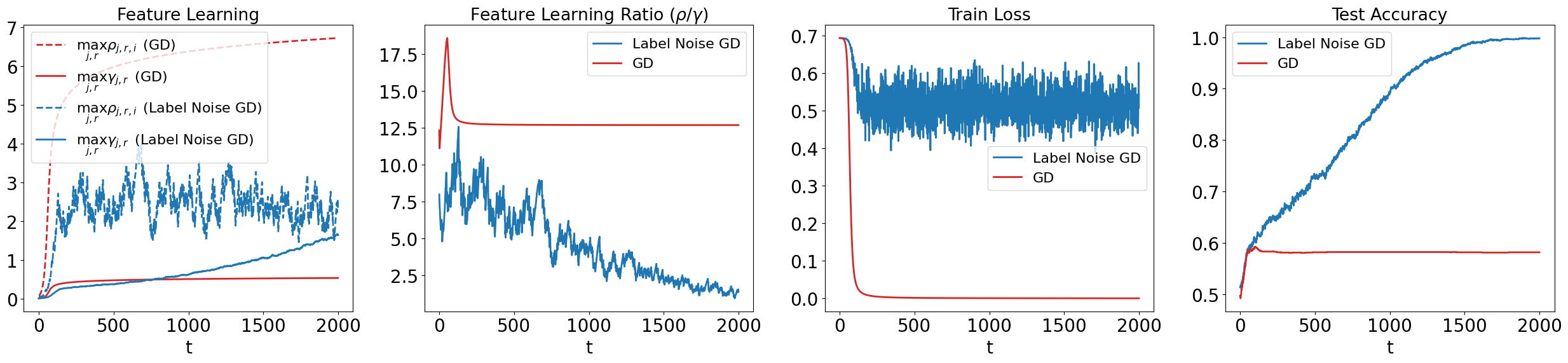}
    \vspace{-2mm}
    \caption{\small Ratio of noise memorization over signal learning, training loss, and test accuracy, of standard GD and label noise GD. See Section~\ref{sec:sn-decomp} for definitions of signal learning ($\gamma$) and noise memorization ($\rho$). }
    \label{fig:label_noise_dy}
\end{figure*}

\begin{figure}[!htb]
\vspace{-3mm}
\centering
    \subfigure[Standard GD]{\includegraphics[width=0.35\textwidth]{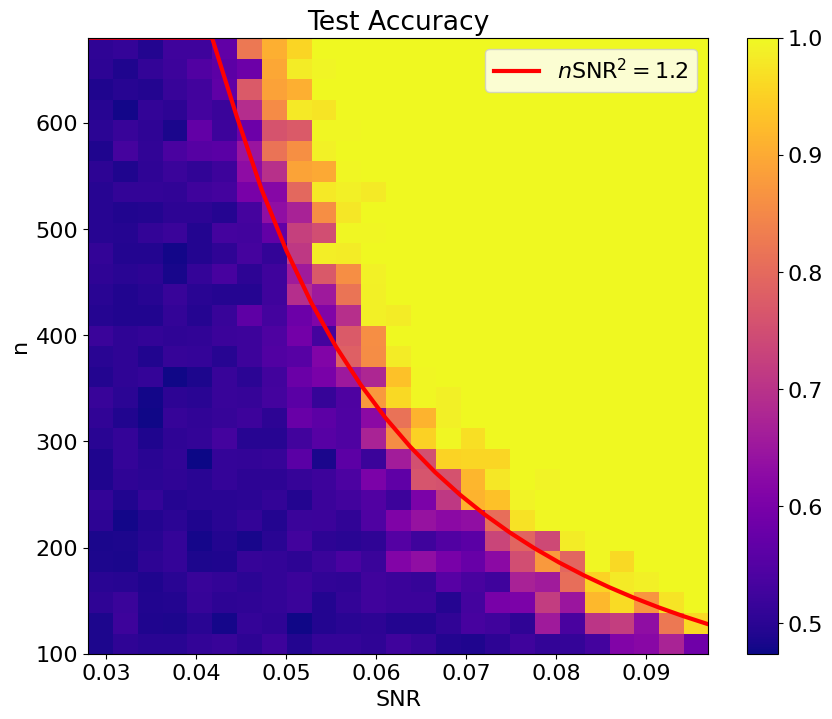}}
\subfigure[Label Noise GD]{\includegraphics[width=0.35\textwidth]{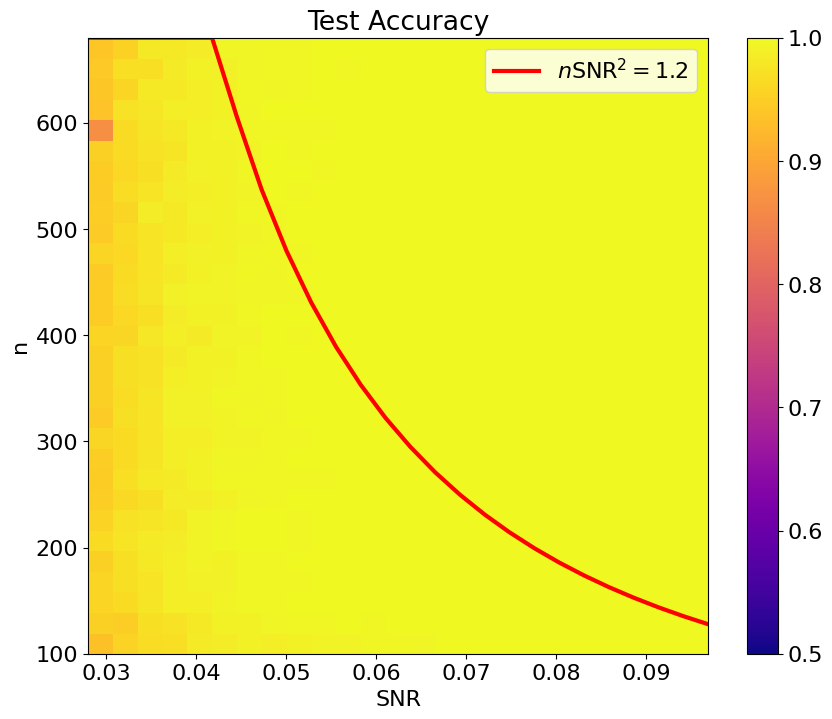}}
\vspace{-2mm}
\caption{\small Test accuracy heatmap of Standard GD (left) and Label Noise GD (right) after training.}
\label{fig:heatmap}
\vspace{-2mm}
\end{figure}

\textbf{Heatmap of generalization error.}
Next we explore a range of SNR values from 0.03 to 0.10 and sample sizes $n$ ranging from 100 to 700. For each combination of SNR and sample size $n$, we train the NN for 1000 steps with $\eta = 1.0$ using standard GD or label noise GD. The resulting test error is visualized in Figure \ref{fig:heatmap}. Observe that standard GD (left) fails to generalize when $\mathrm{SNR} = O(n^{-1/2})$, which is consistent with our theoretical prediction in Theorem~\ref{thm:GD}. 
On the other hand, label noise GD (right) achieves perfect test accuracy across a broader range of SNR, which agrees with Theorem~\ref{thm:lnGD}.

\textbf{Additional Experiments and Extended Analysis.} In addition to our primary experiments, we extend our analysis in Appendix \ref{sec:additional} by evaluating Label Noise GD on deeper neural networks, modified MNIST and CIFAR datasets, different types of label noise (e.g., Gaussian), and higher-order ReLU activation functions, demonstrating its robustness across various settings.

\section{Conclusion and limitation}

We presented a theoretical analysis of gradient-based feature learning in the challenging low SNR regime. Our main contribution is to demonstrate that label noise gradient descent (GD) can effectively enhance signal learning while suppressing noise memorization; this implicit regularization mechanism enables label noise GD to generalize in low SNR settings where standard GD suffers from harmful overfitting. Our theoretical findings are supported by experiments on synthetic data.
\vspace{-2mm}
\paragraph{Limitations and Broader Impacts.} Our current theoretical analysis is limited to a specific choice of activation function (squared ReLU) and network architecture (two-layer convolutional neural network). Extending this theoretical framework to more complex architectures, such as deeper or residual networks, would be a promising direction for future research. Additionally, investigating label noise GD under other optimization algorithms, including stochastic gradient descent (SGD) and adaptive methods like Adam, could provide further insight into its implicit regularization effects in practical settings.
This work aims to advance the theoretical understanding of generalization in neural networks. We are not aware of any immediate negative societal impacts resulting from this research.

\section*{Acknowledgments}
We thank the anonymous reviewers for their constructive comments. WH was
supported by JSPS KAKENHI (24K20848) and JST BOOST (JPMJBY24G6). TS was partially supported by JSPS KAKENHI (24K02905) and JST CREST (JPMJCR2015). DZ is supported by NSFC 62306252 and central fund from HKU IDS.


\bibliography{neurips_2025}

\begin{thebibliography}{10}

\bibitem{allen2022feature}
Zeyuan Allen-Zhu and Yuanzhi Li.
\newblock Feature purification: How adversarial training performs robust deep learning.
\newblock In {\em 2021 IEEE 62nd Annual Symposium on Foundations of Computer Science (FOCS)}, pages 977--988. IEEE, 2022.

\bibitem{allen2020towards}
Zeyuan Allen-Zhu and Yuanzhi Li.
\newblock Towards understanding ensemble, knowledge distillation and self-distillation in deep learning.
\newblock {\em The Eleventh International Conference on Learning Representations}, 2023.

\bibitem{ba2022high}
Jimmy Ba, Murat~A Erdogdu, Taiji Suzuki, Zhichao Wang, Denny Wu, and Greg Yang.
\newblock High-dimensional asymptotics of feature learning: How one gradient step improves the representation.
\newblock {\em Advances in Neural Information Processing Systems}, 35:37932--37946, 2022.

\bibitem{bartlett2020benign}
Peter~L Bartlett, Philip~M Long, G{\'a}bor Lugosi, and Alexander Tsigler.
\newblock Benign overfitting in linear regression.
\newblock {\em Proceedings of the National Academy of Sciences}, 117(48):30063--30070, 2020.

\bibitem{ben2022high}
Gerard Ben~Arous, Reza Gheissari, and Aukosh Jagannath.
\newblock High-dimensional limit theorems for {SGD}: Effective dynamics and critical scaling.
\newblock {\em Advances in Neural Information Processing Systems}, 35:25349--25362, 2022.

\bibitem{blanc2020implicit}
Guy Blanc, Neha Gupta, Gregory Valiant, and Paul Valiant.
\newblock Implicit regularization for deep neural networks driven by an {O}rnstein-{U}hlenbeck like process.
\newblock In {\em Conference on learning theory}, pages 483--513. PMLR, 2020.

\bibitem{brown2020language}
Tom Brown, Benjamin Mann, Nick Ryder, Melanie Subbiah, Jared~D Kaplan, Prafulla Dhariwal, Arvind Neelakantan, Pranav Shyam, Girish Sastry, Amanda Askell, et~al.
\newblock Language models are few-shot learners.
\newblock {\em Advances in neural information processing systems}, 33:1877--1901, 2020.

\bibitem{cao2022benign}
Yuan Cao, Zixiang Chen, Misha Belkin, and Quanquan Gu.
\newblock Benign overfitting in two-layer convolutional neural networks.
\newblock {\em Advances in Neural Information Processing Systems}, 35:25237--25250, 2022.

\bibitem{chen2022towards}
Zixiang Chen, Yihe Deng, Yue Wu, Quanquan Gu, and Yuanzhi Li.
\newblock Towards understanding mixture of experts in deep learning.
\newblock {\em Advances in Neural Information Processing Systems}, 2022.

\bibitem{chen2023does}
Zixiang Chen, Junkai Zhang, Yiwen Kou, Xiangning Chen, Cho-Jui Hsieh, and Quanquan Gu.
\newblock Why does sharpness-aware minimization generalize better than {SGD}?
\newblock {\em Advances in neural information processing systems}, 2023.

\bibitem{chidambaram2023provably}
Muthu Chidambaram, Xiang Wang, Chenwei Wu, and Rong Ge.
\newblock Provably learning diverse features in multi-view data with midpoint mixup.
\newblock In {\em International Conference on Machine Learning}, pages 5563--5599. PMLR, 2023.

\bibitem{chizat2020implicit}
Lenaic Chizat and Francis Bach.
\newblock Implicit bias of gradient descent for wide two-layer neural networks trained with the logistic loss.
\newblock In {\em Conference on learning theory}, pages 1305--1338. PMLR, 2020.

\bibitem{damian2021label}
Alex Damian, Tengyu Ma, and Jason~D Lee.
\newblock Label noise {SGD} provably prefers flat global minimizers.
\newblock {\em Advances in Neural Information Processing Systems}, 34:27449--27461, 2021.

\bibitem{damian2022neural}
Alexandru Damian, Jason Lee, and Mahdi Soltanolkotabi.
\newblock Neural networks can learn representations with gradient descent.
\newblock In {\em Conference on Learning Theory}, pages 5413--5452. PMLR, 2022.

\bibitem{devlin2018bert}
Jacob Devlin, Ming-Wei Chang, Kenton Lee, and Kristina Toutanova.
\newblock Bert: Pre-training of deep bidirectional transformers for language understanding.
\newblock In {\em Proceedings of the 2019 conference of the North American chapter of the association for computational linguistics: human language technologies, volume 1 (long and short papers)}, pages 4171--4186, 2019.

\bibitem{devroye2018total}
Luc Devroye, Abbas Mehrabian, and Tommy Reddad.
\newblock The total variation distance between high-dimensional gaussians with the same mean.
\newblock {\em arXiv preprint arXiv:1810.08693}, 2018.

\bibitem{foret2020sharpness}
Pierre Foret, Ariel Kleiner, Hossein Mobahi, and Behnam Neyshabur.
\newblock Sharpness-aware minimization for efficiently improving generalization.
\newblock {\em International Conference on Learning Representations}, 2021.

\bibitem{frei2022benign}
Spencer Frei, Niladri~S Chatterji, and Peter Bartlett.
\newblock Benign overfitting without linearity: Neural network classifiers trained by gradient descent for noisy linear data.
\newblock In {\em Conference on Learning Theory}, pages 2668--2703. PMLR, 2022.

\bibitem{ghorbani2020neural}
Behrooz Ghorbani, Song Mei, Theodor Misiakiewicz, and Andrea Montanari.
\newblock When do neural networks outperform kernel methods?
\newblock {\em Advances in Neural Information Processing Systems}, 33:14820--14830, 2020.

\bibitem{girshick2014rich}
Ross Girshick, Jeff Donahue, Trevor Darrell, and Jitendra Malik.
\newblock Rich feature hierarchies for accurate object detection and semantic segmentation.
\newblock In {\em Proceedings of the IEEE conference on computer vision and pattern recognition}, pages 580--587, 2014.

\bibitem{glasgow2023sgd}
Margalit Glasgow.
\newblock {SGD} finds then tunes features in two-layer neural networks with near-optimal sample complexity: A case study in the {XOR} problem.
\newblock {\em The Twelfth International Conference on Learning Representations}, 2024.

\bibitem{han2024feature}
Andi Han, Wei Huang, Yuan Cao, and Difan Zou.
\newblock On the feature learning in diffusion models.
\newblock {\em The Thirteenth International Conference on Learning Representations}, 2025.

\bibitem{han2025on}
Andi Han, Wei Huang, Zhanpeng Zhou, Gang Niu, Wuyang Chen, Junchi Yan, Akiko Takeda, and Taiji Suzuki.
\newblock On the role of label noise in the feature learning process.
\newblock In {\em Forty-second International Conference on Machine Learning}, 2025.

\bibitem{haochen2021shape}
Jeff~Z HaoChen, Colin Wei, Jason Lee, and Tengyu Ma.
\newblock Shape matters: Understanding the implicit bias of the noise covariance.
\newblock In {\em Conference on Learning Theory}, pages 2315--2357. PMLR, 2021.

\bibitem{huang2025quantifying}
Wei Huang, Yuan Cao, Haonan Wang, Xin Cao, and Taiji Suzuki.
\newblock Quantifying the optimization and generalization advantages of graph neural networks over multilayer perceptrons.
\newblock In {\em The 28th International Conference on Artificial Intelligence and Statistics}, 2025.

\bibitem{huang2024comparison}
Wei Huang, Andi Han, Yongqiang Chen, Yuan Cao, Zhiqiang Xu, and Taiji Suzuki.
\newblock On the comparison between multi-modal and single-modal contrastive learning.
\newblock {\em Advances in Neural Information Processing Systems}, 37:81549--81605, 2024.

\bibitem{huang2023understanding}
Wei Huang, Ye~Shi, Zhongyi Cai, and Taiji Suzuki.
\newblock Understanding convergence and generalization in federated learning through feature learning theory.
\newblock In {\em The Twelfth International Conference on Learning Representations}, 2023.

\bibitem{huh2024generalization}
Jung~Eun Huh and Patrick Rebeschini.
\newblock Generalization bounds for label noise stochastic gradient descent.
\newblock In {\em International Conference on Artificial Intelligence and Statistics}, pages 1360--1368. PMLR, 2024.

\bibitem{jelassi2022towards}
Samy Jelassi and Yuanzhi Li.
\newblock Towards understanding how momentum improves generalization in deep learning.
\newblock In {\em International Conference on Machine Learning}, pages 9965--10040. PMLR, 2022.

\bibitem{jelassi2022vision}
Samy Jelassi, Michael Sander, and Yuanzhi Li.
\newblock Vision transformers provably learn spatial structure.
\newblock {\em Advances in Neural Information Processing Systems}, 35:37822--37836, 2022.

\bibitem{jiang2024unveil}
Jiarui Jiang, Wei Huang, Miao Zhang, Taiji Suzuki, and Liqiang Nie.
\newblock Unveil benign overfitting for transformer in vision: Training dynamics, convergence, and generalization.
\newblock {\em Advances in Neural Information Processing Systems}, 37:135464--135625, 2024.

\bibitem{kou2023does}
Yiwen Kou, Zixiang Chen, Yuan Cao, and Quanquan Gu.
\newblock How does semi-supervised learing with pseudo-labelers work? a case study.
\newblock In {\em International Conference on Learning Representations}, 2023.

\bibitem{kou2023benign}
Yiwen Kou, Zixiang Chen, Yuanzhou Chen, and Quanquan Gu.
\newblock Benign overfitting in two-layer relu convolutional neural networks.
\newblock In {\em International Conference on Machine Learning}, pages 17615--17659. PMLR, 2023.

\bibitem{lecun2015deep}
Yann LeCun, Yoshua Bengio, and Geoffrey Hinton.
\newblock Deep learning.
\newblock {\em nature}, 521(7553):436--444, 2015.

\bibitem{li2025on}
Bingrui Li, Wei Huang, Andi Han, Zhanpeng Zhou, Taiji Suzuki, Jun Zhu, and Jianfei Chen.
\newblock On the optimization and generalization of two-layer transformers with sign gradient descent.
\newblock In {\em The Thirteenth International Conference on Learning Representations}, 2025.

\bibitem{li2023theoretical}
Hongkang Li, Meng Wang, Sijia Liu, and Pin-Yu Chen.
\newblock A theoretical understanding of shallow vision transformers: Learning, generalization, and sample complexity.
\newblock {\em The Eleventh International Conference on Learning Representations}, 2023.

\bibitem{li2024improves}
Hongkang Li, Meng Wang, Tengfei Ma, Sijia Liu, Zaixi Zhang, and Pin-Yu Chen.
\newblock What improves the generalization of graph transformers? a theoretical dive into the self-attention and positional encoding.
\newblock {\em The Forty-First International Conference on Machine Learning}, 2024.

\bibitem{li2020regularization}
Weizhi Li, Gautam Dasarathy, and Visar Berisha.
\newblock Regularization via structural label smoothing.
\newblock In {\em International Conference on Artificial Intelligence and Statistics}, pages 1453--1463. PMLR, 2020.

\bibitem{li2021happens}
Zhiyuan Li, Tianhao Wang, and Sanjeev Arora.
\newblock What happens after {SGD} reaches zero loss?--a mathematical framework.
\newblock {\em arXiv preprint arXiv:2110.06914}, 2021.

\bibitem{li2023benign}
Zhu Li, Weijie~J Su, and Dino Sejdinovic.
\newblock Benign overfitting and noisy features.
\newblock {\em Journal of the American Statistical Association}, 118(544):2876--2888, 2023.

\bibitem{lu2023benign}
Miao Lu, Beining Wu, Xiaodong Yang, and Difan Zou.
\newblock Benign oscillation of stochastic gradient descent with large learning rates.
\newblock {\em The Twelfth International Conference on Learning Representations}, 2024.

\bibitem{oymak2023role}
Samet Oymak, Ankit~Singh Rawat, Mahdi Soltanolkotabi, and Christos Thrampoulidis.
\newblock On the role of attention in prompt-tuning.
\newblock In {\em International Conference on Machine Learning}, pages 26724--26768. PMLR, 2023.

\bibitem{rahaman2019spectral}
Nasim Rahaman, Aristide Baratin, Devansh Arpit, Felix Draxler, Min Lin, Fred Hamprecht, Yoshua Bengio, and Aaron Courville.
\newblock On the spectral bias of neural networks.
\newblock In {\em International conference on machine learning}, pages 5301--5310. PMLR, 2019.

\bibitem{sanyal2020benign}
Amartya Sanyal, Puneet~K Dokania, Varun Kanade, and Philip~HS Torr.
\newblock How benign is benign overfitting?
\newblock {\em International Conference on Learning Representations}, 2021.

\bibitem{shallue2019measuring}
Christopher~J Shallue, Jaehoon Lee, Joseph Antognini, Jascha Sohl-Dickstein, Roy Frostig, and George~E Dahl.
\newblock Measuring the effects of data parallelism on neural network training.
\newblock {\em Journal of Machine Learning Research}, 20(112):1--49, 2019.

\bibitem{shamir2023implicit}
Ohad Shamir.
\newblock The implicit bias of benign overfitting.
\newblock {\em Journal of Machine Learning Research}, 24(113):1--40, 2023.

\bibitem{shen2022data}
Ruoqi Shen, S{\'e}bastien Bubeck, and Suriya Gunasekar.
\newblock Data augmentation as feature manipulation.
\newblock In {\em International conference on machine learning}, pages 19773--19808. PMLR, 2022.

\bibitem{silver2016mastering}
David Silver, Aja Huang, Chris~J Maddison, Arthur Guez, Laurent Sifre, George Van Den~Driessche, Julian Schrittwieser, Ioannis Antonoglou, Veda Panneershelvam, Marc Lanctot, et~al.
\newblock Mastering the game of go with deep neural networks and tree search.
\newblock {\em nature}, 529(7587):484--489, 2016.

\bibitem{szegedy2016rethinking}
Christian Szegedy, Vincent Vanhoucke, Sergey Ioffe, Jon Shlens, and Zbigniew Wojna.
\newblock Rethinking the inception architecture for computer vision.
\newblock In {\em Proceedings of the IEEE conference on computer vision and pattern recognition}, pages 2818--2826, 2016.

\bibitem{takakura2024mean}
Shokichi Takakura and Taiji Suzuki.
\newblock Mean-field analysis on two-layer neural networks from a kernel perspective.
\newblock {\em The Forty-First International Conference on Machine Learning}, 2024.

\bibitem{tsigler2023benign}
Alexander Tsigler and Peter~L Bartlett.
\newblock Benign overfitting in ridge regression.
\newblock {\em Journal of Machine Learning Research}, 24(123):1--76, 2023.

\bibitem{vershynin2018introduction}
Roman Vershynin.
\newblock {\em High-Dimensional Probability: An Introduction With Applications in Data Science}.
\newblock Cambridge, UK: Cambridge Univ. Press, 2018.

\bibitem{vivien2022label}
Loucas~Pillaud Vivien, Julien Reygner, and Nicolas Flammarion.
\newblock Label noise (stochastic) gradient descent implicitly solves the lasso for quadratic parametrisation.
\newblock In {\em Conference on Learning Theory}, pages 2127--2159. PMLR, 2022.

\bibitem{wang2024nonlinear}
Zhichao Wang, Denny Wu, and Zhou Fan.
\newblock Nonlinear spiked covariance matrices and signal propagation in deep neural networks.
\newblock In {\em The Thirty Seventh Annual Conference on Learning Theory}, pages 4891--4957. PMLR, 2024.

\bibitem{wen2021toward}
Zixin Wen and Yuanzhi Li.
\newblock Toward understanding the feature learning process of self-supervised contrastive learning.
\newblock In {\em International Conference on Machine Learning}, pages 11112--11122. PMLR, 2021.

\bibitem{xu2023benign}
Zhiwei Xu, Yutong Wang, Spencer Frei, Gal Vardi, and Wei Hu.
\newblock Benign overfitting and grokking in re{LU} networks for {XOR} cluster data.
\newblock {\em The Twelfth International Conference on Learning Representations}, 2024.

\bibitem{zhang2021understanding}
Chiyuan Zhang, Samy Bengio, Moritz Hardt, Benjamin Recht, and Oriol Vinyals.
\newblock Understanding deep learning (still) requires rethinking generalization.
\newblock {\em Communications of the ACM}, 64(3):107--115, 2021.

\bibitem{zou2023benefits}
Difan Zou, Yuan Cao, Yuanzhi Li, and Quanquan Gu.
\newblock The benefits of mixup for feature learning.
\newblock In {\em International Conference on Machine Learning}, pages 43423--43479. PMLR, 2023.

\bibitem{zou2021understanding}
Difan Zou, Yuan Cao, Yuanzhi Li, and Quanquan Gu.
\newblock Understanding the generalization of {Adam} in learning neural networks with proper regularization.
\newblock {\em The Eleventh International Conference on Learning Representations}, 2023.

\end{thebibliography}
\bibliographystyle{plain}


\appendix
\newpage

\begin{center}
	\LARGE \bf {Appendix}
\end{center}

\etocdepthtag.toc{mtappendix}
\etocsettagdepth{mtchapter}{none}
\etocsettagdepth{mtappendix}{subsection}
\tableofcontents
\clearpage

\section{Additional related Works}

\textbf{Label Noise SGD.} 
Recent works have empirically shown that label noise stochastic gradient descent (SGD) exhibits favorable generalization properties due to the regularization effect of the injected noise \citep{haochen2021shape,damian2021label}. From a theoretical standpoint, label noise SGD has been primarily explored in the context of linear regression or shallow neural networks, particularly in regression settings \citep{blanc2020implicit,damian2021label,haochen2021shape,huh2024generalization,li2021happens,vivien2022label,takakura2024mean}; these studies have highlighted the implicit regularization benefits of label noise in SGD. For instance, \cite{takakura2024mean} illustrated the implicit regularization of label noise in mean-field neural networks, while \cite{li2021happens,damian2021label} proved that label noise introduces bias towards flat minima. In contrast to these existing literature, our work focuses on the binary classification setting specified by the signal-noise model, providing a quantitative analysis of the training dynamics and the generalization benefits of label noise GD in the low SNR regime. 


\textbf{Signal-Noise Data Models.} 
Recent theoretical works have studied the signal-noise model in various contexts, including $(i)$ \textit{optimization algorithms}, such as Adam \citep{zou2021understanding}, momentum \citep{jelassi2022towards}, sharpness-aware minimization \citep{chen2023does}, large learning rates \citep{lu2023benign}; $(ii)$ \textit{learning paradigms}, such as ensembling and knowledge distillation \citep{allen2020towards}, semi-and self-supervised learning \citep{kou2023does,wen2021toward}, Mixup \citep{zou2023benefits,chidambaram2023provably},  adversarial training \citep{allen2022feature}, and prompt tuning \citep{oymak2023role}; and $(iii)$ \textit{neural network structures}, such as convolutional neural network \citep{cao2022benign,kou2023benign}, vision transformer \citep{jelassi2022vision,li2023theoretical}, graph neural network \citep{huang2025quantifying,li2024improves}. 
Our work is in line with \cite{chen2022towards,huang2025quantifying}, with the goal of showing that a simple algorithmic modification (label noise GD) facilitates feature learning in the challenging low SNR regime.

\section{Preliminary Lemmas}

\begin{lemma}[\cite{cao2022benign}]
\label{lem_xi_bound}
Suppose that $\delta > 0$ and $d = \Omega(\log(4n/\delta)))$. Then with probability $1 - \delta$,
\begin{align*}
    &\sigma_p^2 d/2 \leq \| \boldsymbol{\xi}_i \|^2_2 \leq 3 \sigma_p^2 d/2, \\
    &|\langle \boldsymbol{\xi}_i, \boldsymbol{\xi}_{i' } \rangle| \leq 2 \sigma_p^2 \sqrt{d \log(4n^2/\delta)},
\end{align*}
for all $i, i'\neq i \in [n]$. 
\end{lemma}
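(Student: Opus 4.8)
The plan is to establish both inequalities through standard Gaussian concentration applied to the degenerate covariance $\Sigma = \sigma_p^2(\mathbf{I}_d - \boldsymbol{\mu}\boldsymbol{\mu}^\top\|\boldsymbol{\mu}\|_2^{-2})$, and then to union bound over all samples and pairs. First I would note that $\Sigma$ is exactly $\sigma_p^2$ times the orthogonal projection onto the $(d-1)$-dimensional subspace $\boldsymbol{\mu}^\perp$; choosing an orthonormal basis $\{\mathbf{e}_k\}_{k=1}^{d-1}$ of $\boldsymbol{\mu}^\perp$, each noise vector can be written as $\boldsymbol{\xi}_i = \sigma_p\sum_{k=1}^{d-1} Z_{i,k}\mathbf{e}_k$ with $Z_{i,k}$ i.i.d.\ standard normal, so that $\|\boldsymbol{\xi}_i\|_2^2 = \sigma_p^2\sum_{k=1}^{d-1} Z_{i,k}^2$ is $\sigma_p^2$ times a $\chi^2_{d-1}$ variable with mean $\sigma_p^2(d-1)$. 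Applying the Laurent--Massart tail bounds (equivalently, a sub-exponential Bernstein inequality) gives $|\|\boldsymbol{\xi}_i\|_2^2 - \sigma_p^2(d-1)| \le C\sigma_p^2\sqrt{(d-1)t} + C\sigma_p^2 t$ with probability $1-e^{-t}$; taking $t = \log(2n/\delta)$ and invoking $d = \Omega(\log(4n/\delta))$ makes this deviation at most $\sigma_p^2 d/2$, which yields the two-sided norm bound $\sigma_p^2 d/2 \le \|\boldsymbol{\xi}_i\|_2^2 \le 3\sigma_p^2 d/2$.

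For the cross term I would condition on $\boldsymbol{\xi}_{i'}$ and exploit the independence of distinct samples. Given $\boldsymbol{\xi}_{i'}$, the quantity $\langle \boldsymbol{\xi}_i, \boldsymbol{\xi}_{i'}\rangle = \boldsymbol{\xi}_{i'}^\top\boldsymbol{\xi}_i$ is a fixed linear functional of the Gaussian $\boldsymbol{\xi}_i$, hence centered Gaussian with variance $\boldsymbol{\xi}_{i'}^\top\Sigma\boldsymbol{\xi}_{i'}$. Since every sample lies in $\boldsymbol{\mu}^\perp$ by construction we have $\boldsymbol{\mu}^\top\boldsymbol{\xi}_{i'} = 0$, so this variance collapses to $\sigma_p^2\|\boldsymbol{\xi}_{i'}\|_2^2$. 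A one-dimensional Gaussian tail bound then gives $|\langle \boldsymbol{\xi}_i, \boldsymbol{\xi}_{i'}\rangle| \le \sqrt{2\sigma_p^2\|\boldsymbol{\xi}_{i'}\|_2^2\log(2/\delta'')}$ with probability $1-\delta''$; substituting the already-established upper bound $\|\boldsymbol{\xi}_{i'}\|_2^2 \le 3\sigma_p^2 d/2$ and setting $\delta'' = \delta/(2n^2)$ produces $|\langle \boldsymbol{\xi}_i, \boldsymbol{\xi}_{i'}\rangle| \le 2\sigma_p^2\sqrt{d\log(4n^2/\delta)}$ after absorbing the numerical constants.

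I would then close the argument with a union bound: the $n$ norm events each fail with probability at most $\delta/(2n)$ and the at most $n^2$ inner-product events each fail with probability at most $\delta/(2n^2)$, so all the stated bounds hold simultaneously for every $i$ and every $i'\neq i$ with probability at least $1-\delta$. The only point requiring care---rather than a genuine difficulty---is the bookkeeping around the degenerate covariance: one must track that the projection drops the effective dimension to $d-1$ (so the chi-squared has $d-1$ rather than $d$ degrees of freedom) and that the orthogonality $\boldsymbol{\mu}^\top\boldsymbol{\xi}_{i'} = 0$ removes the cross term in the conditional variance. Both facts are immediate from the definition of $\Sigma$, and the constants in the target bounds are loose enough that the $d-1$ versus $d$ discrepancy is harmless once $d = \Omega(\log(4n/\delta))$.
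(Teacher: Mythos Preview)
Your proposal is correct and follows the standard route for this kind of result. Note, however, that the present paper does not actually prove this lemma: it is stated with a citation to \cite{cao2022benign} and invoked as a preliminary fact, so there is no in-paper proof to compare against. Your argument---representing $\boldsymbol{\xi}_i$ in an orthonormal basis of $\boldsymbol{\mu}^\perp$, applying chi-squared (Laurent--Massart/Bernstein) concentration for the norms, conditioning and using a scalar Gaussian tail for the cross terms, then union bounding---is precisely the approach used in the cited reference, so your write-up would serve as a faithful reconstruction of the omitted proof.
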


\begin{lemma}[\cite{cao2022benign}]
\label{lem_inner_bound}
Suppose that $d \geq \Omega(\log(mn/\delta))$, $m = \Omega(\log(1/\delta))$. Then with probability at least $1- \delta$, it satisfies that for all $r \in [m], j \in\{ \pm 1\}, i \in[n]$ ,
\begin{align*}
    |\langle \mathbf{w}_{j,r}^{(0)}, \boldsymbol{\mu} \rangle| &\leq \sqrt{2 \log(8m/\delta)} \sigma_0 \| \boldsymbol{\mu}\|_2 \\
    |\langle   \mathbf{w}_{j,r}^{(0)}, \boldsymbol{\xi}_i \rangle| &\leq 2\sqrt{\log(8mn/\delta)} \sigma_0 \sigma_p \sqrt{d}
\end{align*}
and for all $j \in \{ \pm 1\}, i \in [n]$
\begin{align*}
    &\sigma_0 \| \boldsymbol{\mu} \|_2/2\leq \max_{r \in [m]} j \langle \mathbf{w}^{(0)}_{j,r}, \boldsymbol{\mu}\rangle \leq \sqrt{2 \log(8m/\delta)} \sigma_0 \| \boldsymbol{\mu}\|_2, \\
    &\sigma_0 \sigma_p \sqrt{d} /4 \leq \max_{r\in [m]} j \langle \mathbf{w}_{j,r}^{(0)}, \boldsymbol{\xi}_i \rangle \leq 2 \sqrt{\log(8mn/\delta)} \sigma_0 \sigma_p\sqrt{d}.
\end{align*}
\end{lemma}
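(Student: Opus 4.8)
The plan is to exploit the fact that the initial weights $\mathbf{w}_{j,r}^{(0)}$ have i.i.d.\ $\mathcal{N}(0,\sigma_0^2)$ entries and are independent of the data, so that every inner product appearing in the statement is (conditionally) a centered Gaussian with an explicitly computable variance. Concretely, $\langle \mathbf{w}_{j,r}^{(0)}, \boldsymbol{\mu}\rangle \sim \mathcal{N}(0, \sigma_0^2\|\boldsymbol{\mu}\|_2^2)$, while conditioned on $\boldsymbol{\xi}_i$ we have $\langle \mathbf{w}_{j,r}^{(0)}, \boldsymbol{\xi}_i\rangle \sim \mathcal{N}(0, \sigma_0^2\|\boldsymbol{\xi}_i\|_2^2)$. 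Throughout I would work on the event of Lemma~\ref{lem_xi_bound}, which (under $d = \Omega(\log(4n/\delta))$) guarantees $\sigma_p^2 d/2 \le \|\boldsymbol{\xi}_i\|_2^2 \le 3\sigma_p^2 d/2$ for all $i$; this replaces the random variance $\sigma_0^2\|\boldsymbol{\xi}_i\|_2^2$ by the deterministic range $[\sigma_0^2\sigma_p^2 d/2,\ 3\sigma_0^2\sigma_p^2 d/2]$.

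For the two upper bounds I would apply the standard sub-Gaussian tail inequality $\mathbb{P}(|X|\ge t)\le 2\exp(-t^2/(2s^2))$ for $X\sim\mathcal{N}(0,s^2)$. Taking $s = \sigma_0\|\boldsymbol{\mu}\|_2$ and $t = \sqrt{2\log(8m/\delta)}\,\sigma_0\|\boldsymbol{\mu}\|_2$ makes the per-index failure probability $\delta/(4m)$, so a union bound over the $2m$ pairs $(j,r)$ leaves probability $\delta/2$. For the noise inner products I use $s^2 \le 3\sigma_0^2\sigma_p^2 d/2$ together with $t = 2\sqrt{\log(8mn/\delta)}\,\sigma_0\sigma_p\sqrt{d}$; the leading constant $2$ makes the exponent dominate $\log(8mn/\delta)$ even after inflating the variance by the factor $3/2$, and a union bound over the $2mn$ triples $(j,r,i)$ controls the total failure probability.

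For the lower bounds on the maxima I would switch from concentration to anti-concentration across the $m$ independent neurons. Fixing $j$, the variables $\{\,j\langle \mathbf{w}_{j,r}^{(0)}, \boldsymbol{\mu}\rangle\,\}_{r\in[m]}$ are i.i.d.\ $\mathcal{N}(0,\sigma_0^2\|\boldsymbol{\mu}\|_2^2)$, so each exceeds $\sigma_0\|\boldsymbol{\mu}\|_2/2$ with a fixed constant probability $c = \mathbb{P}(Z\ge 1/2)>0$ (with $Z$ standard normal); hence $\mathbb{P}(\max_r j\langle\cdot\rangle < \sigma_0\|\boldsymbol{\mu}\|_2/2) \le (1-c)^m$, which is at most $\delta$ once $m = \Omega(\log(1/\delta))$. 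The noise case is analogous: conditioned on the Lemma~\ref{lem_xi_bound} event, $\|\boldsymbol{\xi}_i\|_2 \ge \sigma_p\sqrt{d/2}$, so the threshold $\sigma_0\sigma_p\sqrt{d}/4$ sits below a constant fraction of the conditional standard deviation and is again exceeded with constant probability per neuron; taking the max over the $m$ neurons and a union bound over the $2n$ pairs $(j,i)$ gives the claim.

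Finally I would collect everything through a single union bound, allocating a constant fraction of $\delta$ to each of the four inequalities and to the Lemma~\ref{lem_xi_bound} event, then rescale $\delta$ by the resulting constant. Since there is no optimization or dynamics here, the only mild subtleties — and the parts I would be most careful about — are (i) performing the anti-concentration step \emph{after} conditioning on the noise-norm event, so that the conditional Gaussian has a deterministic bounded variance, and (ii) checking that the explicit constants ($1/2$, $1/4$, and the leading $2$) remain compatible with the $3/2$ variance inflation and with the requirement $m = \Omega(\log(1/\delta))$; both are routine once the conditioning is arranged in this order.
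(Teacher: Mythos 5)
Your argument is correct and is precisely the standard proof of this lemma, which the paper itself imports from \cite{cao2022benign} without reproving: Gaussian tail bounds plus union bounds over $(j,r)$ and $(j,r,i)$ for the upper estimates, and constant-probability anti-concentration per neuron (performed, as you rightly insist, after conditioning on the norm event of Lemma~\ref{lem_xi_bound} so the conditional variance lies in $[\sigma_0^2\sigma_p^2 d/2,\, 3\sigma_0^2\sigma_p^2 d/2]$) boosted across the $m$ independent rows for the lower estimates. One quantitative caveat: your final union bound over the $2n$ pairs $(j,i)$ for the noise lower bound needs $(1-c')^m \cdot 2n \le \delta$, i.e.\ $m = \Omega(\log(n/\delta))$ rather than merely $\Omega(\log(1/\delta))$ as you (and the lemma statement) assert --- this is harmless in context, since Assumption~\ref{ass:main} takes $m = \tilde{\Omega}(1)$, but it is worth noting that $m=\Omega(\log(1/\delta))$ alone does not suffice for the ``for all $i\in[n]$'' clause.
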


\begin{lemma}
\label{lem_s_bound}
Let $\mathcal{S}_{\pm}^{(t)} = \{ i : \epsilon_i^{(t)} = \pm1\}$ and $\mathcal{S}_j = \{ i: y_i = j\}$. Then
$\forall t \geq 0$, we have following with probability at least $1 - \delta$,
\begin{enumerate}
    \item $ ||\mathcal{S}_+^{(t)}| - n(1-p)| \leq \sqrt{\frac{n}{2} \log \Big( \frac{4 T^\ast }{\delta} \Big)}$, and $ ||\mathcal{S}_{-}^{(t)}| - np| \leq \sqrt{\frac{n}{2} \log\Big( \frac{4 T^*}{\delta} \Big)} $.
   \item  The size of set follows, $\forall j \in \{\pm1 \}$
   \begin{equation*}
    \left| |\mathcal{S}_+^{(t)} \cap \mathcal{S}_j| - \frac{(1-p)n}{2} \right| \leq \sqrt{\frac{n}{2} \log \Big( \frac{8T^*}{\delta} \Big)}, \quad \left| |\mathcal{S}_{-}^{(t)} \cap \mathcal{S}_j| - \frac{pn}{2} \right| \leq \sqrt{\frac{n}{2} \log \Big( \frac{8T^*}{\delta} \Big)}.
\end{equation*}
\end{enumerate}

Suppose $n \geq \frac{8\log(8T^*/\delta)}{p^2} \geq \frac{8\log(8T^*/\delta)}{(1-p)^2}$, we have 
\begin{equation*}
    |\mathcal{S}_+^{(t)} \cap \mathcal{S}_j | \in \left[\frac{(2-3p)n}{4}, \frac{(2-p)n}{4} \right], \quad |\mathcal{S}_{-}^{(t)} \cap \mathcal{S}_j| \in \left[ \frac{pn}{4}, \frac{3pn}{4} \right].
\end{equation*}
\end{lemma}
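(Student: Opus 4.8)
The statement is a standard concentration result about sums of independent Bernoulli indicators, so the plan is to apply Hoeffding's inequality at each fixed step $t$ and then take a union bound over $0 \le t \le T^*$; this is precisely why the deviation bounds carry $\log T^*$ factors.

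First I would handle the marginal counts. For a fixed $t$, write $|\mathcal{S}_+^{(t)}| = \sum_{i=1}^n \mathds{1}(\epsilon_i^{(t)} = 1)$, a sum of $n$ independent $\mathrm{Bernoulli}(1-p)$ variables with mean $n(1-p)$. Hoeffding's inequality gives $\mathbb{P}\big[\,\big||\mathcal{S}_+^{(t)}| - n(1-p)\big| \ge s\,\big] \le 2\exp(-2s^2/n)$. Taking $s = \sqrt{(n/2)\log(4T^*/\delta)}$ makes each tail at most $\delta/(2T^*)$, so a union bound over the (at most) $T^*$ steps yields the first claim with probability $\ge 1 - \delta/2$. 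Since $|\mathcal{S}_-^{(t)}| = n - |\mathcal{S}_+^{(t)}|$ and $np = n - n(1-p)$, the very same event controls $|\mathcal{S}_-^{(t)}|$, so no extra work is needed there.

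Next, for the intersection counts I would exploit the joint independence of labels and flips. Because $y_i$ and $\epsilon_i^{(t)}$ are independent and i.i.d. across $i$, the products $\mathds{1}(y_i = j)\,\mathds{1}(\epsilon_i^{(t)} = 1)$ are i.i.d. $\mathrm{Bernoulli}\big(\tfrac{1-p}{2}\big)$; hence $|\mathcal{S}_+^{(t)} \cap \mathcal{S}_j|$ is $\mathrm{Binomial}\big(n, \tfrac{1-p}{2}\big)$ with mean \emph{exactly} $\tfrac{(1-p)n}{2}$. This observation is the crux: it lets me center directly at $\tfrac{(1-p)n}{2}$ without a separate argument that $|\mathcal{S}_j|$ concentrates at $n/2$. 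Applying Hoeffding with $s = \sqrt{(n/2)\log(8T^*/\delta)}$ and union-bounding over $t$, over $j \in \{\pm 1\}$, and over the two signs gives Claim 2; the analogous computation shows $|\mathcal{S}_-^{(t)} \cap \mathcal{S}_j| \sim \mathrm{Binomial}\big(n, \tfrac{p}{2}\big)$ is centered at $\tfrac{pn}{2}$. The logarithmic constants $4T^*$ and $8T^*$ are calibrated so that the union bound over all $O(T^*)$ events closes with total failure probability at most $\delta$.

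Finally, the closed intervals follow by feeding the sample-size condition into the two-sided bound. The hypothesis $n \ge 8\log(8T^*/\delta)/p^2$ gives $s^2 = \tfrac{n}{2}\log(8T^*/\delta) \le \tfrac{n^2 p^2}{16}$, i.e. $s \le np/4$, and the chained inequality $\tfrac{8\log(8T^*/\delta)}{p^2} \ge \tfrac{8\log(8T^*/\delta)}{(1-p)^2}$ forces $p \le 1-p$, so that $np/4$ is indeed the binding deviation. Substituting $s \le np/4$ into $\tfrac{(1-p)n}{2} \pm s$ produces $\big[\tfrac{(2-3p)n}{4}, \tfrac{(2-p)n}{4}\big]$, and into $\tfrac{pn}{2} \pm s$ produces $\big[\tfrac{pn}{4}, \tfrac{3pn}{4}\big]$, as claimed. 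The only genuine subtlety in the whole argument is the joint-independence observation that pins the intersection means at $\tfrac{(1-p)n}{2}$ and $\tfrac{pn}{2}$; everything else is routine Hoeffding together with bookkeeping of the failure budget $\delta$ across the time-indexed union bound.
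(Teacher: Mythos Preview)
Your proposal is correct and follows essentially the same route as the paper: Hoeffding's inequality on the Bernoulli sums followed by a union bound over $t\le T^*$, with the final intervals obtained by plugging the sample-size condition into the deviation bound. You are in fact more explicit than the paper's proof (which simply writes ``Similarly, by the same argument'' for the intersections and omits the interval arithmetic), particularly in noting that $|\mathcal{S}_-^{(t)}|=n-|\mathcal{S}_+^{(t)}|$ and in identifying $|\mathcal{S}_+^{(t)}\cap\mathcal{S}_j|$ as $\mathrm{Binomial}\big(n,\tfrac{1-p}{2}\big)$ via the joint independence of $y_i$ and $\epsilon_i^{(t)}$.
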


\begin{proof} [Proof of Lemma \ref{lem_s_bound}]
By independence, we have $\mathbb{E} |\mathcal{S}_{ +}^{(t)}| = (1-p)n$ and $\mathbb{E} |\mathcal{S}_{ -}^{(t)}| = pn$. By Hoeffding's inequality, we have for arbitrary $\tau > 0$, 
\begin{equation*}
    \mathbb{P}\big(| |\mathcal{S}_{+}^{(t)}| - (1-p)n  | \geq \tau \big) \leq 2 \exp \big( - \frac{2\tau^2}{n} \big), \quad  \mathbb{P}\big(| |\mathcal{S}_{ -}^{(t)}| - pn  | \geq \tau \big) \leq 2 \exp \big( - \frac{2\tau^2}{n} \big).
\end{equation*}
Setting $\tau = \sqrt{(n/2) \log(4/\delta)}$ and taking the union bound over $[T^\ast]$ gives  
\begin{equation*}
    | |\mathcal{S}_{+}^{(t)}| - (1-p)n | \leq \sqrt{\frac{n}{2} \log \Big(\frac{4 T^\ast}{\delta} \Big)}, \quad || \mathcal{S}_{ -}^{(t)}| - pn| \leq \sqrt{\frac{n}{2} \log \Big(\frac{4T^\ast}{\delta} \Big)},
\end{equation*}
which holds with probability at least $1 - \delta$. 

Similarly, by the same argument, we can show the result for $|\mathcal{S}_+^{(t)} \cap \mathcal{S}_j| $ and $|\mathcal{S}_-^{(t)} \cap \mathcal{S}_j|$. 

Suppose $n \geq \frac{8\log(8T^*/\delta)}{p^2} \geq \frac{8\log(8T^*/\delta)}{(1-p)^2}$, then we have with probability at least $1- \delta$, we have $|\mathcal{S}_+^{(t)} \cap \mathcal{S}_j | \in \left[\frac{(2-3p)n}{4}, \frac{(2-p)n}{4} \right], \quad |\mathcal{S}_{-}^{(t)} \cap \mathcal{S}_j| \in \left[ \frac{pn}{4}, \frac{3pn}{4} \right]$.
\end{proof}

\begin{lemma}
\label{lem_Si_pm_hbound}
Let $\mathcal{S}_{i,\pm}^{(t)} \coloneqq \{ s \leq t : \epsilon_i^{(s)} = \pm1 \}$. Then for any $i \in [n]$, $t > 0$, with probability at least $1 -\delta$, 
\begin{enumerate} 
    \item $| |\mathcal{S}_{i, +}^{(t)}| - (1-p)t | \leq \sqrt{\frac{t}{2} \log(\frac{4n}{\delta})}$ and $ ||\mathcal{S}_{i, -}^{(t)}| - pt| \leq \sqrt{\frac{t}{2} \log(\frac{4n}{\delta})}$. \label{lem_si_1}
    \item In addition, suppose $t \geq \frac{2\log(4n/\delta)}{p^2}$, we have $|\mathcal{S}_{i,+}^{(t)}| \in [\frac{(2-3p)t}{2},  \frac{(2-p)t}{2}], |\mathcal{S}_{i, -}^{(t)}| \allowbreak \in [ \frac{pt}{2}, \frac{3pt}{2} ]$. \label{lem_si_2}
\end{enumerate}
\end{lemma}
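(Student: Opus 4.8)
The plan is to treat $|\mathcal{S}_{i,+}^{(t)}|$ as a sum of independent bounded indicators and apply Hoeffding's inequality, mirroring the proof of Lemma~\ref{lem_s_bound} but with the index set being the time steps $\{1,\dots,t\}$ rather than the samples. For a fixed $i$, I would write $|\mathcal{S}_{i,+}^{(t)}| = \sum_{s=1}^{t} \sone(\epsilon_i^{(s)} = 1)$, where the summands are independent Bernoulli variables with success probability $1-p$ because the flipping variables $\epsilon_i^{(s)} \sim \mathrm{Rademacher}(1-p,p)$ are drawn afresh and independently at every step of Algorithm~\ref{l_noise_sgd_algo}. Hence $\mathbb{E}|\mathcal{S}_{i,+}^{(t)}| = (1-p)t$, and symmetrically $\mathbb{E}|\mathcal{S}_{i,-}^{(t)}| = pt$.

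For Part~\ref{lem_si_1}, I would apply the two-sided Hoeffding bound to these $\{0,1\}$-valued summands, giving, for any $\tau > 0$,
\[
\mathbb{P}\big( \big| |\mathcal{S}_{i,+}^{(t)}| - (1-p)t \big| \geq \tau \big) \leq 2\exp\!\big(-2\tau^2/t\big).
\]
Setting $\tau = \sqrt{(t/2)\log(4n/\delta)}$ makes the right-hand side equal to $\delta/(2n)$, and a union bound over $i \in [n]$ controls the total failure probability by $\delta/2 \leq \delta$. The claim for $|\mathcal{S}_{i,-}^{(t)}|$ needs no separate argument: since $|\mathcal{S}_{i,+}^{(t)}| + |\mathcal{S}_{i,-}^{(t)}| = t$ and $(1-p)t + pt = t$, we have the exact identity $\big||\mathcal{S}_{i,-}^{(t)}| - pt\big| = \big||\mathcal{S}_{i,+}^{(t)}| - (1-p)t\big|$, so the same event yields both deviation bounds simultaneously.

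For Part~\ref{lem_si_2}, I would simply substitute the assumed lower bound on $t$ into the deviation $\tau$ from Part~\ref{lem_si_1}. The hypothesis $t \geq 2\log(4n/\delta)/p^2$ is equivalent to $\log(4n/\delta) \leq p^2 t/2$, whence
\[
\tau = \sqrt{\tfrac{t}{2}\log(4n/\delta)} \leq \sqrt{\tfrac{t}{2}\cdot\tfrac{p^2 t}{2}} = \frac{pt}{2}.
\]
Feeding $\tau \leq pt/2$ into the concentration intervals gives $|\mathcal{S}_{i,-}^{(t)}| \in [pt - pt/2,\, pt + pt/2] = [pt/2,\, 3pt/2]$ and $|\mathcal{S}_{i,+}^{(t)}| \in [(1-p)t - pt/2,\, (1-p)t + pt/2] = [(2-3p)t/2,\, (2-p)t/2]$, matching the stated intervals.

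I do not expect a substantial obstacle here, as the argument reuses the exact template of Lemma~\ref{lem_s_bound}. The only points requiring care are bookkeeping the union-bound constant so that the $\log(4n/\delta)$ threshold emerges cleanly, and noting the $+/-$ symmetry that collapses the two tail events into one, so that no extra factor of two is lost in the probability budget.
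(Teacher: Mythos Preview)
Your proposal is correct and follows essentially the same approach as the paper: Hoeffding's inequality applied to the sum of independent Bernoulli indicators over the time steps, followed by a union bound over $i\in[n]$, and then substituting the lower bound on $t$ to convert the deviation $\tau$ into $pt/2$ for Part~\ref{lem_si_2}. Your additional observation that $|\mathcal{S}_{i,+}^{(t)}|+|\mathcal{S}_{i,-}^{(t)}|=t$ collapses the two deviation events into one is a clean touch the paper does not make explicit, but it does not alter the argument.
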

\begin{proof} [Proof of Lemma \ref{lem_Si_pm_hbound}]
By independence, we have $\mathbb{E} |\mathcal{S}_{i, +}^{(t)}| = (1-p)t$ and $\mathbb{E} |\mathcal{S}_{i, -}^{(t)}| = pt$. By Hoeffding's inequality, we have for arbitrary $\tau > 0$, 
\begin{equation*}
    \mathbb{P}\big(| |\mathcal{S}_{i, +}^{(t)}| - (1-p)t  | \geq \tau \big) \leq 2 \exp \big( - \frac{2\tau^2}{t} \big), \quad  \mathbb{P}\big(| |\mathcal{S}_{i, -}^{(t)}| - pt  | \geq \tau \big) \leq 2 \exp \big( - \frac{2\tau^2}{t} \big).
\end{equation*}
Setting $\tau = \sqrt{(t/2) \log(4/\delta)}$ and taking the union bound gives  
\begin{equation*}
    | |\mathcal{S}_{i, +}^{(t)}| - (1-p)t | \leq \sqrt{\frac{t}{2} \log \Big(\frac{4n}{\delta} \Big)}, \quad || \mathcal{S}_{i, -}^{(t)}| - pt| \leq \sqrt{\frac{t}{2} \log \Big(\frac{4n}{\delta} \Big)},
\end{equation*}
which holds with probability at least $1 - \delta$. 

Suppose $t \geq \frac{2\log(4n/\delta)}{p^2} \geq \frac{2\log(4n/\delta)}{(1-p)^2}$, then we have with probability at least $1- \delta$, we have $|\mathcal{S}_{i,+}^{(t)}| \in [\frac{(2-3p)t}{2},  \frac{(2-p)t}{2}], |\mathcal{S}_{i, -}^{(t)}| \allowbreak \in [ \frac{pt}{2}, \frac{3pt}{2} ]$.
\end{proof}

\section{Proof of Proposition \ref{prop:upper_entire}} \label{sec:prop}

In this section, we provide a proof for Proposition \ref{prop:upper_entire}, which establishes upper bounds for the absolute values of the signal learning and noise memorization coefficients throughout the entire training stage. Additionally, we present some preliminary lemmas that will be used in the proof of Proposition \ref{prop:upper_entire} as well as in other results in the subsequent sections.

\begin{lemma}
\label{lemma:mu_xi_inner_bound}
Suppose that inequalities (\ref{eq:gamma_upper}) and (\ref{eq:rho_upper}) hold for all $r \in [m]$, $j \in \{-1,1\}$, $i \in [n]$ and $t \in [0,T^\ast]$. For any $\delta > 0$, with probability at least $1-\delta$, it holds that
\begin{align*}
     &|  \langle \mathbf{w}^{(t)}_{j,r} - \mathbf{w}^{(0)}_{j,r} , \boldsymbol{\xi}_i \rangle - \rho^{(t)}_{j,r,i} | \le  8 \sqrt{\frac{\log(4n^2/\delta)}{d}} n \alpha, \\
     &|\langle \mathbf{w}^{(t)}_{j,r} -  \mathbf{w}^{(0)}_{j,r}, j \boldsymbol{\mu} \rangle -  \gamma^{(t)}_{j,r}| = 0.
    \end{align*}
\end{lemma}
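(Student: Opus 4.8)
The plan is to substitute the signal-noise decomposition (\ref{eq:w_decomposition}) directly into each inner product and exploit the orthogonality built into the data model. Writing $\mathbf{w}^{(t)}_{j,r} - \mathbf{w}^{(0)}_{j,r} = j\gamma^{(t)}_{j,r}\|\boldsymbol{\mu}\|^{-2}_2\boldsymbol{\mu} + \sum_{i'=1}^n \rho^{(t)}_{j,r,i'}\|\boldsymbol{\xi}_{i'}\|^{-2}_2\boldsymbol{\xi}_{i'}$, I would first treat the signal direction. Pairing with $j\boldsymbol{\mu}$ and using $j^2=1$ together with $\langle\boldsymbol{\mu},\boldsymbol{\mu}\rangle = \|\boldsymbol{\mu}\|^2_2$, the signal term contributes exactly $\gamma^{(t)}_{j,r}$, while every noise term contributes $\langle\boldsymbol{\xi}_{i'},\boldsymbol{\mu}\rangle = 0$: the noise covariance $\sigma_p^2(\mathbf{I}_d - \boldsymbol{\mu}\boldsymbol{\mu}^\top\|\boldsymbol{\mu}\|^{-2}_2)$ projects out the $\boldsymbol{\mu}$ direction, so $\boldsymbol{\xi}_{i'} \perp \boldsymbol{\mu}$ with probability one. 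This yields the second identity with exactly zero error, and notably requires no concentration argument.

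For the first inequality I would pair the same decomposition with $\boldsymbol{\xi}_i$. The signal term again vanishes because $\langle\boldsymbol{\mu},\boldsymbol{\xi}_i\rangle=0$, and splitting the noise sum into the diagonal index $i'=i$ and the off-diagonal indices gives $\langle\mathbf{w}^{(t)}_{j,r}-\mathbf{w}^{(0)}_{j,r},\boldsymbol{\xi}_i\rangle - \rho^{(t)}_{j,r,i} = \sum_{i'\neq i}\rho^{(t)}_{j,r,i'}\|\boldsymbol{\xi}_{i'}\|^{-2}_2\langle\boldsymbol{\xi}_{i'},\boldsymbol{\xi}_i\rangle$, since the $i'=i$ term self-normalizes to exactly $\rho^{(t)}_{j,r,i}$. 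The remaining task is to bound this off-diagonal residual.

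To control the residual I would apply the triangle inequality and bound each factor on the high-probability event of Lemma \ref{lem_xi_bound}. The hypotheses (\ref{eq:gamma_upper})--(\ref{eq:rho_upper}) give $|\rho^{(t)}_{j,r,i'}| \le \alpha$ for every $i'$, since for fixed $(j,r,i')$ exactly one of $\overline{\rho}^{(t)}_{j,r,i'}, \underline{\rho}^{(t)}_{j,r,i'}$ is nonzero and each is bounded by $\alpha$ in magnitude; Lemma \ref{lem_xi_bound} gives $\|\boldsymbol{\xi}_{i'}\|^{-2}_2 \le 2/(\sigma_p^2 d)$ and $|\langle\boldsymbol{\xi}_{i'},\boldsymbol{\xi}_i\rangle| \le 2\sigma_p^2\sqrt{d\log(4n^2/\delta)}$. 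Multiplying the three factors yields a per-term bound of $4\alpha\sqrt{\log(4n^2/\delta)/d}$, and summing over the at most $n$ off-diagonal indices produces $4n\alpha\sqrt{\log(4n^2/\delta)/d}$, which is dominated by the claimed $8n\alpha\sqrt{\log(4n^2/\delta)/d}$.

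The argument is essentially bookkeeping once the two structural facts — exact orthogonality of noise to signal and the self-normalization of the diagonal term — are isolated, so I do not expect a genuine obstacle. The only probabilistic content is inherited from Lemma \ref{lem_xi_bound}, which is where both the $1-\delta$ guarantee and the $\sqrt{\log(4n^2/\delta)/d}$ scaling originate. The mildest subtlety, and the step I would state carefully, is the uniform coefficient bound $|\rho^{(t)}_{j,r,i'}| \le \alpha$, which must be read off correctly from the signed bounds (\ref{eq:gamma_upper})--(\ref{eq:rho_upper}) rather than assumed outright.
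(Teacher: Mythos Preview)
Your proposal is correct and follows essentially the same approach as the paper: substitute the decomposition (\ref{eq:w_decomposition}), use the orthogonality $\langle\boldsymbol{\xi}_{i'},\boldsymbol{\mu}\rangle=0$ from the noise covariance to kill the cross terms, and bound the off-diagonal residual $\sum_{i'\neq i}\rho^{(t)}_{j,r,i'}\|\boldsymbol{\xi}_{i'}\|^{-2}_2\langle\boldsymbol{\xi}_{i'},\boldsymbol{\xi}_i\rangle$ via Lemma~\ref{lem_xi_bound} together with $|\rho^{(t)}_{j,r,i'}|\le\alpha$. Your explicit justification of the uniform bound $|\rho^{(t)}_{j,r,i'}|\le\alpha$ from the signed hypotheses (\ref{eq:gamma_upper})--(\ref{eq:rho_upper}) is in fact more careful than the paper's own proof, which simply cites those inequalities.
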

\begin{proof}[Proof of Lemma \ref{lemma:mu_xi_inner_bound}]
From the signal-noise decomposition of $\mathbf{w}^{(t)}_{j,r}$, we have
 \begin{align*}
    |\langle  \mathbf{w}^{(t)}_{j,r} - \mathbf{w}^{(0)}_{j,r}, \boldsymbol{\xi}_i \rangle - \rho_{j,r,i}^{(t)}| & \overset{(a)} = |j\gamma^{(t)}_{j,r} \langle \boldsymbol{\mu}, \boldsymbol{\xi}_i \rangle  \| \boldsymbol{\mu} \|^{-2}_2+ \sum_{i' \neq i} \rho^{(t)}_{j,r,i} \langle \boldsymbol{\xi}_{i'}, \boldsymbol{\xi}_i \rangle \| \boldsymbol{\xi}_{i'} \|^{-2}_2|  \\
    & \overset{(b)} \le   8 \sqrt{\frac{\log(4n^2/\delta)}{d} } n\alpha, 
 \end{align*}
where (a) follows from the weight decomposition, and inequality (b) is due to Lemma \ref{lem_xi_bound} and the upper bound of $\rho^{(t)}_{j,r,i}$ based on inequalities (\ref{eq:gamma_upper}) and (\ref{eq:rho_upper}). 

Next, for the projection of the weight difference onto the signal vector, we have:
 \begin{align*}
    |\langle \mathbf{w}^{(t)}_{j,r} -  \mathbf{w}^{(0)}_{j,r}, j \boldsymbol{\mu} \rangle -  \gamma^{(t)}_{j,r}| &= | \sum_{i=1}^n \rho_{j,r,i}^{(t)} \| \boldsymbol{\xi}_i \|^{-2}_2 \langle \boldsymbol{\xi}_i , \boldsymbol{\mu} \rangle| = 0, 
\end{align*}
where the equality holds because $\langle \boldsymbol{\xi}_i , \boldsymbol{\mu} \rangle = 0 $ for $i \in [n]$ due to the covariance property of the noise vector distribution. 
\end{proof}

With Lemma \ref{lemma:mu_xi_inner_bound} in place, we are now prepared to prove Proposition \ref{prop:upper_entire}. The general proof strategy follows the approach outlined in \cite{cao2022benign}. However, we present a complete proof here for the sake of clarity and to provide a unified analysis for both gradient descent and label noise GD.

\begin{proof} [Proof of Proposition \ref{prop:upper_entire}]
   The proof uses induction and covers both gradient descent and label noise gradient descent.
 
 At $t = 0$, it is straightforward that the results hold for all coefficients, as they are initialized to zero. Now, assume that there exists a time step $\hat{T}$ such that for $ t \in [1, \hat T]$ the following inequalities hold:
 \begin{align*}
    0  & \leq \gamma_{j,r}^{(t)}\leq \alpha, \quad  0 \leq \overline{\rho}^{(t)}_{j,r,i}  \leq \alpha,  \\
   0 & \geq \underline{\rho}^{(t)}_{j,r,i} \geq - \beta - 16 \sqrt{\frac{\log(4n^2/\delta)}{d}} n \alpha \geq - \alpha.  
\end{align*}   
To complete the induction, we need to show that the above inequalities hold for $t = \hat{T} + 1$. First, we examine $\underline{\rho}^{(\hat T +1)}_{j,r,i}$ for $ j = -y_i$, since $\underline{\rho}^{(\hat T +1)}_{j,r,i} = 0 $ when $ j = y_i$ by definition. Using Lemma \ref{lemma:mu_xi_inner_bound}, if $\underline{\rho}^{(\hat T)}_{j,r,i} \le - 0.5 \beta -8  \sqrt{\frac{\log(4n^2/\delta)}{d}} n\alpha  $, we have 
\begin{align*}
    \langle \mathbf{w}_{j,r}^{(\hat T)}, \boldsymbol{\xi}_i \rangle \le \underline{\rho}^{(\hat T)}_{j,r,i} + 8 \sqrt{\frac{\log(4n^2/\delta)}{d}} n \alpha + \langle \mathbf{w}_{j,r}^{(0)}, \boldsymbol{\xi}_i \rangle \le 0.
\end{align*}
Thus, 
\begin{align*}
     \underline{\rho}^{(\hat T+1)}_{j,r,i} &= \underline{\rho}^{(\hat T)}_{j,r,i} + \frac{\eta}{nm}  {\ell}^{'(\hat T)}_i \sigma'(\langle \mathbf{w}_{j,r}^{(\hat T)}, \boldsymbol{\xi}_i \rangle) \| \boldsymbol{\xi}_i \|_2^2  \epsilon_i^{(\hat T)} \\
     & = \underline{\rho}^{(\hat T)}_{j,r,i}  \ge -\beta -16  \sqrt{\frac{\log(4n^2/\delta)}{d}} n \alpha,
\end{align*}
where we have used $\sigma'(\langle \mathbf{w}_{j,r}^{(\hat T)}, \boldsymbol{\xi}_i \rangle) = 0 $.
On the other hand, if $\underline{\rho}^{(\hat T)}_{j,r,i} \ge - 0.5 \beta -8  \sqrt{\frac{\log(4n^2/\delta)}{d}} n \alpha  $, the update function implies:
\begin{align*}
    \underline{\rho}^{(\hat T+1)}_{j,r,i} & \overset{(a)} \ge \underline{\rho}^{(\hat T)}_{j,r,i} + \frac{\eta}{nm}  {\ell}^{'(\hat T)}_i \langle \mathbf{w}_{j,r}^{(\hat T)}, \boldsymbol{\xi}_i \rangle \| \boldsymbol{\xi}_i \|_2^2  \\
    & \overset{(b)} \ge - 0.5 \beta -8  \sqrt{\frac{\log(4n^2/\delta)}{d}} n\alpha - \frac{3\eta \sigma^2_p d}{2nm} (0.5\beta + 8 \sqrt{\frac{\log(4n^2/\delta)}{d}} n \alpha ) \\
    &  \overset{(c)} \ge  -  \beta -16  \sqrt{\frac{\log(4n^2/\delta)}{d}} n \alpha,
\end{align*}
where (a) is due to choosing $\epsilon_i^{(\hat T)} = 1 $ and $\langle \mathbf{w}_{j,r}^{(\hat T)}, \boldsymbol{\xi}_i \rangle > 0 $, follows from Lemma \ref{lem_xi_bound}, and (c) holds when $\eta \le \frac{2nm}{3\sigma^2_p d}$. 

Next, consider $\overline{\rho}^{(\hat T+1)}_{j,r,i}$ for $ j = y_i$.
Let $\hat T_1$ to be the last time that $\overline{\rho}^{(t)}_{j,r,i} \le 0.5 \alpha$. By propagation, we have:
\begin{align*}
      \overline{\rho}^{(\hat T+1)}_{j,r,i} &= \overline{\rho}^{(\hat T_1)}_{j,r,i} - \frac{\eta}{nm}  {\ell}^{'(\hat T_1)}_i \sigma'(\langle \mathbf{w}_{j,r}^{(\hat T_1)}, \boldsymbol{\xi}_i \rangle) \| \boldsymbol{\xi}_i \|_2^2  \epsilon_i^{(\hat T_1)}  - \sum_{\hat T_1 <t \le \hat T} \frac{\eta}{nm}  {\ell}^{'(t)}_i \sigma'(\langle \mathbf{w}_{j,r}^{(t)}, \boldsymbol{\xi}_i \rangle) \| \boldsymbol{\xi}_i \|_2^2  \epsilon_i^{(t)} \\
     & \overset{(a)} \le 0.5 \alpha + \frac{\eta}{nm} \langle \mathbf{w}_{j,r}^{(\hat T_1)}, \boldsymbol{\xi}_i \rangle  \| \boldsymbol{\xi}_i \|_2^2 + \sum_{\hat T_1 <t \le \hat T} \frac{\eta}{nm}  {\ell}^{'(t)}_i \langle \mathbf{w}_{j,r}^{(t)}, \boldsymbol{\xi}_i \rangle  \| \boldsymbol{\xi}_i \|_2^2 \\
     & \overset{(b)} \le 0.5 \alpha + \frac{3\eta \sigma^2_pd}{2nm}( 0.5\alpha + \beta +16 \sqrt{\frac{\log(4n^2/\delta)}{d}} n \alpha ) \\
      & \quad + \sum_{\hat T_1 <t \le \hat T} \exp(-4\alpha^2 +1 ) \frac{3\eta \sigma^2_pd}{2nm}( \alpha + \beta +16 \sqrt{\frac{\log(4n^2/\delta)}{d}} n \alpha ) \\
     & \overset{(c)} \le 0.5 \alpha + 0.25 \alpha + 0.25 \alpha = \alpha,
\end{align*}
where (a) holds since $ {\ell}^{'(\hat T_1)}_i \ge -1 $ and $\epsilon_i^{(t)} \le 1$ for all $t \in [\hat T_1, \hat T]$, (b) is by Lemma \ref{lem_xi_bound}, Lemma \ref{lemma:mu_xi_inner_bound}, and $ -\tilde{\ell}^{'(t)}_i  \le \exp(-F_{y_i} +1 ) \le \exp(-4\alpha^2 +1 ) $. Here we have used that $\beta + 16 \sqrt{\frac{\log(4n^2/\delta)}{d}} n \alpha  \le 2\alpha$ with the condition that $ d = \tilde{\Omega}(n^2) $ and $\sigma_0 \le \tilde{O}(1) \min \{ \| \boldsymbol{\mu} \|^{-1}_2, \sigma^{-1}_p d^{-1/2} \}$. The final inequality (c) holds because $\eta = O (\frac{nm}{\sigma^2_pd})$ and $\exp(-4\alpha^2 +1 ) \alpha < 1$ with $\alpha = 4 \log(T^\ast)$.

Similarly, we can prove that $\gamma^{(\hat T +1)}_{j,r} \le \alpha $ using $\eta = O(\frac{nm}{\| \boldsymbol{\mu} \|^2_2})$, which completes the induction proof.
\end{proof}

\section{Standard GD Fails to Generalize with low SNR}

\subsection{Proof of Lemma \ref{lem:gd_stage1}}

In this section, we provide a proof for the result obtained in the first stage of gradient descent training. Several preliminary lemmas are established to facilitate the analysis.

\begin{lemma} [Upper bound on $\gamma_{j,r}^{(t)}$] \label{lem:uppergamma_stage1}
 Under Assumption \ref{ass:main}, in the first stage, where $ 0 \le t \le T_1 = \frac{nm \log(1/(\sigma_0 \sigma_p \sqrt{d}))}{\eta  \sigma^2_p d} $, there exists an upper bound for  $\gamma^{(t)}_{j,r}$, for all $j \in \{-1,1 \}, r \in [m]$:
 \begin{align*}
     \gamma_{j,r}^{(t)} + |\langle \mathbf{w}_{j,r}^{(0)}, \boldsymbol{\mu} \rangle| \leq \exp\big( \frac{2\eta \| \boldsymbol{\mu}\|^2_2}{m}  t \big) |\langle \mathbf{w}_{j,r}^{(0)}, \boldsymbol{\mu} \rangle|. 
 \end{align*} 
\end{lemma}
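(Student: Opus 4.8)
The plan is to convert the additive update for the signal coefficient in~\eqref{eq:gamma_update} into a multiplicative (Gr\"onwall-type) recursion and then iterate it. Since we are analyzing standard GD, I set $\epsilon_i^{(t)} = 1$ for all $t$ and $i$. Two elementary facts drive the argument: for the squared ReLU we have $\sigma'(z) = 2\,\mathrm{ReLU}(z)$, and the logistic-loss derivative satisfies $|\elldi| = (1+\exp(\cdot))^{-1} \le 1$. Because $-\elldi \ge 0$ and $\sigma'(\cdot) \ge 0$, every summand in~\eqref{eq:gamma_update} is nonnegative, so $\gamma^{(t)}_{j,r}$ is nondecreasing; in particular $\gamma^{(t)}_{j,r} \ge 0$, consistent with Proposition~\ref{prop:upper_entire}.

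First I would invoke the exact identity from Lemma~\ref{lemma:mu_xi_inner_bound}, namely $\langle \mathbf{w}^{(t)}_{j,r} - \mathbf{w}^{(0)}_{j,r}, j\boldsymbol{\mu}\rangle = \gamma^{(t)}_{j,r}$ (the decomposition error vanishes because $\langle \boldsymbol{\xi}_i,\boldsymbol{\mu}\rangle = 0$), giving $j\langle \mathbf{w}^{(t)}_{j,r},\boldsymbol{\mu}\rangle = j\langle \mathbf{w}^{(0)}_{j,r},\boldsymbol{\mu}\rangle + \gamma^{(t)}_{j,r}$. Writing $b = \langle \mathbf{w}^{(0)}_{j,r},\boldsymbol{\mu}\rangle$, I would then observe $y_i\langle \mathbf{w}^{(t)}_{j,r},\boldsymbol{\mu}\rangle = (y_i j)(jb + \gamma^{(t)}_{j,r})$, so that the activation factor equals $\sigma'(\langle \mathbf{w}^{(t)}_{j,r}, y_i\boldsymbol{\mu}\rangle) = 2\,\mathrm{ReLU}\!\big(\pm(jb + \gamma^{(t)}_{j,r})\big) \le 2\,|jb + \gamma^{(t)}_{j,r}| \le 2\big(|b| + \gamma^{(t)}_{j,r}\big)$, where the sign is $+$ when $y_i = j$ and $-$ when $y_i = -j$, and the final step uses $\gamma^{(t)}_{j,r} \ge 0$ together with $|jb| = |b|$.

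Combining this with $|\elldi| \le 1$ and the averaging factor $\tfrac{1}{nm}\sum_{i=1}^n$ (the sum over $n$ samples cancels the $1/n$), I obtain the scalar recursion $\gamma^{(t+1)}_{j,r} \le \gamma^{(t)}_{j,r} + \tfrac{2\eta\|\boldsymbol{\mu}\|_2^2}{m}\big(|b| + \gamma^{(t)}_{j,r}\big)$. Adding $|b|$ to both sides turns this into the clean multiplicative form $\gamma^{(t+1)}_{j,r} + |b| \le \big(1 + \tfrac{2\eta\|\boldsymbol{\mu}\|_2^2}{m}\big)\big(\gamma^{(t)}_{j,r} + |b|\big)$. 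Iterating from $\gamma^{(0)}_{j,r} = 0$ and using $1 + x \le e^x$ then yields $\gamma^{(t)}_{j,r} + |b| \le |b|\big(1 + \tfrac{2\eta\|\boldsymbol{\mu}\|_2^2}{m}\big)^t \le |b|\exp\!\big(\tfrac{2\eta\|\boldsymbol{\mu}\|_2^2}{m}\,t\big)$, which is exactly the stated bound.

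There is no deep obstacle here: the result is fundamentally a geometric-growth estimate, and the key subtleties are merely (a) reducing the ReLU argument to $|b| + \gamma^{(t)}_{j,r}$ uniformly over both label cases $y_i = \pm j$ via the \emph{error-free} signal projection of Lemma~\ref{lemma:mu_xi_inner_bound}, and (b) confirming that the uniform bound $|\elldi| \le 1$ is precisely what collapses the per-sample average so the prefactor is exactly $2\eta\|\boldsymbol{\mu}\|_2^2/m$ with no residual $n$-dependence. Note the bound in fact holds for every $t \le T^\ast$; the restriction $t \le T_1$ in the statement simply marks the first stage in which it is subsequently applied.
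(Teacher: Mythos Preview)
Your proposal is correct and follows essentially the same approach as the paper: both arguments use $|\elldi|\le 1$ together with the error-free signal projection from Lemma~\ref{lemma:mu_xi_inner_bound} to bound $\sigma'(\langle \mathbf{w}_{j,r}^{(t)},y_i\boldsymbol{\mu}\rangle)\le 2(|\langle \mathbf{w}_{j,r}^{(0)},\boldsymbol{\mu}\rangle|+\gamma_{j,r}^{(t)})$, then define $A^{(t)}=\gamma_{j,r}^{(t)}+|\langle \mathbf{w}_{j,r}^{(0)},\boldsymbol{\mu}\rangle|$ and iterate the multiplicative recursion via $1+x\le e^x$. Your write-up is slightly more explicit about the sign analysis and the use of $\gamma_{j,r}^{(t)}\ge 0$, but the logic is identical.
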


\begin{proof} [Proof of Lemma \ref{lem:uppergamma_stage1}]
By the iterative update rule of signal learning, we have:
\begin{align*}
    \gamma_{j,r}^{(t+1)} & \overset{(a)} \leq \gamma_{j,r}^{(t)} + \frac{\eta}{nm} \sum_{i=1}^n \sigma'(\langle \mathbf{w}_{j,r}^{(t)}, y_i \boldsymbol{\mu} \rangle) \| \boldsymbol{\mu} \|^2_2  \\
    & \overset{(b)} =  \gamma_{j,r}^{(t)} + \frac{\eta}{nm} \sum_{i=1}^n  \sigma'( y_i \langle  \mathbf{w}_{j,r}^{(0)}, \boldsymbol{\mu} \rangle + j y_i \gamma_{j,r}^{(t)}) \| \boldsymbol{\mu} \|^2_2 \\
    & \overset{(c)} \leq \gamma_{j,r}^{(t)} + \frac{2 \eta}{m} (\gamma_{j,r}^{(t)} + |\langle\mathbf{w}_{j,r}^{(0)} , \boldsymbol{\mu}\rangle|) \| \boldsymbol{\mu}\|^2_2.
\end{align*}
where (a) follows from $|{\ell}^{'(t)}_i| \le 1 $, (b) is derived using Lemma \ref{lemma:mu_xi_inner_bound}, and (c)  is due to the properties of the squared ReLU activation function.

Define $A^{(t)} \coloneqq \gamma_{j,r}^{(t)} + |\langle \mathbf{w}_{j,r}^{(0)}, \boldsymbol{\mu} \rangle|$. Then, we have: 
\begin{align*}
    A^{(t+1)} \leq \big(1 + \frac{2\eta \|\boldsymbol{\mu} \|^2_2}{m} \big)A^{(t)} \leq \big( 1 + \frac{2\eta \|\boldsymbol{\mu} \|^2_2}{m} \big)^{(t)} A^{(0)} \leq \exp \big( \frac{2\eta \| \boldsymbol{\mu}\|^2_2}{m} t \big) A^{(0)},
\end{align*}
where we use $1 + x \leq \exp(x)$. This suggests:
\begin{equation*}
    \gamma_{j,r}^{(t)} + |\langle \mathbf{w}_{j,r}^{(0)}, \boldsymbol{\mu} \rangle| \leq \exp\big( \frac{2\eta \| \boldsymbol{\mu}\|^2_2}{m}  t \big) |\langle \mathbf{w}_{j,r}^{(0)}, \boldsymbol{\mu} \rangle|. 
\end{equation*}
\end{proof}

\begin{lemma} [Upper bound on $\underline{\rho}_{j,r,i}^{(t)}$] \label{lem:upperurho_stage1}
  Under Assumption \ref{ass:main}, in the first stage, where $ 0 \le t \le T_1 = \frac{nm \log(1/(\sigma_0 \sigma_p \sqrt{d}))}{\eta  \sigma^2_p d} $, there exists an upper bound for $|\underline{\rho}^{(t)}_{j,r,i}|$, for all $j, r, i$:
 \begin{align*}
     |\underline{\rho}_{j,r,i}^{(t)}| = \tilde{O}(\sigma_0 \sigma_p \sqrt{d}). 
 \end{align*} 
\end{lemma}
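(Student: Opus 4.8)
The plan is to exploit the fact that for standard GD we fix $\epsilon_i^{(t)} = 1$, so the update rule (\ref{eq:rho2_update}) for the negative noise coefficient reduces to $\underline{\rho}^{(t+1)}_{j,r,i} = \underline{\rho}^{(t)}_{j,r,i} + \frac{\eta}{nm}\tilde{\ell}^{'(t)}_i \sigma'(\langle \mathbf{w}^{(t)}_{j,r},\boldsymbol{\xi}_i\rangle)\|\boldsymbol{\xi}_i\|_2^2 \sone(y_i = -j)$. Since $\ell'(z) = -(1+e^{z})^{-1} \in [-1,0]$ and $\sigma' \ge 0$, every increment is non-positive; starting from $\underline{\rho}^{(0)}_{j,r,i} = 0$ this immediately yields $\underline{\rho}^{(t)}_{j,r,i} \le 0$ for all $t$. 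Hence it suffices to upper bound the accumulated decrease $-\underline{\rho}^{(t)}_{j,r,i} = \sum_{s < t}\frac{\eta}{nm}|\tilde{\ell}^{'(s)}_i|\,\sigma'(\langle\mathbf{w}^{(s)}_{j,r},\boldsymbol{\xi}_i\rangle)\,\|\boldsymbol{\xi}_i\|_2^2$, where the relevant index is $j = -y_i$.

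The key step is to control the activation argument $\langle \mathbf{w}^{(s)}_{j,r},\boldsymbol{\xi}_i\rangle$ uniformly in $s$. By Lemma \ref{lemma:mu_xi_inner_bound} we have $\langle\mathbf{w}^{(s)}_{j,r},\boldsymbol{\xi}_i\rangle \le \langle\mathbf{w}^{(0)}_{j,r},\boldsymbol{\xi}_i\rangle + \rho^{(s)}_{j,r,i} + 8\sqrt{\log(4n^2/\delta)/d}\,n\alpha$; since $j = -y_i$ gives $\rho^{(s)}_{j,r,i} = \underline{\rho}^{(s)}_{j,r,i} \le 0$, and Lemma \ref{lem_inner_bound} gives $\langle\mathbf{w}^{(0)}_{j,r},\boldsymbol{\xi}_i\rangle \le 2\sqrt{\log(8mn/\delta)}\,\sigma_0\sigma_p\sqrt{d}$, the squared-ReLU derivative $\sigma'(\langle\mathbf{w}^{(s)}_{j,r},\boldsymbol{\xi}_i\rangle) = 2\max\{0,\langle\mathbf{w}^{(s)}_{j,r},\boldsymbol{\xi}_i\rangle\}$ is bounded by $\tilde{O}(\sigma_0\sigma_p\sqrt{d})$ uniformly in $s$. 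Crucially, the cross term $8\sqrt{\log(4n^2/\delta)/d}\,n\alpha$ is absorbed into $\tilde{O}(\sigma_0\sigma_p\sqrt{d})$ using $d = \tilde{\Omega}(n^2)$ together with the initialization lower bound $\sigma_0 \ge \tilde{\Omega}(n\sigma_p^{-1}d^{-3/4})$ from Assumption \ref{ass:main}(iv).

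Combining this uniform bound with $|\tilde{\ell}^{'(s)}_i| \le 1$ and the norm estimate $\|\boldsymbol{\xi}_i\|_2^2 \le \tfrac{3}{2}\sigma_p^2 d$ from Lemma \ref{lem_xi_bound}, each increment has magnitude at most $\frac{3\eta\sigma_p^2 d}{nm}\cdot\tilde{O}(\sigma_0\sigma_p\sqrt{d})$. Summing over $t \le T_1$ steps gives $|\underline{\rho}^{(t)}_{j,r,i}| \le \frac{3\eta\sigma_p^2 t d}{nm}\,\tilde{O}(\sigma_0\sigma_p\sqrt{d})$, and substituting $T_1 = \frac{nm\log(1/(\sigma_0\sigma_p\sqrt{d}))}{\eta\sigma_p^2 d}$ collapses the prefactor to $\frac{3\eta\sigma_p^2 T_1 d}{nm} = 3\log(1/(\sigma_0\sigma_p\sqrt{d})) = \tilde{O}(1)$, delivering $|\underline{\rho}^{(t)}_{j,r,i}| = \tilde{O}(\sigma_0\sigma_p\sqrt{d})$. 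A union bound over $j,r,i$ and over the high-probability events of Lemmas \ref{lem_xi_bound}, \ref{lem_inner_bound}, and \ref{lemma:mu_xi_inner_bound} then yields the claim for the maximum.

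The only genuinely delicate point is verifying that the cross term $8\sqrt{\log(4n^2/\delta)/d}\,n\alpha$ is of order $\tilde{O}(\sigma_0\sigma_p\sqrt{d})$ (in fact much smaller), which is precisely where the high-dimensional assumption and the initialization lower bound are invoked. Everything else is a direct unrolling of the recursion: unlike $\overline{\rho}^{(t)}_{j,r,i}$, the coefficient $\underline{\rho}^{(t)}_{j,r,i}$ never enters a self-reinforcing exponential growth, because its driving activation argument only \emph{decreases} as $\underline{\rho}$ becomes more negative, so no two-regime or fixed-point analysis is needed.
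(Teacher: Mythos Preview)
Your proposal is correct and follows essentially the same approach as the paper's own proof: both unroll the recursion for $\underline{\rho}^{(t)}_{j,r,i}$, use $|\tilde{\ell}'_i|\le 1$, bound the activation argument via Lemma~\ref{lemma:mu_xi_inner_bound} together with $\underline{\rho}^{(s)}_{j,r,i}\le 0$ and Lemma~\ref{lem_inner_bound}, and then telescope over $t\le T_1$. Your treatment is in fact slightly more careful, since you explicitly justify why the cross term $8\sqrt{\log(4n^2/\delta)/d}\,n\alpha$ is dominated by $\tilde{O}(\sigma_0\sigma_p\sqrt{d})$ via the lower bound on $\sigma_0$ in Assumption~\ref{ass:main}(iv), whereas the paper's step~(c) absorbs this term without an explicit check.
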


\begin{proof} [Proof of Lemma \ref{lem:upperurho_stage1}] The proof uses the induction method. By the iterative update rule for noise memorization, we have:
\begin{align*}
    |\underline{\rho}_{j,r,i}^{(t+1)}| & \overset{(a)}  \leq |\underline{\rho}_{j,r,i}^{(t)}| + \frac{\eta}{nm}   \sigma'(\langle \mathbf{w}_{j,r}^{(t)},   \boldsymbol{\xi}_i \rangle) \| \boldsymbol{\xi}_i \|^2_2  \\
    &\overset{(b)}  \leq |\underline{\rho}_{j,r,i}^{(t)}| + \frac{3\eta \sigma^2_p d}{2nm}   \sigma'( \langle  \mathbf{w}_{j,r}^{(0)}, \boldsymbol{\xi}_i \rangle + 16 \sqrt{\frac{\log(4n^2/\delta)}{d}} n \alpha +  \underline{\rho}_{j,r,i }^{(t)})  \\
    &\overset{(c)}  \leq |\underline{\rho}_{j,r,i}^{(t)}| + \frac{3\eta \sigma^2_p d}{nm} \sqrt{\log(8mn/\delta)} \sigma_0 \sigma_p \sqrt{d},
\end{align*}
where the inequality (a) is by the upper bound on $| {\ell}^{'(t)}_i| \le 1$; Inequality (b) is derived using Proposition \ref{prop:upper_entire}, Lemma \ref{lem_xi_bound}, and Lemma \ref{lemma:mu_xi_inner_bound}. Finally, the inequality (c) uses the fact that $\underline{\rho}_{j,r,i}^{(t)} < 0 $ and Lemma \ref{lem_inner_bound}.

Taking a telescoping sum over $t$ form $0$ to $T_1$, we obtain:
\begin{align*}
    |\underline{\rho}_{j,r,i}^{(T_1)}| &   \leq \frac{3\eta \sigma^2_p d T_1}{nm} \sqrt{\log(8mn/\delta)} \sigma_0 \sigma_p \sqrt{d} = \tilde{O}(\sigma_0 \sigma_p \sqrt{d}),
\end{align*}
where we substituted $T_1 = \Theta(\frac{nm \log(1/(\sigma_0 \sigma_p \sqrt{d}))}{\eta  \sigma^2_p d}) $, thereby completing the proof.
\end{proof}

\begin{lemma} 
\label{lem_betabar_bound}
Let $\bar{\beta} = \min_{i \in [n]} \max_{r \in [m]} \langle \mathbf{w}^{(0)}_{y_i, r}, \boldsymbol{\xi}_i \rangle$. Suppose that $\sigma_0 \geq 160n   \sqrt{\frac{\log(4n^2/\delta)}{d}} (\sigma_p \sqrt{d})^{-1} \alpha $. Then it holds that
   $ \bar \beta  \geq   40n \sqrt{\frac{\log(4n^2/\delta)}{d}} \alpha$.
\end{lemma}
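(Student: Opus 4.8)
The plan is to reduce the claim directly to the anti-concentration estimate already supplied by Lemma~\ref{lem_inner_bound}, observing that the hypothesis on $\sigma_0$ is calibrated precisely so that the resulting bound matches the target constant. First I would record the lower anti-concentration estimate at initialization: for each fixed $i \in [n]$, conditioned on $\boldsymbol{\xi}_i$, the $m$ inner products $\langle \mathbf{w}^{(0)}_{y_i, r}, \boldsymbol{\xi}_i\rangle$ for $r \in [m]$ are i.i.d. centered Gaussians with variance $\sigma_0^2\|\boldsymbol{\xi}_i\|_2^2 \geq \sigma_0^2 \sigma_p^2 d/2$, using the norm bound from Lemma~\ref{lem_xi_bound}. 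The Gaussian-maximum anti-concentration argument underlying Lemma~\ref{lem_inner_bound} then yields $\max_{r\in[m]} \langle \mathbf{w}^{(0)}_{y_i,r}, \boldsymbol{\xi}_i\rangle \geq \sigma_0\sigma_p\sqrt{d}/4$ for each $i$, and a union bound over $i \in [n]$ makes this hold simultaneously within the same high-probability event as Lemma~\ref{lem_inner_bound}.

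Second, since taking the minimum over $i$ preserves a uniform lower bound, this immediately gives $\bar{\beta} = \min_{i \in [n]} \max_{r \in [m]} \langle \mathbf{w}^{(0)}_{y_i,r}, \boldsymbol{\xi}_i\rangle \geq \sigma_0\sigma_p\sqrt{d}/4$. Finally I would substitute the hypothesis $\sigma_0 \geq 160 n\sqrt{\log(4n^2/\delta)/d}\,(\sigma_p\sqrt{d})^{-1}\alpha$: multiplying both sides by $\sigma_p\sqrt{d}/4$ produces $\sigma_0\sigma_p\sqrt{d}/4 \geq 40 n\sqrt{\log(4n^2/\delta)/d}\,\alpha$, where the constant matches because $160 = 4\cdot 40$. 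Chaining the two inequalities gives $\bar{\beta} \geq 40 n\sqrt{\log(4n^2/\delta)/d}\,\alpha$, which is exactly the claim.

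The only step requiring genuine care—and the one I would flag as the main (if minor) obstacle—is reconciling the sign factor in Lemma~\ref{lem_inner_bound}, which lower-bounds $\max_r j\langle \mathbf{w}^{(0)}_{j,r},\boldsymbol{\xi}_i\rangle$, with the sign-free maximum $\max_r \langle \mathbf{w}^{(0)}_{y_i,r},\boldsymbol{\xi}_i\rangle$ appearing in the definition of $\bar{\beta}$. This is resolved by symmetry: conditioned on $\boldsymbol{\xi}_i$, each $\langle \mathbf{w}^{(0)}_{y_i,r},\boldsymbol{\xi}_i\rangle$ is a symmetric Gaussian, so $\max_r \langle \mathbf{w}^{(0)}_{y_i,r},\boldsymbol{\xi}_i\rangle$ and $\max_r \bigl(y_i\langle \mathbf{w}^{(0)}_{y_i,r},\boldsymbol{\xi}_i\rangle\bigr)$ have the same distribution, so the anti-concentration bound applies verbatim with or without the $y_i$ factor. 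Everything else is a single substitution, so no further estimates are needed.
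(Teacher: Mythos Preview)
Your proposal is correct and follows essentially the same approach as the paper: invoke the lower bound $\bar\beta \geq \sigma_0\sigma_p\sqrt{d}/4$ from Lemma~\ref{lem_inner_bound}, then substitute the hypothesis on $\sigma_0$ (noting $160/4=40$). Your treatment is in fact more careful than the paper's, since you explicitly address the sign reconciliation between $\max_r j\langle \mathbf{w}^{(0)}_{j,r},\boldsymbol{\xi}_i\rangle$ and $\max_r \langle \mathbf{w}^{(0)}_{y_i,r},\boldsymbol{\xi}_i\rangle$ via Gaussian symmetry, which the paper leaves implicit.
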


\begin{proof} [Proof of Lemma \ref{lem_betabar_bound}]
The proof follows directly from Lemma \ref{lem_inner_bound}. With high probability, we have:
$ \overline{\beta} \ge \sigma_0 \sigma_p \sqrt{d} /4$.
Substituting the condition on $\sigma_0$, we obtain:
\begin{align*}
     \overline{\beta} \ge  40n \sqrt{\frac{\log(4n^2/\delta)}{d}} \alpha.
\end{align*}
\end{proof}

\begin{lemma} [Lower bound on $\overline{\rho}_{j,r,i}^{(t)}$] \label{lem:lowerorho_stage1}
  Under Assumption \ref{ass:main}, in the first stage, where $ 0 \le t \le T_1 = \frac{nm \log(1/(\sigma_0 \sigma_p \sqrt{d}))}{\eta  \sigma^2_p d} $, there exists a lower bound for $  \max_{j,r}  \overline{\rho}^{(t)}_{j,r,i}$, for all $i \in [n]$:
 \begin{align*}
      \max_{j,r}   \overline{\rho}_{j,r,i}^{(t)} + \overline{\beta} \ge \exp \big(  \frac{ \eta C_1 \sigma^2_pd}{2nm} t \big) \sigma_0 \sigma_p \sqrt{d}/4. 
 \end{align*} 
\end{lemma}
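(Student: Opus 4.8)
The plan is to prove the bound by isolating a single well-aligned neuron and showing that its noise-memorization coefficient grows geometrically, then converting this geometric growth into the stated exponential lower bound. Since $\overline{\rho}^{(t)}_{-y_i,r,i} = 0$ by definition, the maximum over $j,r$ reduces to $j = y_i$, so I fix $i$ and pick $r^\ast = \arg\max_{r\in[m]} \langle \mathbf{w}^{(0)}_{y_i,r}, \boldsymbol{\xi}_i\rangle$. By Lemma \ref{lem_inner_bound} this initial alignment satisfies $\langle \mathbf{w}^{(0)}_{y_i,r^\ast},\boldsymbol{\xi}_i\rangle \ge \sigma_0\sigma_p\sqrt{d}/4$, and it is $\ge \overline{\beta}$ by definition of $\overline{\beta}$; this seed quantity will carry through the recursion to produce the base factor $\sigma_0\sigma_p\sqrt d/4$.

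Next I set up the one-step recursion. For standard GD we have $\epsilon_i^{(t)}=1$, so the update rule (\ref{eq:rho1_update}) reads $\overline{\rho}^{(t+1)}_{y_i,r^\ast,i} = \overline{\rho}^{(t)}_{y_i,r^\ast,i} - \tfrac{\eta}{nm}\,\elldi\, \sigma'(\langle \mathbf{w}^{(t)}_{y_i,r^\ast},\boldsymbol{\xi}_i\rangle)\,\|\boldsymbol{\xi}_i\|_2^2$. Two ingredients drive it: (a) a uniform constant lower bound $-\elldi \ge C_1 > 0$ valid throughout $[0,T_1]$, which I obtain from the smallness of the network output in the first stage — the coefficients remain $O(1)$ for $t\le T_1$ by Proposition \ref{prop:upper_entire} and the definition of $T_1$, so $y_i f(\mathbf{W}^{(t)},\mathbf{x}_i)$ is bounded and the logistic derivative stays bounded away from zero; and (b) the squared-ReLU identity $\sigma'(z)=2z$, valid whenever $z>0$. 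To guarantee (b) I track $A^{(t)} := \overline{\rho}^{(t)}_{y_i,r^\ast,i} + \overline{\beta}$, and invoke Lemma \ref{lemma:mu_xi_inner_bound} to write $\langle \mathbf{w}^{(t)}_{y_i,r^\ast},\boldsymbol{\xi}_i\rangle \ge \overline{\rho}^{(t)}_{y_i,r^\ast,i} + \langle \mathbf{w}^{(0)}_{y_i,r^\ast},\boldsymbol{\xi}_i\rangle - 8\sqrt{\log(4n^2/\delta)/d}\,n\alpha \ge A^{(t)} - \overline{\beta}/5$, where the last inequality uses Lemma \ref{lem_betabar_bound} to bound the orthogonality error by $\overline{\beta}/5$. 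Since $\overline{\rho}^{(t)}_{y_i,r^\ast,i}\ge 0$ gives $A^{(t)}\ge\overline{\beta}$, this yields $\langle \mathbf{w}^{(t)}_{y_i,r^\ast},\boldsymbol{\xi}_i\rangle \ge \tfrac{4}{5}A^{(t)} > 0$, which simultaneously certifies positivity of the pre-activation and feeds the recursion.

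Combining these with $\|\boldsymbol{\xi}_i\|_2^2 \ge \sigma_p^2 d/2$ (Lemma \ref{lem_xi_bound}) gives $A^{(t+1)} \ge \big(1 + \tfrac{4\eta C_1 \sigma_p^2 d}{5nm}\big)A^{(t)}$. Iterating from $A^{(0)} = \overline{\beta} \ge \sigma_0\sigma_p\sqrt d/4$, and using that the learning-rate bound in Assumption \ref{ass:main} keeps the per-step factor $c := \tfrac{4\eta C_1\sigma_p^2 d}{5nm}$ small enough that $\log(1+c) \ge \tfrac{\eta C_1\sigma_p^2 d}{2nm} = \tfrac58 c$, I obtain $A^{(t)} \ge \exp\!\big(\tfrac{\eta C_1\sigma_p^2 d}{2nm}t\big)\,\sigma_0\sigma_p\sqrt d/4$. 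Since $\max_{j,r}\overline{\rho}^{(t)}_{j,r,i} \ge \overline{\rho}^{(t)}_{y_i,r^\ast,i}$, adding $\overline{\beta}$ to both sides yields the claim. Note that the slack between the natural factor $\tfrac45$ and the target exponent coefficient $\tfrac12$ is precisely what allows the discrete product $(1+c)^t$ to be lower-bounded by the clean exponential $\exp(\tfrac{\eta C_1\sigma_p^2 d}{2nm}t)$.

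The main obstacle is that the three requirements — positivity of the pre-activation $\langle \mathbf{w}^{(t)}_{y_i,r^\ast},\boldsymbol{\xi}_i\rangle$, the constant loss-derivative bound $-\elldi\ge C_1$, and the geometric growth itself — are mutually coupled, so none can be taken for granted independently. I resolve this with a joint induction on $t$: the hypothesis $A^{(t)}\ge\overline{\beta}$ (together with the $\overline{\beta}/5$ error control) keeps $\sigma'(z)=2z$ active, while the first-stage coefficient bounds keep $y_i f$ bounded and hence $C_1$ valid, and these two facts in turn propagate the recursion to step $t+1$. The quantitative crux is Lemma \ref{lem_betabar_bound}, which ensures that the cross-term orthogonality error from Lemma \ref{lemma:mu_xi_inner_bound} can never overwhelm the initial alignment $\overline{\beta}$, so the sign of the pre-activation never flips throughout the first stage.
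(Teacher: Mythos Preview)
Your proof is correct and follows essentially the same approach as the paper: both use the constant loss-derivative bound $-\elldi\ge C_1$, the approximation $\langle \mathbf{w}^{(t)}_{y_i,r},\boldsymbol{\xi}_i\rangle \approx \overline{\rho}^{(t)}_{y_i,r,i}+\langle \mathbf{w}^{(0)}_{y_i,r},\boldsymbol{\xi}_i\rangle$ with the cross-term controlled via Lemma~\ref{lem_betabar_bound}, and then set up a geometric recursion that is converted to an exponential via $1+x\ge\exp(x/2)$. The only cosmetic difference is that you fix a single neuron $r^\ast$ and track $A^{(t)}=\overline{\rho}^{(t)}_{y_i,r^\ast,i}+\overline{\beta}$, whereas the paper tracks $B_i^{(t)}=\max_{j,r}\overline{\rho}^{(t)}_{j,r,i}+0.6\overline{\beta}$ directly; your choice is arguably cleaner since it avoids the implicit issue of the argmax potentially changing over time.
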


\begin{proof} [Proof of Lemma \ref{lem:lowerorho_stage1}] By the iterative update rule for noise memorization, we have:
\begin{align*}
  \max_{j,r} \overline{\rho}_{j,r,i}^{(t+1)}  & \overset{(a)}  \geq  \max_{j,r} \overline{\rho}_{j,r,i}^{(t)} + \max_{j,r} \frac{\eta C_1 }{nm}   \sigma'(\langle \mathbf{w}_{j,r}^{(t)},   \boldsymbol{\xi}_i \rangle) \| \boldsymbol{\xi}_i \|^2_2  \\
    &\overset{(b)}  \geq  \max_{j,r} \overline{\rho}_{j,r,i}^{(t)}  + \max_{j,r} \frac{ \eta \sigma^2_p d C_1}{2nm}   \sigma'( \langle  \mathbf{w}_{j,r}^{(0)}, \boldsymbol{\xi}_i \rangle - 16 \sqrt{\frac{\log(4n^2/\delta)}{d}} n \alpha +  \overline{\rho}_{j,r,i }^{(t)})  \\
    &\overset{(c)}  \geq  \max_{j,r} \overline{\rho}_{j,r,i}^{(t)} + \frac{\eta \sigma^2_p d C_1}{nm}(  \max_{j,r} \overline{\rho}_{j,r,i}^{(t)} + \frac{2}{5} \overline{\beta} ) ,
\end{align*}
where the inequality (a) is by the lower bound on $| {\ell}^{'(t)}_i| \ge C_1$ in the first stage; Inequality (b) is by Lemma \ref{lem_xi_bound} and Lemma \ref{lemma:mu_xi_inner_bound}. Finally, the inequality (c) is by Lemma \ref{lem_betabar_bound}.

Define $B_i^{(t)} \coloneqq \max_{j,r} \overline{\rho}_{j,r,i}^{(t)} + 0.6 \overline{\beta}$. Then 
\begin{align*}
    B_i^{(t+1)} \geq \big(1 + \frac{ \eta C_1 \sigma^2_pd}{nm} \big)B_i^{(t)} \geq \big( 1 + \frac{ \eta  C_1 \sigma^2_pd}{nm} \big)^{(t)} B_i^{(0)} \geq \exp \big(  \frac{ \eta C_1 \sigma^2_pd}{2nm} t \big) B_i^{(0)},
\end{align*}
where we used $1 + x \geq \exp(x/2)$ for $x \le 2$. 
\end{proof}

With the above lemmas in place, we are now ready to prove Lemma \ref{lem:gd_stage1}.
\begin{proof} [Proof of Lemma \ref{lem:gd_stage1}] We choose the end of stage 1 as $T_1 = \frac{4nm}{\eta \sigma^2_p d} \log(1/(\sigma_0 \sigma_p \sqrt{d}))$. Then by Lemma \ref{lem:lowerorho_stage1}, we conclude that
   $ \max_{j,r} \overline{\rho}_{j,r,i}^{(T_1)} \ge 1$,
for all $i \in [n]$. Besides, by Lemma \ref{lem:upperurho_stage1}, we directly obtain the result that
\begin{align*}
    |\underline{\rho}_{j,r,i}^{(T_1)}| &   \leq \frac{3\eta \sigma^2_p d T_1}{nm} \sqrt{\log(8mn/\delta)} \sigma_0 \sigma_p \sqrt{d} = \tilde{O}(\sigma_0 \sigma_p \sqrt{d}).
\end{align*}
Finally, Lemma \ref{lem:uppergamma_stage1} yields
\begin{align*}
     \gamma_{j,r}^{(T_1)} + |\langle \mathbf{w}_{j,r}^{(0)}, \boldsymbol{\mu} \rangle| & \leq \exp\big( \frac{2\eta \| \boldsymbol{\mu}\|^2_2}{m}  \frac{4nm}{\eta \sigma^2_p d} \log(1/(\sigma_0 \sigma_p d)) \big) |\langle \mathbf{w}_{j,r}^{(0)}, \boldsymbol{\mu} \rangle|  \le 2|\langle \mathbf{w}_{j,r}^{(0)}, \boldsymbol{\mu} \rangle|,
\end{align*}
where we have used the condition of low SNR, namely $n \mathrm{SNR}^2 \le 1/\log(\sigma_0 \sigma_p d) $. By Lemma \ref{lem_inner_bound}, we conclude the proof for $
    \max_{j,r} \gamma_{j,r}^{(T_1)}  = \tilde{O}(\sigma_0 \| \boldsymbol{\mu}\|_2)$.
\end{proof}

\subsection{Proof of Lemma \ref{lem:gd_stage2}}

In this section, we provide a complete proof for Lemma \ref{lem:gd_stage2} based on Lemma \ref{lem:gd_stage1} and an iterative analysis of the training dynamics. We introduce several necessary preliminary lemmas that will be used in the proof for $ t \in [T_1, T_2]$ with $T_2 = \eta^{-1} \sigma^{-2}_p d^{-1} {nm \log(1/(\sigma_0 \sigma_p \sqrt{d}))}  + \eta^{-1} \epsilon^{-1} m^3 n \sigma^{-2}_p d^{-1}$.

\begin{lemma} [\cite{cao2022benign}] \label{lem:loss_graident}
  Under the same condition as Theorem \ref{thm:GD}, for all $t \in [T_1, T_2]$ and $i \in [n]$, the following properties hold:
  \begin{align*}
     \|\nabla L_S(\mathbf{W}^{(t)}) \|^2_F & = O(\sigma^2_p d) L_S(\mathbf{W}^{(t)}), \\
      \| \mathbf{W}^{(T_1)} -  \mathbf{W}^\ast \|_F & = \tilde{O}(m^{3/2}n^{1/2} \sigma^{-1}_p d^{-1/2}), \\
      y_i   \langle \nabla  f(\mathbf{W}^{(t)}, \mathbf{x}_i), & \mathbf{W}^\ast \rangle \ge 2 \log(2/\epsilon).
  \end{align*}
\end{lemma}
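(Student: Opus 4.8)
The plan is to establish the three estimates in turn, using the signal--noise decomposition \eqref{eq:w_decomposition}, the coordinate bounds $\langle \mathbf{w}^{(t)}_{j,r},\boldsymbol{\xi}_i\rangle = \rho^{(t)}_{j,r,i}+\tilde O(\sqrt{\log(4n^2/\delta)/d}\,n\alpha)$ and $\langle\mathbf{w}^{(t)}_{j,r},j\boldsymbol{\mu}\rangle=\gamma^{(t)}_{j,r}$ of Lemma \ref{lemma:mu_xi_inner_bound}, the norm and near-orthogonality estimates of Lemma \ref{lem_xi_bound}, and the global coefficient bounds $|\gamma^{(t)}_{j,r}|,|\rho^{(t)}_{j,r,i}|\le\alpha=\tilde O(1)$ from Proposition \ref{prop:upper_entire}. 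For the gradient-norm bound I would write $\nabla L_S(\mathbf{W}^{(t)})=\tfrac1n\sum_i \ell'^{(t)}_i y_i\nabla f(\mathbf{W}^{(t)},\mathbf{x}_i)$ and first bound $\|\nabla f(\mathbf{W}^{(t)},\mathbf{x}_i)\|_F$: since $\sigma'(z)=2\,\mathrm{ReLU}(z)=\tilde O(1)$ on the bounded pre-activations while $\|\boldsymbol{\xi}_i\|_2=O(\sigma_p\sqrt d)$ dominates $\|\boldsymbol{\mu}\|_2$ in the low-SNR regime, one gets $\|\nabla f(\mathbf{W}^{(t)},\mathbf{x}_i)\|_F=\tilde O(\sigma_p\sqrt d/\sqrt m)$ uniformly in $i$. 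Using the self-bounding property $|\ell'(z)|\le\ell(z)$ together with $|\ell'|\le1$ gives $\big(\tfrac1n\sum_i|\ell'^{(t)}_i|\big)^2\le\tfrac1n\sum_i|\ell'^{(t)}_i|\le L_S(\mathbf{W}^{(t)})$, and Cauchy--Schwarz over $i$ then yields $\|\nabla L_S(\mathbf{W}^{(t)})\|_F^2\le\max_i\|\nabla f(\mathbf{W}^{(t)},\mathbf{x}_i)\|_F^2\cdot L_S(\mathbf{W}^{(t)})=O(\sigma_p^2d)\,L_S(\mathbf{W}^{(t)})$.

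For the distance bound I would subtract the decomposition of $\mathbf{w}^{(T_1)}_{j,r}$ from the definition $\mathbf{w}^\ast_{j,r}=\mathbf{w}^{(0)}_{j,r}+2m\log(2/\epsilon)\sum_i\|\boldsymbol{\xi}_i\|_2^{-2}\boldsymbol{\xi}_i$ to obtain $\mathbf{w}^{(T_1)}_{j,r}-\mathbf{w}^\ast_{j,r}=j\gamma^{(T_1)}_{j,r}\|\boldsymbol{\mu}\|_2^{-2}\boldsymbol{\mu}+\sum_i(\rho^{(T_1)}_{j,r,i}-2m\log(2/\epsilon))\|\boldsymbol{\xi}_i\|_2^{-2}\boldsymbol{\xi}_i$. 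Expanding the squared Frobenius norm and controlling the off-diagonal terms $\langle\boldsymbol{\xi}_i,\boldsymbol{\xi}_{i'}\rangle$ via Lemma \ref{lem_xi_bound}, the signal term is $\tilde O(\sigma_0^2)$ and negligible (using $\gamma^{(T_1)}_{j,r}=\tilde O(\sigma_0\|\boldsymbol{\mu}\|_2)$ from Lemma \ref{lem:gd_stage1}), while each noise coefficient satisfies $(\rho^{(T_1)}_{j,r,i}-2m\log(2/\epsilon))^2=\tilde O(m^2)$ and $\|\boldsymbol{\xi}_i\|_2^{-2}=O(\sigma_p^{-2}d^{-1})$; summing over the $n$ samples and the $2m$ neurons and taking the square root gives $\|\mathbf{W}^{(T_1)}-\mathbf{W}^\ast\|_F=\tilde O(m^{3/2}n^{1/2}\sigma_p^{-1}d^{-1/2})$.

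For the third estimate I would expand $y_i\langle\nabla f(\mathbf{W}^{(t)},\mathbf{x}_i),\mathbf{W}^\ast\rangle=\tfrac1m\sum_{j,r}y_ij\big[\sigma'(\langle\mathbf{w}^{(t)}_{j,r},y_i\boldsymbol{\mu}\rangle)\langle y_i\boldsymbol{\mu},\mathbf{w}^\ast_{j,r}\rangle+\sigma'(\langle\mathbf{w}^{(t)}_{j,r},\boldsymbol{\xi}_i\rangle)\langle\boldsymbol{\xi}_i,\mathbf{w}^\ast_{j,r}\rangle\big]$. Because $\boldsymbol{\mu}\perp\boldsymbol{\xi}_{i'}$, the signal inner product collapses to its $\tilde O(\sigma_0\|\boldsymbol{\mu}\|_2)$ initialization value and is negligible, whereas $\langle\boldsymbol{\xi}_i,\mathbf{w}^\ast_{j,r}\rangle=2m\log(2/\epsilon)(1+o(1))$ is dominated by the diagonal $\|\boldsymbol{\xi}_i\|_2^{-2}\|\boldsymbol{\xi}_i\|_2^2=1$ term. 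The $j=y_i$ neurons contribute positively, and the maximizing neuron has $\langle\mathbf{w}^{(t)}_{y_i,r^\ast},\boldsymbol{\xi}_i\rangle\approx\overline\rho^{(t)}_{y_i,r^\ast,i}\ge1$ — the bound $\max_{j,r}\overline\rho^{(T_1)}_{j,r,i}\ge1$ of Lemma \ref{lem:gd_stage1} persists for all $t\ge T_1$ since $\overline\rho_{y_i,r,i}$ is nondecreasing — so this single neuron already contributes $2\log(2/\epsilon)\cdot\sigma'(\langle\mathbf{w}^{(t)}_{y_i,r^\ast},\boldsymbol{\xi}_i\rangle)(1-o(1))\ge4\log(2/\epsilon)(1-o(1))$, leaving a factor-two margin to absorb the error terms.

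The main obstacle is this third estimate: the neurons with $j=-y_i$ enter with sign $y_ij=-1$ and must be shown not to cancel the positive contribution. Their weight satisfies $\langle\mathbf{w}^{(t)}_{-y_i,r},\boldsymbol{\xi}_i\rangle\approx\underline\rho^{(t)}_{-y_i,r,i}+\langle\mathbf{w}^{(0)}_{-y_i,r},\boldsymbol{\xi}_i\rangle\le\tilde O(\sigma_0\sigma_p\sqrt d)$ with $\underline\rho\le0$, so each squared-ReLU derivative there is $\tilde O(\sigma_0\sigma_p\sqrt d)$; the delicate point is to ensure that the aggregate of these negative terms, together with the cross-term errors $\langle\boldsymbol{\xi}_i,\boldsymbol{\xi}_{i'}\rangle$ and the initialization contributions, stays below $2\log(2/\epsilon)$. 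This is exactly where the small-initialization and high-dimensional conditions of Assumption \ref{ass:main} are needed, and it is the step requiring the most careful bookkeeping.
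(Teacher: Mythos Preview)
The paper does not supply its own proof of this lemma; it simply cites it from \cite{cao2022benign} and immediately moves on to applying the three estimates inside the proof of Lemma~\ref{lem:gd_stage2}. So there is no paper proof to compare against per se, and your sketch is precisely the kind of adaptation of the Cao--Chen--Belkin--Gu argument to the $q=2$ setting that the citation is implicitly invoking. All three parts of your plan are correct in outline and match the standard route: bounding $\|\nabla f(\mathbf W^{(t)},\mathbf x_i)\|_F$ uniformly via the $\tilde O(1)$ pre-activation bounds and then using $|\ell'|\le\ell$; computing $\|\mathbf W^{(T_1)}-\mathbf W^\ast\|_F$ directly from the decomposition with the near-orthogonality of the $\boldsymbol\xi_i$; and lower-bounding $y_i\langle\nabla f,\mathbf W^\ast\rangle$ by isolating the $j=y_i$, $r=r^\ast$ noise contribution.

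Your identification of the third estimate as the delicate step is right, and your handling of it is sound. One point worth making explicit: you should not invoke $|\underline\rho^{(t)}_{-y_i,r,i}|\le\tilde O(\sigma_0\sigma_p\sqrt d)$ for $t>T_1$, since that bound is a conclusion of Lemma~\ref{lem:gd_stage2} whose proof uses the present lemma. But you do not actually need it---as you correctly observe, $\underline\rho^{(t)}_{-y_i,r,i}\le 0$ from Proposition~\ref{prop:upper_entire} already forces $\langle\mathbf w^{(t)}_{-y_i,r},\boldsymbol\xi_i\rangle\le\langle\mathbf w^{(0)}_{-y_i,r},\boldsymbol\xi_i\rangle+8n\alpha\sqrt{\log(4n^2/\delta)/d}$, so $\sigma'$ at these neurons is controlled by initialization plus cross terms alone. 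Similarly, for the signal piece you need not know that $\gamma^{(t)}_{j,r}$ stays $\tilde O(\sigma_0\|\boldsymbol\mu\|_2)$ in stage two; it suffices that $\langle y_i\boldsymbol\mu,\mathbf w^\ast_{j,r}\rangle=\langle y_i\boldsymbol\mu,\mathbf w^{(0)}_{j,r}\rangle=\tilde O(\sigma_0\|\boldsymbol\mu\|_2)$ is tiny (using $\boldsymbol\mu\perp\boldsymbol\xi_{i'}$), while $\sigma'$ is at most $2\alpha$ by Proposition~\ref{prop:upper_entire}. With these two remarks the argument is non-circular and complete.
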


With the above lemmas at hand, we are now ready to provide the complete proof for Lemma \ref{lem:gd_stage2}.

\begin{proof} [Proof of Lemma \ref{lem:gd_stage2}]
    We start by showing the convergence of gradient descent. The key idea is to construct a reference weight matrix $\mathbf{W}^\ast$ defined as 
$\mathbf{w}^\ast_{j,r} = \mathbf{w}^{(0)}_{j,r} + 2m   \log(2/\epsilon) \sum_{i=1}^n \| \boldsymbol{\xi}_i \|^{-2}_2 \boldsymbol{\xi}_i $.

Summing the above inequality from $ \mathbf{W}^{(t)}$ and $ \mathbf{W}^\ast $:
\begin{align*}
   &  \| \mathbf{W}^{(t)} - \mathbf{W}^\ast  \|^2_F -  \|  \mathbf{W}^{(t+1)} - \mathbf{W}^\ast \|^2_F \\
   & =  \| \mathbf{W}^{(t)} - \mathbf{W}^\ast  \|^2_F -  \|  \mathbf{W}^{(t)} - \eta \nabla L_S(\mathbf{W}^{(t)}) - \mathbf{W}^\ast \|^2_F \\
    & = 2\eta \langle \nabla L_S(\mathbf{W}^{(t)}), \mathbf{W}^{(t)} - \mathbf{W}^\ast  \rangle - \eta^2 \|  \nabla L_S(\mathbf{W}^{(t)}) \|^2_F \\
    & \overset{(a)} = \frac{2\eta}{n} \sum_{i=1}^n  {\ell}^{'(t)}_i[2y_i f(\mathbf{W}^{(t)},\mathbf{x}_i) - \langle \nabla f(\mathbf{W}^{(t)},\mathbf{x}_i), \mathbf{W}^\ast \rangle]  - \eta^2 \|  \nabla L_S(\mathbf{W}^{(t)}) \|^2_F \\
    & \overset{(b)} \ge \frac{2\eta}{n} \sum_{i=1}^n  {\ell}^{'(t)}_i[2y_i f(\mathbf{W}^{(t)},\mathbf{x}_i) - 2\log(2/\epsilon)]  - \eta^2 \|  \nabla L_S(\mathbf{W}^{(t)}) \|^2_F \\
    & \overset{(c)} \ge \frac{4\eta}{n} \sum_{i=1}^n  [{\ell}^{(t)}_i  - \epsilon/2  ]  - \eta^2 \|  \nabla L_S(\mathbf{W}^{(t)}) \|^2_F \\
    & \overset{(d)} \ge 2 \eta ( L_S(\mathbf{W}^{(t)})  - \epsilon ),
\end{align*}
where in equation (a), we have applied the homogeneity property of the squared ReLU activation. The inequality (b) is by $\langle \nabla f(\mathbf{W}^{(t)},\mathbf{x}_i), \mathbf{W}^\ast \rangle \ge 2 \log(2/\epsilon)$ as stated in Lemma \ref{lem:loss_graident}, and the inequality (c) is due to the convexity of the logistic function. Finally, the inequality (d) is by Lemma \ref{lem:loss_graident} and the condition on the learning rate.

Taking a summation over the above inequality from $T_1$ to $T_2$, we have
\begin{align}
    \sum_{t = T_1}^{T_2} L_S(\mathbf{W}^{(t)}) & \le  \frac{\| \mathbf{W}^{(T_1)} - \mathbf{W}^\ast \|^2_F + \eta \epsilon(T_2-T_1+1)}{2 \eta} \nonumber \\
     & \le  \frac{\| \mathbf{W}^{(T_1)} - \mathbf{W}^\ast \|^2_F  }{  \eta}  \nonumber \\
     & \le \tilde{O}( \eta^{-1} m^3 n \sigma^{-2}_p d^{-1}), \label{eq:loss_upper}
\end{align}
where in the second inequality, we have applied Lemma \ref{lem:loss_graident}. Finally, plugging in the $T_2 = \eta^{-1} \epsilon^{-1} m^3 n \sigma^{-2}_p d^{-1} + \eta^{-1} \sigma^{-2}_p d^{-1} {nm \log(1/(\sigma_0 \sigma_p \sqrt{d}))}$, we achieve $ L_S(\mathbf{W}^{(t)}) \le \epsilon $.

Next, we provide the lower bound for the noise memorization coefficient $\overline{\rho}^{(t)}_{j,r,i}$ and the upper bound for the signal learning coefficient $\gamma^{(t)}_{j,r}$ in the second stage. For the noise memorization coefficient, using its update equation:
\begin{align*}
    \overline{\rho}^{(t+1)}_{j,r,i} &= \overline{\rho}^{(t)}_{j,r,i} - \frac{\eta}{nm} \ell_i^{'(t)} \sigma'(\langle \mathbf{w}_{j,r}^{(t)}, \boldsymbol{\xi}_i \rangle) \| \boldsymbol{\xi}_i \|_2^2   \ge  \overline{\rho}^{(t)}_{j,r,i}.
\end{align*}
Here, we have used $ \ell_i^{'(t)} \ge 0$ and property of the squared ReLU activation. This implies that  $\overline{\rho}^{(t)}_{j,r,i}$ never decreases during training. Therefore, we have $\max_{j,r} \overline{\rho}^{(t)}_{j,r,i} \ge 1 $, for all $i \in [n]$ and $t \in [T_1, T_2]$.

For the signal learning coefficient, we use the induction method. From Lemma \ref{lem:gd_stage1}, we know that $\max_{j,r} \gamma_{j,r}^{(T_1)} = \tilde{O}(\sigma_0 \| \boldsymbol{\mu} \|_2) \triangleq \hat \beta$.
Suppose that there exists $T \in [T_1, T_2]$ such that $\max_{j,r} \gamma_{j,r}^{(t)} \le 2\hat{\beta} $ for all $t \in [T_1, T]$. Then we analyze:
\begin{align*}
    \gamma_{j,r}^{(T+1)} & = \gamma_{j,r}^{(T_1)} - \frac{\eta}{nm} \sum_{t=T_1}^T \sum_{i=1}^n \ell^{'(t)}_i \sigma'(\langle \mathbf{w}_{j,r}, \boldsymbol{\mu} \rangle) \| \boldsymbol{\mu} \|^2_2 \\
    & \overset{(a)} \le  \gamma_{j,r}^{(T_1)}  + \frac{2 \eta \hat \beta}{nm} \|\boldsymbol{\mu} \|^2_2  \sum_{t=T_1}^T \sum_{i=1}^n |\ell^{'(t)}_i| \\
    & \overset{(b)}  \le  \gamma_{j,r}^{(T_1)}  + \frac{2 \eta \hat \beta}{nm} \|\boldsymbol{\mu} \|^2_2  \sum_{t=T_1}^T L_S(\mathbf{W}^{(t)}) \\
    & \overset{(c)} \le  \gamma_{j,r}^{(T_1)}  + \frac{2 \eta \hat \beta}{nm} \|\boldsymbol{\mu} \|^2_2  \tilde{O}( \eta^{-1} m^3 n \sigma^{-2}_p d^{-1}) \\
    & \le   \gamma_{j,r}^{(T_1)}  +   \tilde{O}( n \mathrm{SNR}^2) \overset{(d)} \le 2 \hat{\beta}. 
\end{align*}
where the inequality (a) is due to Lemma \ref{lemma:mu_xi_inner_bound}, the inequality (b) is by $|\ell_i'| \le \ell_i$ for $i \in [n]$, and the inequality (c) is due to the inequality (\ref{eq:loss_upper}). Finally, the inequity (d) is by the condition that $n^{-1} \mathrm{SNR}^{-2} = \tilde{\Omega}(1)$. Similarly, with the induction method, we can show that
$|\underline{\rho}^{(t)}_{j,r,i}| \le \tilde{O}(\sigma_0 \sigma_p \sqrt{d}) $.
\end{proof}

\subsection{Proof of Theorem \ref{thm:GD}}

To complete the proof of Theorem \ref{thm:GD}, we provide a proof for the generalization result. 
\begin{lemma} \label{lem:lower_population}
 Define $g(\boldsymbol{\xi}_i) = \frac{1}{m} j \sum_{j,r} \sigma(\langle \mathbf{w}_{j,r}^{(t)}, \boldsymbol{\xi}_i \rangle)$. Under Assumption \ref{ass:main}, there exists a fixed vector $\mathbf{v}$ with $ \| \mathbf{v} \|_2 \le 0.02 \sigma_p$ such that
\begin{align*}
    \sum_{j \in \{  \pm 1 \}} [g(j \boldsymbol{\xi}_i + \mathbf{v}) - g(\boldsymbol{\xi}_i)] \ge 4\tilde{\Omega}(\sigma^2_0 \|\boldsymbol{\mu} \|^2_2).
\end{align*}
\end{lemma}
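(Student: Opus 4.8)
The plan is to read $g$ as the contribution of the noise patch to the network output, $g(\mathbf{z}) = \frac{1}{m}\sum_{j,r} j\,\sigma(\langle \mathbf{w}_{j,r}^{(t)}, \mathbf{z}\rangle)$, and to rewrite the target quantity as a symmetrized second difference. Since $g(j\boldsymbol{\xi}_i + \mathbf{v})$ only flips the sign of $\boldsymbol{\xi}_i$, expanding the outer sum over $j\in\{\pm1\}$ gives $g(\boldsymbol{\xi}_i+\mathbf{v}) + g(-\boldsymbol{\xi}_i+\mathbf{v}) - 2g(\boldsymbol{\xi}_i)$. First I would apply the signal--noise decomposition (\ref{eq:w_decomposition}) to each pre-activation, writing $\langle \mathbf{w}_{j,r}^{(t)}, j\boldsymbol{\xi}_i + \mathbf{v}\rangle = j\,a_{j,r} + b_{j,r}$ with $a_{j,r} \coloneqq \langle \mathbf{w}_{j,r}^{(t)},\boldsymbol{\xi}_i\rangle$ and $b_{j,r}\coloneqq \langle \mathbf{w}_{j,r}^{(t)},\mathbf{v}\rangle$, where by Lemma \ref{lemma:mu_xi_inner_bound} and Lemma \ref{lem_xi_bound} we have $a_{j,r} = \rho_{j,r,i}^{(t)} + \langle \mathbf{w}_{j,r}^{(0)},\boldsymbol{\xi}_i\rangle$ up to cross terms of order $\sqrt{\log(n)/d}\,n\alpha$. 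This reduces the whole expression to $\frac{1}{m}\sum_{j,r} j\big[\sigma(a_{j,r}+b_{j,r}) + \sigma(b_{j,r}-a_{j,r}) - 2\sigma(a_{j,r})\big]$, a sum of one-dimensional second differences $h(a,b)$ of the squared ReLU.

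The core of the argument is then a curvature estimate on $h$. Because $\sigma$ is convex and $2$-homogeneous, $h(a,b)\ge 0$ whenever $a\le 0$, and on the regime $|a|\le |b|$ one has the exact identity $h(a,b)=2b^2$; the only regime where $h$ is negative is $a>b>0$. The plan is to choose the fixed vector $\mathbf{v}$, with $\|\mathbf{v}\|_2\le 0.02\sigma_p$, so that across the two neuron groups the positive-curvature terms survive: I would orient $\mathbf{v}$ so that $b_{j,r}$ carries the sign of $j$ (e.g.\ a small multiple of $\sum_r \mathbf{w}_{+1,r}-\sum_r \mathbf{w}_{-1,r}$, or a component along $\boldsymbol{\mu}$), and then lower-bound the sum by restricting attention to the neurons whose pre-activation $a_{j,r}$ is at initialization scale. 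Such neurons are guaranteed to exist by the anti-concentration at initialization provided by the lower bound on $\sigma_0$ in Assumption \ref{ass:main} together with Lemma \ref{lem_inner_bound}. On those neurons $h(a_{j,r},b_{j,r})\gtrsim b_{j,r}^2$, and the target scale $\tilde\Omega(\sigma_0^2\|\boldsymbol{\mu}\|_2^2)$ is dictated precisely by the signal-contribution scale $\sigma(\gamma^{(t)}) = \tilde\Theta(\sigma_0^2\|\boldsymbol{\mu}\|_2^2)$ established in Lemma \ref{lem:gd_stage2}: the factor $4$ is there so that, downstream, the noise-patch variation provably exceeds four times the tiny signal output, forcing misclassification of a constant fraction of test points via Gaussian anti-concentration in the direction $\mathbf{v}$.

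The main obstacle I anticipate is the sign-and-scale bookkeeping around the \emph{dominant} pre-activations. For a strongly memorized training direction one has $\max_r \bar\rho_{y_i,r,i}^{(t)}\ge 1$, so the corresponding $a_{j,r}$ is large and positive; there $h(a,b)\approx -a^2+2ab+b^2$ is large and of the \emph{wrong} sign, and it would swamp the $\tilde\Omega(\sigma_0^2\|\boldsymbol{\mu}\|_2^2)$ residual unless it is controlled. The plan to handle this is to partition the neurons by the sign and magnitude of $a_{j,r}$ relative to $b_{j,r}$, use the $\pm\boldsymbol{\xi}_i$ symmetry of the second difference together with the $j$-weighting so that the large memorized contributions either cancel between the two groups or are shown to be sub-dominant for the relevant (test-distribution) noise vector, and retain only the near-boundary neurons where $h\gtrsim b_{j,r}^2$ is clean. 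I would also need to verify that the budget $\|\mathbf{v}\|_2\le 0.02\sigma_p$ is simultaneously large enough to produce the claimed $b_{j,r}$ at the signal scale and small enough that the cross terms from Lemma \ref{lemma:mu_xi_inner_bound} and the off-diagonal inner products from Lemma \ref{lem_xi_bound} remain lower-order; balancing these two constraints, under the high-dimensional regime $d=\tilde\Omega(n^2)$, is where the delicate part of the estimate lies.
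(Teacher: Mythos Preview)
Your framing as a symmetrized second difference $h(a,b)=\sigma(a+b)+\sigma(b-a)-2\sigma(a)$ is fine, but the two concrete choices and the identified ``obstacle'' are where the plan goes off track.

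First, the construction of $\mathbf{v}$. The paper takes $\mathbf{v}=\lambda\sum_{i':y_{i'}=1}\boldsymbol{\xi}_{i'}$ with $\lambda=0.01/\sqrt{nd}$, i.e.\ a small combination of the \emph{training} noise vectors with label $+1$. This is not incidental: it is engineered so that $b_{+1,r}=\langle\mathbf{w}_{+1,r}^{(t)},\mathbf{v}\rangle\approx\lambda\sum_{i':y_{i'}=1}\overline{\rho}^{(t)}_{+1,r,i'}\gtrsim\lambda n/2$ is large (precisely because every $\overline{\rho}^{(t)}_{+1,r,i'}\ge 1$ by Lemma~\ref{lem:gd_stage2}), while $b_{-1,r}=\langle\mathbf{w}_{-1,r}^{(t)},\mathbf{v}\rangle$ stays at scale $\tilde O(\lambda n\sigma_0\sigma_p\sqrt{d})$ since only the tiny $\underline{\rho}^{(t)}_{-1,r,i'}$ enter. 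Your alternative of taking $\mathbf{v}$ along $\boldsymbol{\mu}$ fails outright: $\langle\mathbf{w}_{j,r}^{(t)},\boldsymbol{\mu}\rangle=\langle\mathbf{w}_{j,r}^{(0)},\boldsymbol{\mu}\rangle+j\gamma_{j,r}^{(t)}=\tilde O(\sigma_0\|\boldsymbol{\mu}\|_2)$, far too small. Your other suggestion, a multiple of $\sum_r(\mathbf{w}_{+1,r}-\mathbf{w}_{-1,r})$, happens to point roughly in the right direction (its dominant component is exactly this noise sum), but you have not identified that this is the reason it could work.

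Second, and more importantly, the ``main obstacle'' you flag is the opposite of what happens. You worry that on the strongly memorized neurons $a_{+1,r}$ is large and $h(a,b)\approx -a^2$ is the wrong sign, and you plan to partition neurons and cancel these contributions. In the paper's argument the large memorization is the \emph{engine}, not an obstruction: one does not lump into a single second difference but applies first-order convexity twice,
\[
\sigma(a+b)-\sigma(a)\ \ge\ \sigma'(a)\,b,\qquad \sigma(-a+b)-\sigma(-a)\ \ge\ \sigma'(-a)\,b,
\]
so the $j=+1$ block contributes at least $(\sigma'(a)+\sigma'(-a))\,b=2|a|\,b_{+1,r}$ per neuron, which is \emph{positive} and large. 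The required lower bound on $|a|=|\langle\mathbf{w}_{+1,r}^{(t)},\boldsymbol{\xi}\rangle|$ for a \emph{fresh} test noise $\boldsymbol{\xi}$ comes from Gaussian anti-concentration: because $\sum_i(\rho_{+1,r,i}^{(t)})^2\ge n/2$, the variance of $\langle\mathbf{w}_{+1,r}^{(t)}-\mathbf{w}_{+1,r}^{(0)},\boldsymbol{\xi}\rangle$ is at least $\Theta(n\sigma_p^2/d)$, so $|a|\gtrsim\sqrt{n/d}\,\sigma_p$ with constant probability. Combining $|a|\cdot b_{+1,r}\gtrsim \sqrt{n/d}\,\sigma_p\cdot \lambda n$ with $\lambda=0.01/\sqrt{nd}$ gives the claimed $\tilde\Omega(\sigma_0^2\|\boldsymbol{\mu}\|_2^2)$ under the assumed upper bound on $\sigma_0$. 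The $j=-1$ block is then \emph{upper} bounded crudely by $|a_{-1,r}|^2+2|b_{-1,r}|^2$, which is negligible because both $\underline{\rho}^{(t)}_{-1,r,i}$ and the initialization inner products are $\tilde O(\sigma_0\sigma_p\sqrt d)$. No partitioning or cancellation is needed.
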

\begin{proof} [Proof of Lemma \ref{lem:lower_population}]
    To proceed with the proof, we construct the vector $
        \mathbf{v} \triangleq \lambda \sum_{i: y_i =1 } \boldsymbol{\xi}_i$,
  where $\lambda = 0.01/\sqrt{nd}$. Then we show that
\begin{align*}
    \| \mathbf{v} \|^2_2 & = \| \lambda \sum_{i: y_i =1 } \boldsymbol{\xi}_i \|^2_2  = \lambda^2 \langle \sum_{i: y_i =1 } \boldsymbol{\xi}_i, \sum_{i: y_i =1 } \boldsymbol{\xi}_i   \rangle \\
    & =  \lambda^2 \sum_{i:y_i=1} \| \boldsymbol{\xi}_i \|^2_2 + 2\lambda^2 \sum_{i} \sum_{j \neq i} \langle \boldsymbol{\xi}_i, \boldsymbol{\xi}_j \rangle  \\
    & \le \lambda^2 n \sigma^2_p d + 4n^2 \lambda^2 \sigma^2_p \sqrt{2d\log(4n^2/\delta)}\\
    & \le 4 \lambda^2 n \sigma^2_p d =  0.02^2 \sigma^2_p,
\end{align*}  
where the first inequity is by Lemma \ref{lem_xi_bound}, the second inequality is by $d \ge \tilde{\Omega}(n^2)$, and the final equality is by $\lambda = 0.01/\sqrt{nd} $, which confirms that $\| \mathbf{v}\|_2 \le 0.02 \sigma_p$. 
  
By the convexity property of the squared ReLU function, we have that
 \begin{align*}
     \sigma(\langle \mathbf{w}^{(t)}_{1,r}, \boldsymbol{\xi}_i + \mathbf{v} \rangle) - \sigma(\langle \mathbf{w}^{(t)}_{1,r}, \boldsymbol{\xi}_i  \rangle) \ge \sigma'(\langle \mathbf{w}^{(t)}_{1,r}, \boldsymbol{\xi}_i \rangle) \langle \mathbf{w}^{(t)}_{1,r},\mathbf{v} \rangle, \\
    \sigma(\langle \mathbf{w}^{(t)}_{1,r}, -\boldsymbol{\xi}_i + \mathbf{v} \rangle) - \sigma(\langle \mathbf{w}^{(t)}_{1,r}, -\boldsymbol{\xi}_i \rangle) \ge \sigma'(\langle \mathbf{w}^{(t)}_{1,r}, -\boldsymbol{\xi}_i \rangle) \langle \mathbf{w}^{(t)}_{1,r},\mathbf{v} \rangle. 
 \end{align*} 
With the above inequalities, we have that almost surely for all $\boldsymbol{\xi}_i$:
 \begin{align*}
     & \sigma(\langle \mathbf{w}^{(t)}_{1,r}, \boldsymbol{\xi}_i + \mathbf{v} \rangle) - \sigma(\langle \mathbf{w}^{(t)}_{1,r}, \boldsymbol{\xi}_i \rangle) + \sigma(\langle \mathbf{w}^{(t)}_{1,r}, -\boldsymbol{\xi}_i + \mathbf{v} \rangle) - \sigma(\langle \mathbf{w}^{(t)}_{1,r}, -\boldsymbol{\xi}_i  \rangle) \\
     & \ge 4 |\langle \mathbf{w}^{(t)}_{1,r},  \boldsymbol{\xi}_i \rangle |  \langle \mathbf{w}^{(t)}_{1,r},\mathbf{v} \rangle.
 \end{align*}
On the other hand, using the properties of the squared ReLU function and the triangle inequality, we have: 
 \begin{align*}
      & \sigma(\langle \mathbf{w}^{(t)}_{-1,r}, \boldsymbol{\xi}_i + \mathbf{v} \rangle) - \sigma(\langle \mathbf{w}^{(t)}_{-1,r}, \boldsymbol{\xi}_i  \rangle) + \sigma(\langle \mathbf{w}^{(t)}_{-1,r}, -\boldsymbol{\xi}_i + \mathbf{v} \rangle) - \sigma(\langle \mathbf{w}^{(t)}_{-1,r}, -\boldsymbol{\xi}_i  \rangle) \\
      & \le ( \langle \mathbf{w}^{(t)}_{-1,r},  \boldsymbol{\xi}_i \rangle  + |\langle \mathbf{w}^{(t)}_{-1,r},\mathbf{v} \rangle| )^2 +(- \langle \mathbf{w}^{(t)}_{-1,r},  \boldsymbol{\xi}_i \rangle  + |\langle \mathbf{w}^{(t)}_{-1,r},\mathbf{v} \rangle| )^2 - \langle \mathbf{w}^{(t)}_{-1,r},  \boldsymbol{\xi}_i  \rangle^2 \\
     & \le |\langle \mathbf{w}^{(t)}_{-1,r},  \boldsymbol{\xi}_i \rangle |^2 + 2 |\langle \mathbf{w}^{(t)}_{-1,r},\mathbf{v} \rangle|^2.
 \end{align*}
Next, we compare $ |\langle \mathbf{w}^{(t)}_{1,r},\mathbf{v} \rangle|$ and $ |\langle \mathbf{w}^{(t)}_{-1,r},\mathbf{v} \rangle|$ with $|\langle \mathbf{w}^{(t)}_{1,r},\boldsymbol{\xi}_i \rangle|$ and $|\langle \mathbf{w}^{(t)}_{-1,r}, \boldsymbol{\xi}_i\rangle|$. We show that
\begin{align*}
    |\langle \mathbf{w}^{(t)}_{-1,r},\mathbf{v} \rangle| & = \lambda |(\sum_{i:y_i=1} \underline{\rho}^{(t)}_{-1,r,i} + \langle \mathbf{w}^{(0)}_{-1,r}, \sum_{i:y_i=1} \boldsymbol{\xi}_i  \rangle)| \\
    & \le \lambda( n\sqrt{\log(12mn/\delta)}) \sigma_0 \sigma_p \sqrt{d})   \le \lambda n/4,
\end{align*} 
where the first inequality is by Lemma \ref{lem_inner_bound} and Lemma \ref{lem:gd_stage2}, and the second inequality is by the condition on $\sigma_0$ from Assumption \ref{ass:main}. Besides,
\begin{align*}
    |\langle \mathbf{w}^{(t)}_{1,r},\mathbf{v} \rangle| & = \lambda |(\sum_{i:y_i=1} \overline{\rho}^{(t)}_{1,r,i} + \langle \mathbf{w}^{(0)}_{1,r}, \sum_{i:y_i=1} \boldsymbol{\xi}_i  \rangle)| \\
    & \ge \lambda( n- n\sqrt{\log(12mn/\delta)}) \sigma_0 \sigma_p \sqrt{d} )   \ge \lambda n/2, 
\end{align*}
where the first inequality is by Lemma \ref{lem_inner_bound} and Lemma \ref{lem:gd_stage2}; and the second inequality is by the condition on $\sigma_0$ from Assumption \ref{ass:main}. 

Finally, by Lemma \ref{lem_inner_bound}, Proposition \ref{prop:upper_entire}, and Lemma \ref{lem_xi_bound} it holds that
\begin{align*}
     |\langle \mathbf{w}^{(t)}_{1,r},\boldsymbol{\xi}_i \rangle| & = |  \langle \mathbf{w}^{(0)}_{1,r},\boldsymbol{\xi}_i \rangle + \sum^n_{i'=1} \rho^{(t)}_{j,r,i'} \|\boldsymbol{\xi}_{i'} \|^{-2}_2 \langle \boldsymbol{\xi}_{i'}, \boldsymbol{\xi}_i \rangle |  \\
     & \le   \sqrt{\log(12mn/\delta)}) \sigma_0 \sigma_p \sqrt{d} +  8   \sqrt{\frac{\log(4n^2/\delta)}{d}} \sqrt{n} \alpha.
\end{align*}
On the other hand, it is observed that $\langle \mathbf{w}^{(t)}_{1,r} - \mathbf{w}^{(0)}_{1,r},\boldsymbol{\xi}_i \rangle \sim \mathcal{N}(0, \sigma^2_w ) $,
where the variance $\sigma_w$ follows
\begin{align*}
    \sigma^2_w  & = \sigma^2_p \sum_{k=1}^d (\sum_{i=1}^n \rho^{(t)}_{j,r,i} \| \boldsymbol{\xi}_i \|^{-2}_2 \xi_{i,k} )^2  \\
     & \overset{(a)} \ge \frac{1}{2} \sigma^2_p \sum_{k=1}^d \sum_{i=1}^n (\rho^{(t)}_{j,r,i})^2  \|\boldsymbol{\xi}_i \|^{-4}_2 \xi^2_{i,k}  \\
     &  = \frac{1}{2} \sigma^2_p  \sum_{i=1}^n (\rho^{(t)}_{j,r,i})^2  \|\boldsymbol{\xi}_i \|^{-2}_2 \\
     & \ge \frac{1}{3 d}   \sum_{i=1}^n (\rho^{(t)}_{j,r,i})^2 \ge \frac{n}{6d},   
\end{align*}
where (a) is by Lemma \ref{lem_xi_bound} and condition on $d$ from Assumption \ref{ass:main}, (b) is due to Lemma \ref{lem_xi_bound}, and (c) is by Lemma \ref{lem:gd_stage2}.

By the anti-concentration inequality of Gaussian variance, we have
\begin{align*}
    \mathbb{P}( |\langle \mathbf{w}^{(t)}_{1,r} -\mathbf{w}^{(0)}_{1,r},\boldsymbol{\xi}_i \rangle| \le \tau) & \le  2\mathrm{erf}( \frac{\tau}{\sqrt{2} \sigma_w})  \le  2\mathrm{erf}( \frac{\tau \sqrt{6d}}{\sqrt{2n} }) \\
    & \le 2 \sqrt{1 - \exp( - \frac{12d \tau^2}{ \pi n })}.
\end{align*}
Then with probability at least $1-\delta$, it holds that
\begin{align*}
    |\langle \mathbf{w}^{(t)}_{1,r} -\mathbf{w}^{(0)}_{1,r},\boldsymbol{\xi}_i\rangle| & \ge \sqrt{ \frac{\pi n }{12 d} \log(\frac{1}{1-(\delta/2)^2})}   \ge  \sqrt{ \frac{\pi n \delta^2}{96 d}   }, 
\end{align*}
where we have used $ \log(1+x) \ge \frac{x}{1+x}$ for $x > -1$ and $\delta^2 \le 1/8$.

Together, we conclude that 
  \begin{align*}
      \sum_{j \in \{  \pm 1 \}} [g(j \boldsymbol{\xi}_i + \mathbf{v}) - g(\boldsymbol{\xi}_i)] & \ge 4|\langle \mathbf{w}^{(t)}_{1,r},  \boldsymbol{\xi}_i \rangle |  |\langle \mathbf{w}^{(t)}_{1,r},\mathbf{v} \rangle|  + |\langle \mathbf{w}^{(t)}_{-1,r},  \boldsymbol{\xi}_i \rangle |^2 + 2 |\langle \mathbf{w}^{(t)}_{-1,r},\mathbf{v} \rangle|^2 \\
      & \ge 4(\lambda/2) \sqrt{ \frac{\pi n \delta^2}{96 d}   }   \ge 4\tilde{\Omega}(\sigma^2_0 \|\boldsymbol{\mu} \|^2_2),
  \end{align*}
where the final inequality holds by $\sigma^2_0 \le \tilde{O}(\frac{1}{d^{5/4} \| \boldsymbol{\mu} \|^2_2}) $ with  $\delta$ chosen as $ d^{-1/4}$, thus completing the proof.
\end{proof}

\begin{proof} [Proof of Theorem \ref{thm:GD}]
For the population loss, we expand the expression
\begin{align*}
  L^{0-1}_\mathcal{D}(\mathbf{W}^{(t)}) &  = \mathbb{E}_{(\mathbf{x}, y) \sim \mathcal{D}} [ \sone(y \neq \mathrm{sign}(f(\mathbf{W}, \mathbf{x})) ] = \mathbb{P} ( y f(\mathbf{W}^{(t)}, \mathbf{x}) < 0) \\
      &= \mathbb{P} \Big(   \frac{1}{m} \sum_{r=1}^m  \sigma(\langle \mathbf{w}_{-y, r}^{(t)}, \boldsymbol{\xi}_i \rangle) -  \frac{1}{m}\sum_{r=1}^m \sigma(\langle \mathbf{w}_{y, r}^{(t)}, \boldsymbol{\xi}_i \rangle)  \geq  \\
      & \qquad  \qquad \frac{1}{m}  \sum_{r=1}^m \sigma(\langle \mathbf{w}_{y, r}^{(t)}, y \boldsymbol{\mu} \rangle) -  \frac{1}{m}\sum_{r=1}^m \sigma(\langle \mathbf{w}_{-y, r}^{(t)}, y \boldsymbol{\mu} \rangle) \Big). 
\end{align*}
Recall the weight decomposition:
\begin{align*} 
    \mathbf{w}_{j,r}^{(t)}  = \mathbf{w}_{j,r}^{(0)} + j \gamma_{j,r}^{(t)} \| \boldsymbol{\mu} \|^{-2}_2 \boldsymbol{\mu}  + \sum_{i = 1}^n \overline{\rho}^{(t)}_{j,r,i} \| \boldsymbol{\xi}_i \|^{-2}_2 \boldsymbol{\xi}_i + \sum_{i = 1}^n \underline{\rho}^{(t)}_{j,r,i} \| \boldsymbol{\xi}_i \|^{-2}_2 \boldsymbol{\xi}_i.
\end{align*}
Then we conclude that: 
\begin{align*}
\langle \mathbf{w}_{-y, r}^{(t)}, y \boldsymbol{\mu} \rangle &  = \langle \mathbf{w}_{-y,r}^{(0)}, y \boldsymbol{\mu}    \rangle  -  \gamma^{(t)}_{-y,r}, \\
    \langle \mathbf{w}_{y, r}^{(t)}, y \boldsymbol{\mu} \rangle &  = \langle \mathbf{w}_{y,r}^{(0)}, y \boldsymbol{\mu}    \rangle +    \gamma^{(t)}_{y,r}.
\end{align*}
First, we provide the bound for the signal learning part:
\begin{align*}
  &  \frac{1}{m}  \sum_{r=1}^m \sigma(\langle \mathbf{w}_{y, r}^{(t)}, y \boldsymbol{\mu} \rangle) - \frac{1}{m}  \sum_{r=1}^m \sigma(\langle \mathbf{w}_{-y, r}^{(t)}, y \boldsymbol{\mu} \rangle) \\
  &  \le    \frac{1}{m}   \sum_{r=1}^m \sigma(\langle \mathbf{w}_{y, r}^{(t)}, y \boldsymbol{\mu} \rangle)  =  \frac{1}{m}  \sum_{r=1}^m  \sigma(\langle \mathbf{w}_{y,r}^{(0)}, y \boldsymbol{\mu}    \rangle +    \gamma^{(t)}_{y,r}) \\
     & \le   (\langle \mathbf{w}_{y,r}^{(0)}, y \boldsymbol{\mu}    \rangle +    \gamma^{(t)}_{y,r} )^2 \\
     & \le \tilde{O}( \sigma^2_0 \|\boldsymbol{\mu} \|^2_2),
\end{align*}
where the first and second inequalities follow from the properties of the squared ReLU function, and the last inequality is by Lemma \ref{lem_inner_bound} and Lemma \ref{lem:gd_stage2}. 

Denote that $g(\boldsymbol{\xi}_i) = \frac{1}{m} j \sum_{j,r} \sigma(\langle \mathbf{w}_{j,r}^{(t)}, \boldsymbol{\xi}_i \rangle)$. It follows that:
\begin{align*}
   & \mathbb{P} ( y f(\mathbf{W}^{(t)}, \mathbf{x}) < 0) 
     \\ &= \mathbb{P} \Big(   \frac{1}{m} \sum_{r=1}^m  \sigma(\langle \mathbf{w}_{-y, r}^{(t)}, \boldsymbol{\xi}_i \rangle) -  \frac{1}{m}\sum_{r=1}^m \sigma(\langle \mathbf{w}_{y, r}^{(t)}, \boldsymbol{\xi}_i \rangle)  \geq  \frac{1}{m}  \sum_{r=1}^m \sigma(\langle \mathbf{w}_{y, r}^{(t)}, y \boldsymbol{\mu} \rangle) -  \frac{1}{m}\sum_{r=1}^m \sigma(\langle \mathbf{w}_{-y, r}^{(t)}, y \boldsymbol{\mu} \rangle) \Big) \\
     & \ge 0.5 \mathbb{P} \Big(   |g(\boldsymbol{\xi}_i)| \geq  \tilde{\Omega}(\sigma^2_0 \|\boldsymbol{\mu} \|^2_2 ) \Big). 
\end{align*}
Define the set $
    \mathcal{A} = \{ \boldsymbol{\xi}_i:  |g(\boldsymbol{\xi}_i)| \geq  \tilde{\Omega}(\sigma^2_0 \|\boldsymbol{\mu} \|^2_2 ) \}$. By Lemma \ref{lem:lower_population}, we have:
    \begin{align*}
    \sum_{j \in \{  \pm 1 \}} [g(j \boldsymbol{\xi}_i + \mathbf{v}) - g(\boldsymbol{\xi}_i)] \ge 4\tilde{\Omega}(\sigma^2_0 \|\boldsymbol{\mu} \|^2_2).
\end{align*}
  Thus, there must exist at least one of $\boldsymbol{\xi}_i$, $\boldsymbol{\xi}_i + \mathbf{v}$, $ -\boldsymbol{\xi}_i$ and $-\boldsymbol{\xi}_i + \mathbf{v}$ that belongs to $\mathcal{A}$ and the probability is larger than 0.25. Furthermore, we have:
  \begin{align*}
      |\mathbb{P}(\mathcal{A}) - \mathbb{P}(\mathcal{A} -\mathbf{v})| & = | \mathbb{P}_{\boldsymbol{\xi}_i \sim \mathcal{N}(\boldsymbol{0},\sigma^2_p \mathbf{I} )}(\boldsymbol{\xi}_i \in \mathcal{A}) -\mathbb{P}_{\boldsymbol{\xi}_i \sim \mathcal{N}(\mathbf{v},\sigma^2_p \mathbf{I} )}(\boldsymbol{\xi}_i \in \mathcal{A}) | \\
      & \le  \frac{\| \mathbf{v} \|_2}{2 \sigma_p} \le 0.02,
  \end{align*} 
where the first inequality is by Proposition 2.1 in \cite{devroye2018total} and the second inequality is by $ \| \mathbf{v} \|_2 \le 0.01 \sigma_p $ according to Lemma \ref{lem:lower_population}. Combined with that $ \mathbb{P}(\mathcal{A}) =  \mathbb{P}(-\mathcal{A})$, we finally achieve that
$ \mathbb{P}(\mathcal{A}) \ge 0.24$, corresponding to the second bullet result. Combined with Lemma \ref{lem:gd_stage2}, which establishes the first bullet point, this completes the proof of \ref{thm:GD}

\end{proof}

\section{label noise GD Successfully Generalizes with Low SNR}

\subsection{Proof of Lemma \ref{lem:lngd_stage1}}

\begin{lemma} [Lower bound on $\gamma_{j,r}^{(t)}$] \label{lem:lowergamma_stage1}
     Under Assumption \ref{ass:main}, during the first stage, where $ 0 \le t \le T_1 = \frac{nm \log(1/(\sigma_0 \sigma_p \sqrt{d}))}{\eta  \sigma^2_p d}$, there exists an lower bound for $\gamma^{(t)}_{j,r}$, for all $j$:
 \begin{align*}
    \max_{r \in [m]}  \gamma_{j,r}^{(t)} + |\langle \mathbf{w}_{j,r}^{(0)}, \boldsymbol{\mu} \rangle| \geq\exp\big( \frac{ C_0 \eta \| \boldsymbol{\mu}\|^2_2}{8m}  t \big) \max_{r \in [m]} |\langle \mathbf{w}_{j,r}^{(0)}, \boldsymbol{\mu} \rangle|. 
 \end{align*}   
 where $C_0$ is the lower bound on $|\tilde{\ell}^{'(t)}| \ge C_0$ is the first stage.
\end{lemma}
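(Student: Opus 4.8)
The plan is to adapt the upper-bound argument of Lemma~\ref{lem:uppergamma_stage1} into a \emph{multiplicative lower bound} for a single, maximally signal-aligned neuron, and then pass to the maximum over $r$. Fix $j$ and set $r^\ast = \arg\max_{r\in[m]} j\langle \mathbf{w}^{(0)}_{j,r}, \boldsymbol{\mu}\rangle$; by Lemma~\ref{lem_inner_bound} this neuron has $j\langle \mathbf{w}^{(0)}_{j,r^\ast},\boldsymbol{\mu}\rangle \ge \sigma_0\|\boldsymbol{\mu}\|_2/2 > 0$. I track $A^{(t)} := \gamma^{(t)}_{j,r^\ast} + j\langle \mathbf{w}^{(0)}_{j,r^\ast}, \boldsymbol{\mu}\rangle$, which coincides with $\gamma^{(t)}_{j,r^\ast} + |\langle \mathbf{w}^{(0)}_{j,r^\ast},\boldsymbol{\mu}\rangle|$ for this neuron and stays nonnegative since $\gamma^{(t)}_{j,r^\ast}\ge 0$ by Proposition~\ref{prop:upper_entire}.

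First I would simplify the activation pattern on $r^\ast$. Since $\boldsymbol{\xi}_i\perp\boldsymbol{\mu}$, Lemma~\ref{lemma:mu_xi_inner_bound} gives $\langle \mathbf{w}^{(t)}_{j,r^\ast}, y_i\boldsymbol{\mu}\rangle = y_i\big(\langle \mathbf{w}^{(0)}_{j,r^\ast},\boldsymbol{\mu}\rangle + j\gamma^{(t)}_{j,r^\ast}\big)$, which equals $A^{(t)}>0$ when $y_i=j$ and $-A^{(t)}<0$ when $y_i=-j$. Hence $\sigma'(\langle \mathbf{w}^{(t)}_{j,r^\ast}, y_i\boldsymbol{\mu}\rangle)=2A^{(t)}\,\sone(y_i=j)$, so only the $y_i=j$ samples contribute and the update \eqref{eq:gamma_update} collapses to
\[
A^{(t+1)} = A^{(t)}\Big(1 + \tfrac{2\eta\|\boldsymbol{\mu}\|^2_2}{nm}\textstyle\sum_{i:\,y_i=j}(-\tilde{\ell}^{'(t)}_i)\,\epsilon^{(t)}_i\Big).
\]

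The crux is to show the signed sum is a positive constant fraction of $n$. Splitting by the sign of $\epsilon^{(t)}_i$ and using $-\tilde{\ell}^{'(t)}_i\in[C_0,1]$ (the lower bound is the stage-1 property $|\tilde{\ell}^{'(t)}_i|\ge C_0$, the upper bound is $-\ell'\le 1$), I obtain $\sum_{i:y_i=j}(-\tilde{\ell}^{'(t)}_i)\epsilon^{(t)}_i \ge C_0|\mathcal{S}_+^{(t)}\cap\mathcal{S}_j| - |\mathcal{S}_-^{(t)}\cap\mathcal{S}_j|$. Plugging in the concentration bounds $|\mathcal{S}_+^{(t)}\cap\mathcal{S}_j|\ge (2-3p)n/4$ and $|\mathcal{S}_-^{(t)}\cap\mathcal{S}_j|\le 3pn/4$ from Lemma~\ref{lem_s_bound}, and using that $p<1/C$ is small enough that $C_0(2-3p)-3p\ge C_0/2$, yields $\sum_{i:y_i=j}(-\tilde{\ell}^{'(t)}_i)\epsilon^{(t)}_i \ge C_0 n/8$. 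Therefore $A^{(t+1)}\ge (1+\frac{C_0\eta\|\boldsymbol{\mu}\|^2_2}{4m})A^{(t)}$, and since $\eta\|\boldsymbol{\mu}\|^2_2/m$ is tiny under Assumption~\ref{ass:main} the inequality $1+x\ge e^{x/2}$ applies; iterating gives $A^{(t)}\ge \exp(\frac{C_0\eta\|\boldsymbol{\mu}\|^2_2}{8m}t)A^{(0)}$. Taking the maximum over $r$ and recalling $A^{(0)}=\max_r j\langle \mathbf{w}^{(0)}_{j,r},\boldsymbol{\mu}\rangle$ finishes the claim.

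The main obstacle is precisely the sign-indefinite contribution of the flipped labels ($\epsilon^{(t)}_i=-1$), which push $\gamma$ \emph{downward} and could a priori cancel the signal growth. One cannot take expectations directly because $\tilde{\ell}^{'(t)}_i=\ell'(\epsilon^{(t)}_i y_i f)$ itself depends on $\epsilon^{(t)}_i$; the key that unlocks the argument is that small initialization keeps $f$ near zero throughout the first stage, so $|\tilde{\ell}^{'(t)}_i|\ge C_0$ holds \emph{uniformly} over flipped and unflipped samples alike. This decouples the magnitude of the loss derivative from the label randomness and reduces the entire estimate to the high-probability counting bound on $|\mathcal{S}_\pm^{(t)}\cap\mathcal{S}_j|$, where the smallness of $p$ guarantees that the positive drift dominates.
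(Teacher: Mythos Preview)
Your proposal is correct and follows essentially the same approach as the paper: both reduce the signal update on the relevant neuron(s) to a multiplicative recursion by observing that only the $y_i=j$ samples activate, then lower bound the signed sum $\sum (-\tilde{\ell}^{'(t)}_i)\epsilon^{(t)}_i$ via $C_0|\mathcal{S}_+^{(t)}\cap\mathcal{S}_j|-|\mathcal{S}_-^{(t)}\cap\mathcal{S}_j|$ and the concentration bounds of Lemma~\ref{lem_s_bound}, yielding the factor $(1+\tfrac{C_0\eta\|\boldsymbol{\mu}\|_2^2}{4m})$ and hence the exponential growth. The only cosmetic difference is that you fix one maximally aligned neuron $r^\ast$ from the outset (so positivity of $A^{(t)}$ is immediate), whereas the paper carries a case split on the sign of $\langle\mathbf{w}^{(t)}_{j,r},\boldsymbol{\mu}\rangle$ before specializing to $j=\pm1$; your route is slightly cleaner but the content is the same.
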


\begin{proof} [Proof of Lemma \ref{lem:lowergamma_stage1}]
If $\langle \mathbf{w}^{(t)}_{j,r}, \boldsymbol{\mu} \rangle \geq 0$, then
\begin{align*}
    \gamma_{j,r}^{(t+1)} &= \gamma_{j,r}^{(t)} - \frac{\eta}{nm} \sum_{i=1}^n \elldi \sigma'(\langle \mathbf{w}_{j,r}^{(t)}, y_i \boldsymbol{\mu} \rangle)  \| \boldsymbol{\mu}\|^{2}_2 \epsilon_i^{(t)} \\
    &= \gamma^{(t)}_{j,r} - \frac{\eta}{nm} \Big[  \sum_{i \in \mathcal{S}_+^{(t)} } \elldi \sigma' (\langle \mathbf{w}_{j,r}^{(t)}, y_i \boldsymbol{\mu} \rangle)  - \sum_{i \in \mathcal{S}_{-}^{(t)}}  \elldi \sigma' (\langle \mathbf{w}_{j,r}^{(t)}, y_i \boldsymbol{\mu} \rangle)  \Big] \| \boldsymbol{\mu} \|^2_2 \\
    &= \gamma_{j,r}^{(t)} - \frac{2 \eta}{nm} \Big[  \sum_{i \in \mathcal{S}_{+}^{(t)} \cap \mathcal{S}_1} \elldi  - \sum_{i \in \mathcal{S}_{-}^{(t)} \cap \mathcal{S}_{1}} \elldi  \Big] \langle \mathbf{w}_{j,r}^{(t)}, \boldsymbol{\mu} \rangle \| \boldsymbol{\mu}\|^2_2 \\
    &\geq \gamma_{j,r}^{(t)} + \frac{2\eta}{nm} \big( C_0 |\mathcal{S}_+^{(t)}\cap \mathcal{S}_1| - |\mathcal{S}_{-}^{(t)} \cap \mathcal{S}_1| \big) \langle \mathbf{w}_{j,r}^{(t)}, \boldsymbol{\mu} \rangle \| \boldsymbol{\mu}\|^2_2.
\end{align*}
Note that we have defined $\mathcal{S}_{\pm}^{(t)} = \{ i : \epsilon_i^{(t)} = \pm1\}$ and $\mathcal{S}_j = \{ i: y_i = j\}$ in Lemma \ref{lem_s_bound}. 

On the other hand, when $\langle \mathbf{w}_{j,r}^{(t)}, \boldsymbol{\mu} \rangle < 0$, 
\begin{align*}
    \gamma_{j,r}^{(t+1)} &= \gamma_{j,r}^{(t)} - \frac{2 \eta}{nm} \Big[  \sum_{i \in \mathcal{S}_{+}^{(t)} \cap \mathcal{S}_{-1}} \elldi  - \sum_{i \in \mathcal{S}_{-}^{(t)} \cap \mathcal{S}_{-1}} \elldi  \Big] \langle -\mathbf{w}_{j,r}^{(t)}, \boldsymbol{\mu} \rangle \| \boldsymbol{\mu}\|^2_2 \\
    &\geq \gamma_{j,r}^{(t)} + \frac{2 \eta}{nm} \big( C_0 | \mathcal{S}_{+}^{(t)} \cap \mathcal{S}_{-1} | - |\mathcal{S}_{-}^{(t)} \cap \mathcal{S}_{-1}| \big) \langle -\mathbf{w}_{j,r}^{(t)}, \boldsymbol{\mu} \rangle \| \boldsymbol{\mu}\|^2_2.
\end{align*}

By Lemma \ref{lem_s_bound}, we have 
\begin{align*}
    \frac{|\mathcal{S}_+^{(t)} \cap \mathcal{S}_1|}{|\mathcal{S}_{-}^{(t)} \cap \mathcal{S}_1|}, \frac{|\mathcal{S}_+^{(t)} \cap \mathcal{S}_{-1}|}{|\mathcal{S}_{-}^{(t)} \cap \mathcal{S}_{-1}|} & \geq \frac{(1-p)n - \sqrt{2n \log(8T^*/\delta)}}{pn + \sqrt{2n \log(8T^*/\delta)}}, \\
    \quad |\mathcal{S}_+^{(t)} \cap \mathcal{S}_1|,  |\mathcal{S}_+^{(t)} \cap \mathcal{S}_{-1}| & \geq (1-p)n - \sqrt{2n \log(8T^*/\delta)}. 
\end{align*}
These hold with probability at least $1-\delta$. This suggests that when $p < C_0/6, n \geq 72 C_0^{-2} \log(8T^*/\delta)$, we have:
\begin{align*}
    & |\mathcal{S}_+^{(t)} \cap \mathcal{S}_1| \geq \frac{2}{C_0} |\mathcal{S}_{-}^{(t)} \cap \mathcal{S}_1|, \quad |\mathcal{S}_+^{(t)} \cap \mathcal{S}_{-1}| \geq \frac{2}{C_0} |\mathcal{S}_{-}^{(t)} \cap \mathcal{S}_{-1}|, \\
    & |\mathcal{S}_+^{(t)} \cap \mathcal{S}_1|, |\mathcal{S}_{+}^{(t)} \cap \mathcal{S}_{-1}| \geq \frac{n}{4}.
\end{align*}
Hence, we have:
\begin{align*}
    \gamma_{j,r}^{(t+1)} &\geq \gamma_{j,r}^{(t)} + \frac{C_0 \eta \| \boldsymbol{\mu}\|_2^2}{4 m}  \langle \mathbf{w}_{j,r}^{(t)}, \boldsymbol{\mu} \rangle = \gamma_{j,r}^{(t)} + \frac{C_0 \eta \| \boldsymbol{\mu}\|_2^2}{4 m} \big(  \langle \mathbf{w}_{j,r}^{(0)}, \boldsymbol{\mu} \rangle + j \gamma_{j,r}^{(t)} \big), \quad \text{ if } \langle \mathbf{w}^{(t)}_{j,r}, \boldsymbol{\mu} \rangle \geq 0 \\
    \gamma_{j,r}^{(t+1)} &\geq \gamma_{j,r}^{(t)} -  \frac{C_0 \eta \| \boldsymbol{\mu}\|_2^2}{4 m}  \langle \mathbf{w}_{j,r}^{(t)}, \boldsymbol{\mu} \rangle = \gamma_{j,r}^{(t)} -\frac{C_0 \eta \| \boldsymbol{\mu}\|_2^2}{4 m} \big(  \langle \mathbf{w}_{j,r}^{(0)}, \boldsymbol{\mu} \rangle + j \gamma_{j,r}^{(t)} \big) , \quad \text{ if }\langle \mathbf{w}^{(t)}_{j,r}, \boldsymbol{\mu} \rangle < 0.
\end{align*}

When $j = 1$, due to the increase of $\gamma_{j,r}^{(t)}$, we have 
\begin{align*}
   \gamma_{1,r}^{(t+1)} \geq \gamma_{1,r}^{(t)} + \frac{C_0 \eta \| \boldsymbol{\mu}\|_2^2}{4 m} \big( \langle \mathbf{w}_{1,r}^{(0)}, \boldsymbol{\mu} \rangle + \gamma_{1,r}^{(t)} \big).
\end{align*}
Let $B^{(t)}_j = \max_{r \in [m]} \{ \gamma_{j,r}^{(t)} + j \langle \mathbf{w}_{j,r}^{(0)}, \boldsymbol{\mu} \rangle \}$, then we have
\begin{align*}
    B^{(t+1)}_1 & \geq \big(1 + \frac{C_0 \eta \| \boldsymbol{\mu}\|^2_2}{4m} \big) B^{(t)}_1 \geq \big(1 + \frac{C_0 \eta \| \boldsymbol{\mu}\|^2_2}{4m} \big)^{t} B^{(0)}_1 \\
     &\geq \exp \big( \frac{C_0 \eta \| \boldsymbol{\mu}\|^2_2}{8m} t\big) \max_r \langle \mathbf{w}^{(0)}_{1,r}, \boldsymbol{\mu} \rangle \\
    &\geq \exp \big( \frac{C_0 \eta \| \boldsymbol{\mu}\|^2_2}{8m} t\big) \frac{\sigma_0 \| \boldsymbol{\mu}\|_2}{2},
\end{align*}
where we use the fact that $1 + x \geq \exp(x/2)$ for $x \leq 2$.

Similarly when $j = -1$, we have $\gamma_{-1, r}^{(t+1)} \geq \gamma_{-1, r}^{(t)} - \frac{C_0 \eta \| \boldsymbol{\mu}\|^2_2}{4m} (\langle \mathbf{w}_{-1,r}^{(0)}, \boldsymbol{\mu} \rangle - \gamma_{-1,r}^{(t)})$ and
\begin{align*}
    B_{-1}^{(t+1)} & \geq \big(1 + \frac{C_0 \eta \| \boldsymbol{\mu}\|^2_2}{4m} \big)B_{-1}^{(t)} \geq \big(1 + \frac{C_0 \eta \| \boldsymbol{\mu}\|^2_2}{4m} \big)^{t} B^{(0)}_{-1} \\
    &\geq  \exp \big( \frac{C_0 \eta \| \boldsymbol{\mu}\|^2_2}{8m} t\big) \max_r \langle -\mathbf{w}^{(0)}_{-1,r}, \boldsymbol{\mu} \rangle \\
    &\geq \exp \big( \frac{C_0 \eta \| \boldsymbol{\mu}\|^2_2}{8m} t\big) \frac{\sigma_0 \| \boldsymbol{\mu}\|_2}{2}.
\end{align*}
Thus, we obtain $B_j^{(t)} \geq \exp \big( \frac{C_0 \eta \| \boldsymbol{\mu}\|^2_2}{8m} t\big) \frac{\sigma_0 \| \boldsymbol{\mu}\|_2}{2}$, $\forall j \in \{\pm1 \}$. 
\end{proof}

\begin{lemma} 
\label{lem_betabar_bound_new}
Let $\bar{\beta} = \min_{i \in [n]} \max_{r \in [m]} \langle \mathbf{w}^{(0)}_{y_i, r}, \boldsymbol{\xi}_i \rangle$. Suppose that $\sigma_0 \geq 160n   \sqrt{\frac{\log(4n^2/\delta)}{d}} (\sigma_p \sqrt{d})^{-1} \alpha d^{1/4}$ . Then we have that
$    \bar \beta/d^{1/4}  \geq   40n \sqrt{\frac{\log(4n^2/\delta)}{d}} \alpha$.
\end{lemma}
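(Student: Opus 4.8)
The plan is to follow verbatim the argument of Lemma~\ref{lem_betabar_bound}, since the present statement differs only by an extra factor of $d^{1/4}$ that is absorbed into both the hypothesis on $\sigma_0$ and the conclusion. The single substantive ingredient is the anti-concentration lower bound at initialization established in Lemma~\ref{lem_inner_bound}, which guarantees that with high probability $\max_{r \in [m]} j \langle \mathbf{w}_{j,r}^{(0)}, \boldsymbol{\xi}_i \rangle \geq \sigma_0 \sigma_p \sqrt{d}/4$ for all $j \in \{\pm 1\}$ and $i \in [n]$. First I would apply this estimate with $j = y_i$ and take the minimum over $i \in [n]$, which immediately yields $\bar\beta = \min_{i \in [n]} \max_{r \in [m]} \langle \mathbf{w}^{(0)}_{y_i, r}, \boldsymbol{\xi}_i \rangle \geq \sigma_0 \sigma_p \sqrt{d}/4$.

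With this lower bound in hand, I would insert the hypothesized lower bound on the initialization scale. Substituting $\sigma_0 \geq 160 n \sqrt{\log(4n^2/\delta)/d}\,(\sigma_p\sqrt{d})^{-1} \alpha\, d^{1/4}$ gives
\begin{align*}
\bar\beta \;\geq\; \frac{\sigma_0 \sigma_p \sqrt{d}}{4} \;\geq\; \frac{1}{4}\cdot 160\, n \sqrt{\tfrac{\log(4n^2/\delta)}{d}}\,\alpha\, d^{1/4} \;=\; 40\, n \sqrt{\tfrac{\log(4n^2/\delta)}{d}}\,\alpha\, d^{1/4},
\end{align*}
where the factors $\sigma_p\sqrt{d}$ cancel exactly. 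Dividing both sides by $d^{1/4}$ then delivers the claimed inequality $\bar\beta/d^{1/4} \geq 40\, n \sqrt{\log(4n^2/\delta)/d}\,\alpha$.

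There is no genuine obstacle here: the entire content is the anti-concentration estimate, which is already established, and the remainder is arithmetic manipulation of the $\sigma_0$ condition. The only point requiring minor care is to confirm that the high-probability event of Lemma~\ref{lem_inner_bound} holds under Assumption~\ref{ass:main}, so that the bound applies simultaneously for every $i \in [n]$ before taking the minimum; this is precisely the regime $d = \Omega(\log(mn/\delta))$ and $m = \Omega(\log(1/\delta))$ guaranteed by parts $(i)$ and $(ii)$ of Assumption~\ref{ass:main}. This lemma is the direct counterpart of Lemma~\ref{lem_betabar_bound} tailored to the label noise GD analysis, where the additional $d^{1/4}$ slack in the initialization scale is needed downstream to control the noise-memorization increments.
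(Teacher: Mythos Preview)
Your proposal is correct and takes essentially the same approach as the paper: invoke the anti-concentration bound from Lemma~\ref{lem_inner_bound} to obtain $\bar\beta \geq \sigma_0\sigma_p\sqrt{d}/4$, then substitute the assumed lower bound on $\sigma_0$ and divide through by $d^{1/4}$. The paper's own proof is in fact even terser than yours, omitting the explicit cancellation and the verification that Assumption~\ref{ass:main} triggers the hypotheses of Lemma~\ref{lem_inner_bound}.
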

\begin{proof} [Proof of Lemma \ref{lem_betabar_bound_new}]
The proof follows from Lemma \ref{lem_inner_bound}. It is known that, with high probability, we have
$ \overline{\beta} \ge \sigma_0 \sigma_p \sqrt{d} /4$. By substituting the condition for $\sigma_0$, we obtain
\begin{align*}
     \overline{\beta}/d^{1/4} \ge  40n \sqrt{\frac{\log(4n^2/\delta)}{d}} \alpha .
\end{align*}
\end{proof}

\begin{lemma} [Lower bound on $\overline{\rho}^{(t)}_{j,r,i}$] \label{lem:lower_rho_lbgd_s1}
   Let $\bar{\beta} = \min_{i \in [n]} \max_{r \in [m]} \langle \mathbf{w}^{(0)}_{y_i, r}, \boldsymbol{\xi}_i \rangle$ and ${A}^{(t)}_{y_i, r, i} \coloneqq \overline{\rho}^{t}_{j,r,i} + \langle \mathbf{w}^{(0)}_{j,r}, \boldsymbol{\xi}_i \rangle - 0.4 \bar \beta/d^{1/4} $. Under Assumption \ref{ass:main}, if $\langle \mathbf{w}^{(0)}_{j,r} , \boldsymbol{\xi}_i \rangle \geq  \bar \beta$, then at time step $  T_1 = \frac{nm \log(1/(\sigma_0 \sigma_p \sqrt{d}))}{\eta  \sigma^2_p d}$, with high probability, it holds that 
\begin{align*}
   A_{y_i,r,i}^{(T_1)}  \geq (1 + \frac{\eta C_0 \sigma^2_p d}{2nm} )^{T_1}   A^{(0)}_{y_i,r,i}. 
\end{align*}
\end{lemma}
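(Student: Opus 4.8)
The plan is to reduce everything to tracking the single inner product $g^{(t)} := \langle \mathbf{w}_{y_i,r}^{(t)}, \boldsymbol{\xi}_i\rangle$ (recall $\overline{\rho}^{(t)}_{j,r,i}$ is supported on $j=y_i$, so we are implicitly in that case) and then converting the additive update (\ref{eq:rho1_update}) into a per-step multiplicative recursion for $A^{(t)}_{y_i,r,i}$. First I would use the signal-noise decomposition with Lemma \ref{lemma:mu_xi_inner_bound} to write $g^{(t)} = \overline{\rho}^{(t)}_{y_i,r,i} + \langle \mathbf{w}^{(0)}_{y_i,r},\boldsymbol{\xi}_i\rangle + e^{(t)}$ where $|e^{(t)}| \le 8\sqrt{\log(4n^2/\delta)/d}\,n\alpha$; Lemma \ref{lem_betabar_bound_new} bounds this error by $0.2\bar\beta/d^{1/4}$, yielding the two-sided relation $A^{(t)}_{y_i,r,i} + 0.2\bar\beta/d^{1/4} \le g^{(t)} \le A^{(t)}_{y_i,r,i} + 0.6\bar\beta/d^{1/4}$. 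In particular, whenever $A^{(t)}_{y_i,r,i} > 0$ we get $g^{(t)} > 0$, so the squared-ReLU derivative is active, $\sigma'(g^{(t)}) = 2g^{(t)}$. The hypothesis $\langle \mathbf{w}^{(0)}_{y_i,r},\boldsymbol{\xi}_i\rangle \ge \bar\beta$ gives $A^{(0)}_{y_i,r,i} \ge \bar\beta(1 - 0.4 d^{-1/4}) > 0$, supplying the induction base.

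Next I would extract the two per-step factors, using $-\elldi \in [C_0,1]$ in the first stage together with $\|\boldsymbol{\xi}_i\|_2^2 \in [\sigma_p^2 d/2,\, 3\sigma_p^2 d/2]$ from Lemma \ref{lem_xi_bound}. Writing $c_+ := \frac{\eta C_0\sigma_p^2 d}{nm}$, a step with $\epsilon_i^{(t)} = 1$ increases $A^{(t)}$ by at least $\frac{\eta C_0\sigma_p^2 d}{nm} g^{(t)} \ge c_+ A^{(t)}_{y_i,r,i}$, while a step with $\epsilon_i^{(t)} = -1$ decreases it by at most $\frac{3\eta\sigma_p^2 d}{nm} g^{(t)} \le \frac{3c_+}{C_0}\big(A^{(t)}_{y_i,r,i} + 0.6\bar\beta/d^{1/4}\big)$. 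Thus an ``up'' step multiplies by at least $(1+c_+)$ and a ``down'' step by at least $(1 - \tfrac{3c_+}{C_0})$, modulo the additive drift $\tfrac{3c_+}{C_0}\cdot 0.6\bar\beta/d^{1/4}$. Because $c_+ \le \tilde O(1/(nm)) \ll 1$ under Assumption \ref{ass:main}, every factor is positive, so positivity of $A^{(t)}$ is self-maintaining and the induction closes; I would then check that the total drift is negligible, since summing it over the at most $\tfrac{3pT_1}{2}$ down steps gives $O\big(p\log(1/(\sigma_0\sigma_p\sqrt d))\,\bar\beta/d^{1/4}\big)$, which is $o(\bar\beta)\le o(A^{(0)})$ because $c_+T_1 = C_0\log(1/(\sigma_0\sigma_p\sqrt d)) = \tilde O(1)$ and $d^{-1/4}=o(1)$.

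Finally I would count signs and multiply the factors. Lemma \ref{lem_Si_pm_hbound} gives, with high probability, at most $N_- \le \tfrac{3pT_1}{2}$ down steps (hence $N_+ \ge T_1 - \tfrac{3pT_1}{2}$ up steps) in the first $T_1$ iterations; its hypothesis $T_1 \ge 2\log(4n/\delta)/p^2$ follows from the lower bound $p > C\log d/\sqrt{mn}$ in Assumption \ref{ass:main}(v) together with $T_1 = \tilde\Omega(nm)$. Taking logarithms of the product and using $\log(1+x)\ge x-x^2/2$, $\log(1-x)\ge -x-x^2$ for the tiny arguments $c_+$ and $3c_+/C_0$, I obtain $\log\frac{A^{(T_1)}}{A^{(0)}} \ge c_+\big(N_+ - \tfrac{3}{C_0}N_-\big)(1-o(1)) \ge c_+ T_1\big[1 - \tfrac{3p}{2}(1 + \tfrac{3}{C_0})\big](1-o(1))$. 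Choosing the constant $C$ (hence $p\le 1/C$) large enough that $\tfrac{3p}{2}(1+\tfrac{3}{C_0})\le \tfrac12$ makes the bracket at least $\tfrac12$, so $\log\frac{A^{(T_1)}}{A^{(0)}} \ge \tfrac{T_1 c_+}{2} \ge T_1\log(1+\tfrac{c_+}{2})$, which is exactly the claim $A^{(T_1)}_{y_i,r,i} \ge \big(1+\tfrac{\eta C_0\sigma_p^2 d}{2nm}\big)^{T_1} A^{(0)}_{y_i,r,i}$.

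I expect the main obstacle to be the interplay between the randomness of the flips and the value-dependent (multiplicative) growth: unlike the standard-GD analysis in Lemma \ref{lem:lowerorho_stage1}, the down steps here actively shrink $A^{(t)}$, so the argument must show their cumulative drag is absorbed by the factor-of-two slack built into the target rate $c_+/2$. The two enabling observations are that each per-step factor depends only on the sign $\epsilon_i^{(t)}$ and on $A^{(t)}>0$ (so the product bound is insensitive to the order of the flips, and positivity is preserved automatically), and that the flip frequency $p$ is a small constant; the remaining technical care is in controlling the additive $\bar\beta/d^{1/4}$ drift and in verifying the two-sided comparison between $g^{(t)}$ and $A^{(t)}$ stays valid throughout the stage.
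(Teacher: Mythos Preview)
Your high-level route matches the paper's: sandwich $g^{(t)}=\langle\mathbf{w}_{y_i,r}^{(t)},\boldsymbol{\xi}_i\rangle$ between $A^{(t)}_{y_i,r,i}$ plus/minus a $\bar\beta/d^{1/4}$ error, derive a one-step multiplicative lower bound depending on the sign of $\epsilon_i^{(t)}$, count flips with Lemma~\ref{lem_Si_pm_hbound}, and multiply. The per-step inequalities you write are exactly those in the paper.

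The gap is in your treatment of positivity. You assert that ``positivity of $A^{(t)}$ is self-maintaining'' because the multiplicative factors are positive, and then separately bound the total additive drift by $o(A^{(0)})$. But these two pieces are coupled: a down step gives $A^{(t+1)}\ge(1-\tfrac{3c_+}{C_0})A^{(t)}-\tfrac{3c_+}{C_0}\cdot 0.6\bar\beta/d^{1/4}$, and this can push $A^{(t+1)}$ below zero if $A^{(t)}$ has already been driven small by a run of consecutive down steps. Once $g^{(t)}\le 0$ the squared-ReLU derivative vanishes and your multiplicative recursion breaks. Relatedly, the claim that ``the product bound is insensitive to the order of the flips'' is only true for the pure multiplicative part; the additive drift is amplified by subsequent up-factors, so order does matter. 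Finally, applying Lemma~\ref{lem_Si_pm_hbound} only at time $T_1$ gives no control on $N_-^{(t)}$ at intermediate $t$, which is exactly what the induction needs.

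The paper closes this gap by proving the stronger invariant $A^{(t)}_{y_i,r,i}\ge\zeta:=0.8\bar\beta/d^{1/4}$ for all $t\le T_1$, which lets the drift be absorbed into the multiplicative factor. This requires a two-regime argument: for early times $t\le 2\log(4n/\delta)/p^2$ (where Lemma~\ref{lem_Si_pm_hbound} part~2 does not apply) the paper takes the worst-case ordering in which all down steps occur first and shows, using the $d^{1/4}$ buffer between $A^{(0)}\sim\bar\beta$ and $\zeta$, that even $(1-\tfrac{3c_+}{C_0})^{pt+\sqrt{(t/2)\log(4n/\delta)}}A^{(0)}$ stays above $2\zeta$; for later times the Hoeffding bound $N_-^{(t)}\le\tfrac{3pt}{2}$ is available at every step and the induction closes directly. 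Your outline would go through once you insert this invariant (or an equivalent intermediate-time control); without it the induction is circular.
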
 

\begin{proof}[Proof of Lemma \ref{lem:lower_rho_lbgd_s1}]
First, consider $y_i = j$ as the case of $\overline{\rho}^{(t)}_{j,r,i}$. 
By Lemma \ref{lemma:mu_xi_inner_bound} and Lemma \ref{lem_betabar_bound}, when $y_i = j$,
\begin{align}
    |\langle \mathbf{w}_{j,r,}^{(t)}, \boldsymbol{\xi}_i \rangle - \langle \mathbf{w}^{(0)}_{j,r}, \boldsymbol{\xi}_i \rangle - \overline{\rho}^{(t)}_{j,r,i}|  \leq  16n \sqrt{\frac{\log(4n^2/\delta)}{d}}  \leq  0.4 \bar \beta/d^{1/4}.  \label{temp_eq_121}
\end{align}

From the update of $\overline{\rho}^{(t)}_{j,r,i}$, when $\epsilon_{i}^{(t)} = 1$ and $\langle \mathbf{w}^{(t)}_{j,r}, \boldsymbol{\xi}_i \rangle > 0$,
\begin{align*}
    \overline{\rho}^{(t+1)}_{j,r,i} &= \overline{\rho}^{(t)}_{j,r, i} - \frac{2 \eta}{nm} \elld_{i} \langle \mathbf{w}^{(t)}_{j,r} , \boldsymbol{\xi}_{i} \rangle \| \boldsymbol{\xi}_{i}\|^2_2 \epsilon_{i}^{(t)} \geq  \overline{\rho}^{(t)}_{j,r, i}  + \frac{\eta C_0 \sigma_p^2 d}{nm} \Big( \overline{\rho}^{(t)}_{j,r,i} + \langle \mathbf{w}^{(0)}_{j,r}, \boldsymbol{\xi}_{i} \rangle  -  0.4 \bar \beta/d^{1/4} \Big),
\end{align*}
On the other hand, when $\epsilon_{i}^{(t)} = -1$ and $\langle \mathbf{w}^{(t)}_{j,r}, \boldsymbol{\xi}_i \rangle > 0$, 
\begin{equation*}
    \overline{\rho}^{(t+1)}_{j,r,i} = \overline{\rho}^{(t)}_{j,r, i} - \frac{2 \eta}{nm} \elld_{i} \langle \mathbf{w}^{(t)}_{j,r} , \boldsymbol{\xi}_{i} \rangle \| \boldsymbol{\xi}_{i}\|^2_2 \epsilon_{i}^{(t)} \geq \overline{\rho}^{(t)}_{j,r,i} - \frac{3 \eta \sigma_p^2 d}{nm} \Big(  \overline{\rho}^{(t)}_{j,r,i} + \langle \mathbf{w}^{(0)}_{j,r}, \boldsymbol{\xi}_{i} \rangle  +  0.4 \bar \beta/d^{1/4} \Big).
\end{equation*}

For simplification of notations, denote $\zeta = 0.8 \bar \beta/d^{1/4} $. Let ${A}^{(t)}_{y_i, r, i} \coloneqq \overline{\rho}^{t}_{j,r,i} + \langle \mathbf{w}^{(0)}_{j,r}, \boldsymbol{\xi}_i \rangle - 0.4 \bar \beta/d^{1/4}  $. Then when $\epsilon^{(t)}_i = 1$, we have 
\begin{equation*}
     A^{(t+1)}_{y_i,r,i} \geq (1 + \frac{\eta C_0 \sigma^2_p d}{nm})   A^{(t)}_{y_i, r,i}, 
\end{equation*}
and when $\epsilon^{(t)}_i = -1$, we have 
\begin{equation*}
    A^{(t+1)}_{y_i,r,i} \geq (1- \frac{3\eta  \sigma^2_p d}{nm} ) A^{(t)}_{y_i, r,i}  - \frac{3\eta  \sigma^2_p d \zeta}{nm}  .
\end{equation*}

Here we prove when $\langle \mathbf{w}^{(0)}_{j,r} , \boldsymbol{\xi}_i \rangle \geq  \bar \beta$, $  A^{(t)}_{y_i, r,i} > \zeta $. The proof is by the induction method. 

First it is clear that $  A^{(0)}_{y_i, r,i} = \langle \mathbf{w}^{(0)}_{j,r}, \boldsymbol{\xi}_i \rangle -  0.5 \zeta > \zeta $ because $d \gg \Theta(1)$. Then we consider when $t \leq  \frac{2\log(4n/\delta)}{p^2}$ (where the condition for Lemma \ref{lem_Si_pm_hbound} does not hold). In this case, $|\mathcal{S}^{(t)}_+| \geq (1-p)t - \sqrt{\frac{t}{2} \log(\frac{4n}{\delta})}, |\mathcal{S}^{(t)}_{-}| \leq pt + \sqrt{\frac{t}{2} \log(\frac{4n}{\delta})}$. In addition, the worst case lower bound is achieved by the case where all the $\mathcal{S}^{(t)}_{-}$ events happen at the first few iterations. This gives 
\begin{align*}
  A^{(t)}_{y_i, r,i} &\geq (1+\frac{\eta C_0 \sigma^2_p d}{nm})^{(1-p)t - \sqrt{\frac{t}{2} \log(\frac{4n}{\delta})}} (1-\frac{3\eta \sigma^2_p d}{nm})^{pt + \sqrt{\frac{t}{2} \log(\frac{4n}{\delta})}}  A^{(0)}_{y_i,r,i} \\
    &\quad - (1+\frac{\eta C_0 \sigma^2_p d}{nm})^{(1-p)t - \sqrt{\frac{t}{2} \log(\frac{4n}{\delta})}} \bigg[ \sum_{s=0}^{pt + \sqrt{\frac{t}{2} \log(\frac{4n}{\delta})}} \big(1-\frac{3\eta \sigma^2_p d}{nm} \big)^s \bigg] \frac{\zeta \eta \sigma^2_p d}{3nm} \\
    &\geq (1+\frac{\eta C_0 \sigma^2_p d}{nm})^{(1-p)t - \sqrt{\frac{t}{2} \log(\frac{4n}{\delta})}} \big( (1-\frac{3\eta \sigma^2_p d}{nm})^{pt + \sqrt{\frac{t}{2} \log(\frac{4n}{\delta})}} A^{(0)}_{y_i,r,i}  -\zeta \big)  \\
    &\geq (1+\frac{\eta C_0 \sigma^2_p d}{nm})^{(1-p)t - \sqrt{\frac{t}{2} \log(\frac{4}{\delta})}} \zeta  \geq \zeta ,
\end{align*}
where the last inequality follows from the fact that $d \gg \Theta(1)$. To see this, suppose there exists a $t \leq \frac{2\log(4n/\delta)}{p^2}$ such that 
\begin{align*}
    (1-\frac{3\eta \sigma^2_p d}{nm})^{pt + \sqrt{\frac{t}{2} \log(\frac{4n}{\delta})}}  A^{(0)}_{y_i,r,i}  \leq 2 \zeta,
\end{align*}
then we have
\begin{align*}
    pt + \sqrt{\frac{t}{2} \log(\frac{4n}{\delta}}) \geq \frac{\log(d^{1/4}/2)}{\log \left(\frac{1}{1-\frac{3\eta \sigma^2_p d}{nm}} \right)},
\end{align*}
while $t \leq \frac{2\log(4/\delta)}{p^2}$ raises a contradiction by the choice of $d$. 
This proves for all $t \leq \frac{2\log(4/\delta)}{p^2}$, we have $(1-\frac{3\eta \sigma^2_p d}{nm})^{pt + \sqrt{\frac{t}{2} \log(\frac{4n}{\delta})}} A^{(0)}_{y_i,r,i}  \geq 2 \zeta $ and thus $ A^{(t)}_{y_i,r,i} \geq \zeta$.

Then we consider the case when $t \geq \frac{2\log(4/\delta)}{p^2}$ where the condition for Lemma \ref{lem_Si_pm_hbound} holds. Now suppose for all $s \leq t-1$, we have $  A^{(s)}_{y_i, r,i} \geq \zeta$, which clearly holds for $t = \frac{2\log(4n/\delta)}{p^2}$. For all $s \leq t-1$, we have $  A^{(s)}_{y_i,r,i} \geq (1-\frac{3\eta  \sigma^2_p d \zeta}{nm})  A^{(s)}_{y_i,r,i}$ when $\epsilon^{(s)}_i = -1$. This leads to the following lower bound for $  A^{(t)}_{y_i,r,i}$ as
\begin{align*}
     A^{(t)}_{y_i,r,i} &\geq (1+\frac{\eta C_0 \sigma^2_p d}{nm})^{(1-1.5p)t} (1-\frac{3\eta  \sigma^2_p d}{nm})^{1.5pt}   A^{(0)}_{y_i,r,i}\\
     & \geq (1 +  \frac{\eta C_0 \sigma^2_p d}{2nm})^{t}  A^{(0)}_{y_i,r,i} \geq \zeta,
\end{align*}
where the second last inequality follows from the choice of 
\begin{align*}
    p \leq \frac{2}{3} \frac{\log(1+\frac{\eta C_0 \sigma^2_p d}{nm}) - \log(1+ \frac{\eta C_0 \sigma^2_p d}{2nm} ) }{\log(1+\frac{\eta C_0 \sigma^2_p d}{nm}) - \log(1-\frac{3\eta \sigma^2_p d}{nm})}.
\end{align*} 
We can verify that $p = \frac{C_0}{24}$ satisfies the above inequality. This concludes the proof that, for all $t$, we have $  A^{(t)}_{y_i, r,i} \geq \zeta$ and thus for all $t$. Finally, we conclude that 
\begin{align*}
    A^{(t)}_{y_i, r,i} & \geq (1+\frac{\eta C_0 \sigma^2_p d}{nm})^{(1-1.5p)t} (1-\frac{2\eta \sigma^2_p d}{3nm})^{1.5pt}   A^{(0)}_{y_i,r,i} \\
     & \geq (1 +  \frac{\eta C_0 \sigma^2_p d}{2nm})^{t}  A^{(0)}_{y_i,r,i}.
\end{align*}
\end{proof}

With the above lemmas at hand, we are ready to prove Lemma \ref{lem:lngd_stage1}:
\begin{proof} [Proof of Lemma \ref{lem:lngd_stage1}]
   By Lemma \ref{lem:lower_rho_lbgd_s1}, at $t=T_1$, taking the maximum over $r$ yields 
\begin{align*}
     \max_r   A^{(t)}_{y_i,r,i} &\geq (1 + \frac{\eta C_0 \sigma^2_p d}{2nm})^{t} 0.6 \bar \beta \\
     & \geq (1 + \frac{\eta C_0 \sigma^2_p d}{2nm})^{t}  0.15  \sigma_0 \sigma_p \sqrt{d} \\
     & \geq   \exp\big(  \frac{\eta C_0 \sigma^2_p d}{4nm}  t \big)0.15 \sigma_0 \sigma_p \sqrt{d}, 
\end{align*}
where the first inequality is by $\max_r \langle \mathbf{w}_{j,r}^{(0)}, \boldsymbol{\xi}_i \rangle \geq \bar \beta$ and $0.4\bar \beta d^{-1/4} \leq 0.4 \bar \beta$. In the last inequality, we use $(1+z) \geq \exp(z/2)$ for $z \leq 2$.

Then we see $\max_{r} A^{(t)}_{y_i,r,i} \geq 1$ in at least $T_1 = \frac{\log(20/(\sigma_0 \sigma_p \sqrt{d})) 4nm}{\eta C_0 \sigma_p^2 d}$ and because $\max_{j,r} \overline{\rho}^{T_1}_{j,r,i} \geq  A^{T_1}_{y_i,r,i}- \max_{j , r } |\langle \mathbf{w}^{(0)}_{j, r}, \boldsymbol{\xi}_{i} \rangle| + 0.4 \bar \beta \geq 1$. 

Besides, by Lemma \ref{lem:upperurho_stage1}, we directly obtain the result that
\begin{align*}
    |\underline{\rho}_{j,r,i}^{(T_1)}| &   \leq \frac{3\eta \sigma^2_p d T_1}{nm} \sqrt{\log(8mn/\delta)} \sigma_0 \sigma_p \sqrt{d} = \tilde{O}(\sigma_0 \sigma_p \sqrt{d}).
\end{align*}
Furthermore, Lemma \ref{lem:uppergamma_stage1} yields
\begin{align*}
     \gamma_{j,r}^{(T_1)} + |\langle \mathbf{w}_{j,r}^{(0)}, \boldsymbol{\mu} \rangle| & \leq \exp\big( \frac{2\eta \| \boldsymbol{\mu}\|^2_2}{m}  \frac{4nm}{\eta \sigma^2_p d} \log(1/(\sigma_0 \sigma_p \sqrt{d})) \big) |\langle \mathbf{w}_{j,r}^{(0)}, \boldsymbol{\mu} \rangle|  \le 2|\langle \mathbf{w}_{j,r}^{(0)}, \boldsymbol{\mu} \rangle|,
\end{align*}
where we have used the condition of low SNR, namely $n \mathrm{SNR}^2 \le 1/\log(20/(\sigma_0 \sigma_p \sqrt{d}))$. By Lemma \ref{lem_inner_bound}, we conclude the proof for $
    \max_{j,r} \gamma_{j,r}^{(T_1)}  = \tilde{O}(\sigma_0 \| \boldsymbol{\mu}\|_2)$.

Lastly, according to Lemma \ref{lem:lowergamma_stage1}, at the end of stage1, we have the lower bound on signal learning coefficient
\begin{align*}
   \max_{r \in [m]}  \gamma_{j,r}^{(t)} + |\langle \mathbf{w}_{j,r}^{(0)}, \boldsymbol{\mu} \rangle| & \geq \exp\big( \frac{ C_0 \eta \| \boldsymbol{\mu}\|^2_2}{8m}  t \big) \max_{r \in [m]} |\langle \mathbf{w}_{j,r}^{(0)}, \boldsymbol{\mu} \rangle| \\
   & = \exp\big( \frac{ C_0 \eta \| \boldsymbol{\mu}\|^2_2}{8m} \frac{\log(20/(\sigma_0 \sigma_p \sqrt{d})) 4nm}{\eta C_0 \sigma_p^2 d} \big) \max_{r \in [m]} |\langle \mathbf{w}_{j,r}^{(0)}, \boldsymbol{\mu} \rangle| \\
   & \ge \exp( n \mathrm{SNR}^2 \log(20/(\sigma_0 \sigma_p \sqrt{d})) \sigma_0 \| \boldsymbol{\mu} \|_2  \ge  \sigma_0 \| \boldsymbol{\mu} \|_2.
\end{align*}
\end{proof}

\subsection{Proof of Lemma \ref{lem:lngd_stage2}}

The key idea is to show $\overline{\rho}^{(t)}_{j,r,i}$ oscillates during the second stage, where the growth tends to offset the drop over a given time frame. This would suggest the $f(\mathbf{W}^{(t)},\mathbf{x})$ is  both upper and lower bounded by a constant, which is crucial to ensuring that $\gamma^{(t)}_{j,r}$ increases exponentially during the second stage.

Without loss of generality, for each $i$ with $\langle \mathbf{w}^{(t)}_{j,r,i}, \boldsymbol{\xi}_i \rangle>0$ and $j = y_i = 1$, the evolution of $\overline{\rho}_{j,r,i}^{t+1}$ is written as
\begin{align*}
    \overline{\rho}_{j,r,i}^{t+1} & =  \overline{\rho}^{(t)}_{j,r,i} - \frac{2\eta}{nm} \elldi \langle \mathbf{w}^{(t)}_{j,r}, \boldsymbol{\xi}_i \rangle \| \boldsymbol{\xi}_i \|^2 \epsilon_i^{(t)} \\ & \approx
    \begin{cases}
        (1 +  \frac{2\eta \| \boldsymbol{\xi}_i\|^2}{nm(1+\exp(f^{(t)}_i))} ) \overline{\rho}^{(t)}_{j,r,i}, &\text{ if } \epsilon^{(t)}_i = 1 \\
        (1 -  \frac{2\eta \| \boldsymbol{\xi}_i\|^2}{nm(1+\exp(-f^{(t)}_i))}) \overline{\rho}^{(t)}_{j,r,i} &\text{ if }\epsilon^{(t)}_i = -1 
    \end{cases}
\end{align*}
where we denote $f^{(t)}_i = f(\mathbf{W}^{(t)},\mathbf{x}_i)$. Note that $f^{(t)}_i \approx \frac{1}{m} \sum_{r=1}^m  (\overline{\rho}^{(t)}_{+1, r,i})^2 $ when $\gamma^{(t)}_{j,r} \ll 1 $.

To simplify the notation, we define that $ \iota^{(t)}_i \triangleq \frac{1}{m} \sum_{r=1}^m  (\overline{\rho}^{(t)}_{+1, r,i})^2 $. Then the dynamics can be approximated to
\begin{equation*}
    \iota_i^{(t+1)} \approx \begin{cases}
        (1 + \frac{2\eta \| \boldsymbol{\xi}_i \|^2_2}{nm(1+\exp(  (\iota_i^{(t) })^2))} )^2   \iota_i^{(t)}   &\text{ with prob } 1-p \\
        (1 -\frac{2\eta \| \mathbf{x}_i \|^2_2}{nm(1+\exp(  -(\iota_i^{(t)})^2))}  )^2  \iota_i^{(t)} &\text{ with prob } p
    \end{cases}
\end{equation*}

\begin{lemma} [Restatement of Lemma \ref{lem:lngd_stage2}]\label{lem:second_rho}
    Under the same condition as Theorem \ref{thm:lnGD}, during $t \in [T_1, T_2]$ with $T_2 = T_1 + \log(6/(\sigma_0\| \boldsymbol{\mu}\|_2)) 4m (1+\exp(c_2)) \eta^{-1} \| \boldsymbol{\mu} \|_2^{-2}$, there exist a sufficient large positive constant $C_\iota$ and a constant $\iota^\ast_i$ depending on sample index $i$ such that the following results hold with high probability at least $1- 1/d$:
    \begin{itemize}
        \item  $ | \iota^{(t)}_i -  \iota^\ast_i   |  \le C_\iota$ 
     \item $\gamma^{(t)}_{j,r} \le 0.1$ for all $j \in \{ -1,1 \}$ and $r \in [m]$
     \item   $ \frac{1}{2m} \sum_{r=1}^m  (\overline{\rho}^{(t)}_{y_i, r,i})^2 \le  f^{(t)}_i \le \frac{2}{m} \sum_{r=1}^m  (\overline{\rho}^{(t)}_{y_i, r,i})^2  $ 
     \item  $  \max_{r \in [m]} ( \gamma_{j,r}^{(t)} + |\langle \mathbf{w}_{j,r}^{(0)}, \boldsymbol{\mu} \rangle|) \geq\exp\big( \frac{  \eta \| \boldsymbol{\mu}\|^2_2}{16m}  (t-T_1) \big) \max_{r \in [m]} |\gamma_{j,r}^{(T_1)}+ \langle \mathbf{w}_{j,r}^{(0)}, \boldsymbol{\mu} \rangle| $.
    \end{itemize}
\end{lemma}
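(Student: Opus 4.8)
The plan is to treat the stochastic recursion for $\iota^{(t)}_i$ displayed above as a mean-reverting process, locate its equilibrium, and control the fluctuations by a martingale concentration argument; the exponential growth of signal learning then follows by feeding the resulting control on $f^{(t)}_i$ back into the $\gamma$-update. First I would pin down the fixed point $\iota^\ast_i$. Writing $a_i \coloneqq 2\eta\|\boldsymbol{\xi}_i\|_2^2/(nm) = \tilde O(\eta\sigma_p^2 d/(nm)) \ll 1$ and expanding the two multiplicative factors to first order in $a_i$, the expected one-step multiplicative drift is proportional to
\[
(1-p)\frac{1}{1+\exp((\iota^{(t)}_i)^2)} - p\frac{1}{1+\exp(-(\iota^{(t)}_i)^2)}.
\]
Setting $u = \exp((\iota^\ast_i)^2)$ and clearing denominators gives the quadratic $p u^2 + (2p-1)u - (1-p) = 0$, whose positive root is $u = (1-p)/p$; hence $(\iota^\ast_i)^2 = \log((1-p)/p)$, a constant of order one because $p \in (\tilde\Omega(1/\sqrt{mn}), 1/C)$. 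This is the structural heart of the argument: the equilibrium noise-memorization level is an $O(1)$ quantity set entirely by the flipping rate. Since the drift is positive for $\iota^{(t)}_i < \iota^\ast_i$ and negative for $\iota^{(t)}_i > \iota^\ast_i$, the recursion is mean-reverting toward $\iota^\ast_i$.

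To establish claim (i) I would run stopped-supermartingale arguments on each side of the equilibrium. For the upper tail, whenever $\iota^{(t)}_i \ge \iota^\ast_i$ the conditional expectation satisfies $\mathbb{E}[\iota^{(t+1)}_i \mid \mathcal{F}_t] \le \iota^{(t)}_i$, so the process stopped upon first exceeding $\iota^\ast_i + C_\iota$ is a supermartingale; the increments are bounded by $O(a_i) = \tilde O(\eta\sigma_p^2 d/(nm))$, so Azuma--Hoeffding over the $T_2 - T_1 = \tilde\Theta(m\eta^{-1}\|\boldsymbol{\mu}\|_2^{-2})$ steps of the second stage bounds how far $\iota^{(t)}_i$ can climb. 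A symmetric argument, using that each flip step shrinks $\iota^{(t)}_i$ by at most a factor $1 - O(a_i)$, controls the lower tail. A union bound over $i \in [n]$ and over the second-stage steps then yields $|\iota^{(t)}_i - \iota^\ast_i| \le C_\iota$ with probability $1 - 1/d$. This step must also discharge the ``$\approx$'' in the recursion: the true $\orho$-update differs by the signal and cross terms in $f^{(t)}_i$, by the gap between $\langle \mathbf{w}^{(t)}_{j,r}, \boldsymbol{\xi}_i\rangle$ and $\orho^{(t)}_{j,r,i}$ (bounded via Lemma~\ref{lemma:mu_xi_inner_bound}), and by the $\urho$ contribution (which is $\tilde O(\sigma_0\sigma_p\sqrt d)$, hence negligible), all of which I would fold into the slack constant $C_\iota$.

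Claims (ii)--(iv) follow by bootstrapping against (i). Because (i) keeps $\frac{1}{m}\sum_r(\orho^{(t)}_{y_i,r,i})^2$ at constant order while $\gamma^{(t)}_{j,r}$ remains small, the signal and cross contributions to the output are lower order, giving the two-sided bound (iii), namely $\frac{1}{2m}(\sum_r \orho^{(t)}_{y_i,r,i})^2 \le f^{(t)}_i \le \frac{2}{m}(\sum_r \orho^{(t)}_{y_i,r,i})^2$. In particular $f^{(t)}_i = O(1)$, so the loss derivative $|\elldi|$ stays bounded below by a constant $C_0$ throughout the stage. Substituting this lower bound into the update~(\ref{eq:gamma_update}) and repeating the computation of Lemma~\ref{lem:lowergamma_stage1} yields the exponential lower bound (iv). Finally, the choice $T_2 = T_1 + \log(6/(\sigma_0\|\boldsymbol{\mu}\|_2))\,4m(1+\exp(c_2))\eta^{-1}\|\boldsymbol{\mu}\|_2^{-2}$ is calibrated so that this exponential reaches a constant of order $0.1$ but no larger, which gives (ii) and simultaneously closes the induction underlying (iii).

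The main obstacle is the concentration step. Because the increments of $\iota^{(t)}_i$ depend on the current state through $\exp((\iota^{(t)}_i)^2)$, the process is neither a sum of independent increments nor a simple random walk, so the stopped supermartingale must be constructed with care and the bounded-difference condition verified uniformly over the admissible range of $\iota$. Moreover the per-step failure probability has to be small enough to survive the union bound over all $\tilde\Theta(m\eta^{-1}\|\boldsymbol{\mu}\|_2^{-2})$ second-stage steps and all $n$ samples, which is precisely what forces the lower bound $p = \tilde\Omega(1/\sqrt{mn})$ on the flipping rate in Assumption~\ref{ass:main}.
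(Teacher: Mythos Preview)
Your high-level plan matches the paper's: control $\iota_i^{(t)}$ via a supermartingale plus Azuma, then feed the resulting $O(1)$ bound on $f_i^{(t)}$ back into the $\gamma$-update to get (ii)--(iv). You also go further than the paper by explicitly solving for the equilibrium $(\iota_i^\ast)^2 = \log\bigl((1-p)/p\bigr)$; the paper leaves $\iota_i^\ast$ as an unspecified ``sufficiently large constant.''

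The substantive technical difference is in the choice of supermartingale. You propose two one-sided stopped arguments, treating $\iota_i^{(t)}$ itself as a supermartingale on the region $\{\iota \ge \iota_i^\ast\}$ and symmetrically below. The paper instead works with the single process $M_i^{(t)} \coloneqq (\iota_i^{(t)} - \iota_i^\ast)^2$ and shows by direct computation that $\mathbb{E}[M_i^{(T+1)} \mid \iota_i^{(T)}] - M_i^{(T)} \le 0$ whenever $\iota_i^{(T)} \le 4\iota_i^\ast$ and $p < 1/(1 + 2\exp((\iota_i^{(T)})^2))$; a single Azuma application with increment bound $|M_i^{(k)} - M_i^{(k-1)}| \le \eta C_2$ then controls $|\iota_i^{(t)} - \iota_i^\ast|$ directly. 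This buys the paper a cleaner argument: the squared deviation is globally a supermartingale (under the induction hypothesis), so there is no need to stop, restart, or patch together excursions above and below $\iota_i^\ast$. Your stopped-process route can be made to work, but the bookkeeping is more delicate than you suggest---a process that is a supermartingale only while above $\iota_i^\ast$ gives no control across repeated crossings, so you would need an additional argument to stitch the excursions together.

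On structure: the paper packages all four claims into one induction on $t \in [T_1,T_2]$, and the supermartingale verification itself requires the induction hypothesis (specifically $\iota_i^{(T)} \le 4\iota_i^\ast$, which comes from the previous step's claim (i)). Your ``bootstrap'' is really the same induction, but you should make explicit that (i)--(iv) are interlocked rather than proved sequentially.
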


\begin{proof} [Proof of Lemma \ref{lem:second_rho}]
The proof is based on the method of induction. Without loss of generality, we consider all $i$ with $y_i =1$. We first check that at time step  $t = T_1$, by Lemma \ref{lem:lngd_stage1}, there exists a constant $C$ such that
\begin{align*}
  |\frac{1}{m} \sum_{r=1}^m  \overline{\rho}^{(T_1)}_{+1, r,i} - \iota^\ast_i| \le C.    
\end{align*}
Besides, by Lemma \ref{thm:lnGD}, it is straightforward to check that $ \gamma^{(T_1)}_{j,r} \le 1$ for all $j \in \{ -1,1 \}$ and $r \in [m]$, and $ \max_{j} \gamma^{(T_1)}_{j,r} \ge 0$. Next, we can show the following result at time $t= T_1$:
\begin{align*}
    f^{(T_1)}_i &= F_{+1}(\mathbf{W}^{(T_1)}_{+1}, \mathbf{x}_i) - F_{-1}(\mathbf{W}^{(T_1)}_{-1}, \mathbf{x}_i) \\
    &= \frac{1}{m} \sum_{r=1}^m \sigma \big( \langle \mathbf{w}^{(0)}_{+1,r}, \boldsymbol{\mu} \rangle + \gamma^{(T_1)}_{+1, r} \big) + \frac{1}{m} \sum_{r=1}^m \sigma \Big( \langle \mathbf{w}^{(0)}_{+1,r}, \boldsymbol{\xi}_i\rangle + \overline{\rho}^{(T_1)}_{+1, r, i} + \sum_{i'\neq i} \frac{\langle \boldsymbol{\xi}_i, \boldsymbol{\xi}_{i'} \rangle}{\| \boldsymbol{\xi}_{i'}\|^2_2} \rho^{(T_1)}_{+1, r,i'} \Big) \\
    & \quad -\frac{1}{m} \sum_{r=1}^m \sigma \big( \langle \mathbf{w}^{(0)}_{-1,r}, \boldsymbol{\mu} \rangle - \gamma^{(T_1)}_{-
    1, r} \big) - \frac{1}{m} \sum_{r=1}^m \sigma \Big( \langle \mathbf{w}^{(0)}_{-1,r}, \boldsymbol{\xi}_i\rangle + \underline{\rho}^{(T_1)}_{-1, r, i} + \sum_{i'\neq i} \frac{\langle \boldsymbol{\xi}_i, \boldsymbol{\xi}_{i'} \rangle}{\| \boldsymbol{\xi}_{i'}\|^2_2} \rho^{(T_1)}_{-1, r,i'} \Big)  \\
    &\ge - \tilde{\Omega}(\sigma^2_0 \| \boldsymbol{\mu}\|^2_2) - \tilde{\Omega}(\sigma_0 \sigma_p \sqrt{d}) + \frac{1}{m} \sum_{r=1}^m  (\overline{\rho}^{T_1)}_{+1, r,i}  -\beta -16  \sqrt{\frac{\log(4n^2/\delta)}{d}} n \alpha )^2   \\
    & \ge \frac{1}{2m}  \sum_{r=1}^m  (\overline{\rho}^{(T_1)}_{+1, r,i})^2,
\end{align*}
where the first inequality is by Lemma \ref{lem:lngd_stage2}, Proposition \ref{prop:upper_entire}, and Lemma \ref{lem_xi_bound}, The second inequality follows from the condition on $\sigma_0$ and $d$ in Assumption \ref{ass:main}. Similarly, we have
\begin{align*}
    f^{(T_1)}_i &= F_{+1}(\mathbf{W}^{(T_1)}_{+1}, \mathbf{x}_i) - F_{-1}(\mathbf{W}^{(T_1)}_{-1}, \mathbf{x}_i) \\
    &\le  \tilde{O}(\sigma^2_0 \| \boldsymbol{\mu}\|^2_2) + \tilde{O}(\sigma_0 \sigma_p \sqrt{d}) + \frac{1}{m}\sum_{r=1}^m  (\overline{\rho}^{(T_1)}_{+1, r,i}  + \beta + 16  \sqrt{\frac{\log(4n^2/\delta)}{d}} n \alpha )^2   \\
    & \le \frac{2}{m}  \sum_{r=1}^m  (\overline{\rho}^{(T_1)}_{+1, r,i})^2.
\end{align*}

Next, we assume that all the results hold for $T_1 <t \le T$. By the induction hypothesis, we can bound $c_1 \leq f^{(T)}_i \leq c_2$ for all $i \in [n]$. Then we can show that $\gamma_{j,r}^{(T+1)}$ continues to exhibit exponential growth:
\begin{align*}
    \gamma^{(T+1)}_{j,r} &= \gamma^{(T)}_{j,r} - \frac{2 \eta}{nm} \big( \sum_{i \in \mathcal{S}^{(T)}_{+} \cap \mathcal{S}_1} \elldi - \sum_{i \in \mathcal{S}^{(T)}_{-} \cap \mathcal{S}_1} \elldi \big) \langle \mathbf{w}^{(T)}_{j,r}, \boldsymbol{\mu} \rangle \| \boldsymbol{\mu}\|^2_2 \\
    &\geq \gamma_{j,r}^{(T)} + \frac{2\eta}{nm} \big(  |\mathcal{S}_+^{(T)}| \frac{1}{1 + \exp(c_2)} - |\mathcal{S}_{-}^{(T)}| \frac{1}{1 + \exp(-c_2)} \big) \langle \mathbf{w}^{(T)}_{j,r}, \boldsymbol{\mu} \rangle \| \boldsymbol{\mu}\|^2_2 \\
    &\geq \gamma_{j,r}^{(T)} + \frac{2\eta}{m} \big(  \frac{2-3p}{4} \frac{1}{1 + \exp(c_2)} - \frac{3p}{4} \frac{1}{1 + \exp(-c_2)} \big) \langle \mathbf{w}^{(T)}_{j,r}, \boldsymbol{\mu} \rangle \| \boldsymbol{\mu}\|^2_2 \\
    &= \gamma_{j,r}^{(T)} + \frac{\eta}{m} \big(  \frac{1}{1 + \exp(c_2)} - \frac{3p}{2}  \big) (\langle \mathbf{w}_{j,r}^{(T)}, \boldsymbol{\mu} \rangle + j \gamma_{j,r}^{(T)}) \| \boldsymbol{\mu}\|^2_2 \\
    &\geq \gamma_{j,r}^{(T)} + \frac{\eta \| \boldsymbol{\mu} \|_2^2}{2 m (1 + \exp(c_2))} ( \langle \mathbf{w}_{j,r}^{(T)}, \boldsymbol{\mu} \rangle + j \gamma_{j,r}^{(T)} ),
\end{align*}
where the last inequality is by $\frac{3}{2}p \leq \frac{1}{2} \frac{1}{1 + \exp(c_2)}$. Next, define $B^{(t)} = \max_{r \in [m]} ( \gamma_{j,r}^{(t)} + |\langle \mathbf{w}_{j,r}^{(0)}, \boldsymbol{\mu} \rangle|)$, we have:
\begin{align*}
    B^{(T+1)} & \geq B^{(T)} (1 + \frac{\eta \| \boldsymbol{\mu} \|_2^2}{2 m (1 + \exp(c_2))} ) \\
     & \geq \exp(\frac{\eta \| \boldsymbol{\mu} \|_2^2}{4 m (1 + \exp(c_2))} (t-T_1)) B^{(T_1)} \\
     & \geq  \exp\big( \frac{  \eta \| \boldsymbol{\mu}\|^2_2}{16m}  (t-T_1) \big)B^{(T_1)}.
\end{align*}
At the same time, there exists an upper bound on the signal learning:
\begin{align*}
    \gamma_{j,r}^{(T)} + |\langle \mathbf{w}_{j,r}^{(0)}, \boldsymbol{\mu} \rangle| \leq \exp\big( \frac{2\eta \| \boldsymbol{\mu}\|^2_2}{m}  (T-T_1) \big) |\gamma_{j,r}^{(T_1)} + \langle \mathbf{w}_{j,r}^{(0)}, \boldsymbol{\mu} \rangle| \le 0.01,
\end{align*}
where we used the condition that $T < T_2 $.

To show that $\iota^{(T+1)}_i$ remains within a constant range, we define $M_i^{(t)} \triangleq (\iota^{(t)}_{i} - \iota^*_{i})^2 $ where $\iota^*_{i}$ is a sufficiently large constant depending on $i$. Using the relation
$ \frac{1}{2m}  \sum_{r=1}^m  (\overline{\rho}^{(T)}_{y_i, r,i})^2 \le  f^{(T)}_i \le \frac{2}{m}  \sum_{r=1}^m  (\overline{\rho}^{(T)}_{y_i, r,i})^2 $ we have:
\begin{align*}
    \mathbb{E}[\iota_i^{(T+1)}| \iota_{i}^{(T)} ] & \ge (1-p) \Big(1+ \frac{2\eta \| \boldsymbol{\xi}_i \|^2_2}{(1+2\exp((\iota^{(T)}_{i})^2 ))nm}\Big)^2 \iota^{(T)}_{i}  \\
     & \quad + p \Big(1-\frac{2\eta \| \boldsymbol{\xi}_i \|^2_2}{(1+1/2\exp(-(\iota_{i}^{(T)})^2))nm} \Big)^2 \iota^{(T)}_{i}.
\end{align*}
At the same time,
\begin{align*}
    \mathbb{E}[(\iota_i^{(T+1)})^2| \iota_{i}^{(T)} ] & \le (1-p) \Big(1+ \frac{2\eta \| \boldsymbol{\xi}_i \|^2_2}{(1+1/2\exp((\iota^{(T)}_{i})^2 ))nm}\Big)^4 (\iota^{(T)}_{i})^2  \\
     & \quad + p \Big(1-\frac{2\eta \| \boldsymbol{\xi}_i \|^2_2}{(1+2\exp(-(\iota_{i}^{(T)})^2))nm} \Big)^4 (\iota^{(T)}_{i})^2.
\end{align*}
Then we show that
\begin{align*}
    \mathbb{E}[ M_i^{(T+1)} | \iota^{(T)}_{i} ] & = \mathbb{E}[ (\iota_{i}^{(T+1)})^2 | \iota^{(T)}_{i} ] - 2 \iota^\ast \mathbb{E}[\iota_{i}^{(T+1)}| \iota^{(T)}_{i} ] + (\iota^\ast)^2 \\
     & \le (1-p) \Big(1+ \frac{2\eta \| \boldsymbol{\xi}_i \|^2_2}{(1+1/2\exp((\iota^{(T)}_{i})^2 ))nm}\Big)^4 (\iota_{i}^{(T)})^2 \\
     &\quad + p \Big(1-\frac{2\eta \| \boldsymbol{\xi}_i \|^2_2}{(1+2\exp(-(\iota_{i}^{(T)})^2))nm} \Big)^4 (\iota_{i}^{(T)})^2 \\
     & \quad - 2 \iota^\ast \bigg((1-p) \Big(1+ \frac{2\eta \| \boldsymbol{\xi}_i \|^2_2}{(1+2\exp((\iota^{(T)}_{i})^2 ))nm}\Big)^2 \iota^{(T)}_{i}  \\
     & \quad + p \Big(1-\frac{2\eta \| \boldsymbol{\xi}_i \|^2_2}{(1+1/2\exp(-(\iota_{i}^{(T)})^2))nm} \Big)^2 \iota^{(T)}_{i}\bigg)
     + (\iota^\ast)^2.
\end{align*}
Subtracting $M_i^{(T)}$ yields
\begin{align*}
     & \mathbb{E}[ M_i^{(T+1)} |\iota_i^{(T)} ] - M_i^{(T)} \\
     & \le  (1-p) \left[ \frac{8\eta \| \boldsymbol{\xi}_i \|^2_2}{(1+1/2\exp((\iota_{i}^{(T)})^2 ))nm} + O\left((\frac{\eta \|\boldsymbol{\xi}_i \|_2}{nm})^2\right) \right] (\iota_{i}^{(T)})^2  \\
      &  \quad + p \left[ -\frac{8 \eta \| \boldsymbol{\xi}_i \|^2_2}{(1+2\exp(-(\iota_{i}^{(T)})^2))nm}  + O\left((\frac{\eta \|\boldsymbol{\xi}_i \|_2}{nm})^2\right) \right] (\iota_{i}^{(T)})^2   \\
     & \quad -  \frac{8\eta \| \boldsymbol{\xi}_i \|^2}{nm} \Big( \frac{1}{1 + 2\exp((\iota_{i}^{(T)})^2)}  - p  \Big) \iota^{(T)}_i\iota^\ast   +  O\left((\frac{\eta \|\boldsymbol{\xi}_i \|_2}{nm})^2\right)   \\
      & = \frac{8 \eta \| \boldsymbol{\xi}_i \|^2_2 \iota^{(T)}_i }{nm}\left[  \frac{1 - p(1+1/2\exp((\iota_{i}^{(T)})^2))}{1+1/2\exp((\iota_{i}^{(T)})^2)} \iota^{(T)}_i - \frac{1 - p(1+2\exp((\iota_{i}^{(T)})^2))}{1+2\exp((\iota_{i}^{(T)})^2)} \iota^\ast \right]  + O\left((\frac{\eta \|\boldsymbol{\xi}_i \|_2}{nm})^2\right)   \\
      & \le 0, 
\end{align*}
where the final inequality is by 
$\iota^{(T)}_i \le 4\iota^\ast$ and $p < 1/(1+2\exp((\iota_{i}^{(T)})^2))$ and condition the learning rate from Assumption \ref{ass:main}, which confirms that $\{ M_i^{(t)} \}_{t \in [T_1,T]}$ is a super martingale. By one-sided Azuma inequality, with probability at least $1-\delta$, for any $\tau > 0$, it holds that 
\begin{align*}
    P ( M^{(T)}_i - M^{(T_1)}_i  \ge \tau) \le \exp \left(-\frac{\tau^2}{\sum_{k=T_1}^{T} c^2_k} \right),
\end{align*}
where,
\begin{align*}
  c_k  & =  |M^{(k)}_i - M^{(k-1)}_i | = | (\iota^{(k)}_i - \iota^\ast)^2 - (\iota^{(k-1)}_i - \iota^\ast)^2 | \\
     &  =   | (\iota^{(k)}_i - \iota^{(k-1)}_i )(\iota^{(k)}_i + \iota^{(k-1)}_i  - 2 \iota^\ast_i ) |    \le   \eta C_2. 
\end{align*}
Taking the upper bound of $c_k \le \eta C_2$ yields
\begin{align*}
    P (  (\iota^{(T+1)}_i - \iota^\ast_i)^2 - C^2_0 \ge \tau) \le \exp \left(-\frac{\tau^2}{(T+1-T_1)\eta^2 C_2^2} \right),
\end{align*}
where we define $ C^2_0 \triangleq (\iota^{(T)}_i - \iota^\ast_i)^2 > 0$.
Therefore, we conclude with probability at least $1- \delta$,
\begin{align*}
    | \iota^{(T+1)}_i - \iota^\ast_i   | \le \sqrt{ C^2_0 + \sqrt{\eta^2 t C_2^2  \log(1/\delta)} }  \le C_\iota,
\end{align*}
where the last inequality is by $\eta \le \tilde{O}(\sigma^{-2}_p d^{-1})$ and $T< T_2$.

Finally, we check that
\begin{align*}
    f^{(T+1)}_i &= F_{+1}(\mathbf{W}^{(T+1)}_{+1}, \mathbf{x}_i) - F_{-1}(\mathbf{W}^{(T+1)}_{-1}, \mathbf{x}_i) \\
    &= \frac{1}{m} \sum_{r=1}^m \sigma \big( \langle \mathbf{w}^{(0)}_{+1,r}, \boldsymbol{\mu} \rangle + \gamma^{(T+1)}_{+1, r} \big) + \frac{1}{m} \sum_{r=1}^m \sigma \Big( \langle \mathbf{w}^{(0)}_{+1,r}, \boldsymbol{\xi}_i\rangle + \overline{\rho}^{(T+1)}_{+1, r, i} + \sum_{i'\neq i} \frac{\langle \boldsymbol{\xi}_i, \boldsymbol{\xi}_{i'} \rangle}{\| \boldsymbol{\xi}_{i'}\|^2_2} \rho^{(T+1)}_{+1, r,i'} \Big) \\
    & \quad -\frac{1}{m} \sum_{r=1}^m \sigma \big( \langle \mathbf{w}^{(0)}_{-1,r}, \boldsymbol{\mu} \rangle - \gamma^{(T+1)}_{-
    1, r} \big) - \frac{1}{m} \sum_{r=1}^m \sigma \Big( \langle \mathbf{w}^{(0)}_{-1,r}, \boldsymbol{\xi}_i\rangle + \underline{\rho}^{(T+1)}_{-1, r, i} + \sum_{i'\neq i} \frac{\langle \boldsymbol{\xi}_i, \boldsymbol{\xi}_{i'} \rangle}{\| \boldsymbol{\xi}_{i'}\|^2_2} \rho^{(T+1)}_{-1, r,i'} \Big)  \\
    &\ge - \tilde{\Omega}(\sigma^2_0 \| \boldsymbol{\mu}\|^2_2) - 0.01 + \frac{1}{m} \sum_{r=1}^m  (\overline{\rho}^{(T+1)}_{+1, r,i}  -\beta -16  \sqrt{\frac{\log(4n^2/\delta)}{d}} n \alpha )^2   \\
    & \ge   \frac{1}{2m} \sum_{r=1}^m  (\overline{\rho}^{(T+1)}_{+1, r,i})^2,
\end{align*}
where the first inequality is by Lemma \ref{lem:lngd_stage1} and the induction claim, and the second inequality is by condition on $d$ from Assumption \ref{ass:main}.
Similarly, by the same argument, we conclude that:
\begin{align*}
    f^{(T+1)}_i &= F_{+1}(\mathbf{W}^{(t)}_{+1}, \mathbf{x}_i) - F_{-1}(\mathbf{W}^{(t)}_{-1}, \mathbf{x}_i) \\
    &\le  \tilde{O}(\sigma^2_0 \| \boldsymbol{\mu}\|^2_2) +0.01 +  \tilde{O}(\sigma_0 \sigma_p \sqrt{d}) + \frac{1}{m} \sum_{r=1}^m  (\overline{\rho}^{(t)}_{+1, r,i}  +  \beta + 16  \sqrt{\frac{\log(4n^2/\delta)}{d}} n \alpha )^2   \\
    & \le 2 \frac{1}{m} \sum_{r=1}^m  (\overline{\rho}^{(t)}_{+1, r,i})^2.
\end{align*}

Let $T_2 = T_1+ \log(6/(\sigma_0\| \boldsymbol{\mu}\|_2)) 4m (1+\exp(c_2)) \eta^{-1} \| \boldsymbol{\mu} \|_2^{-2}$, then by lemma \ref{lem:lngd_stage1}
we can show that
\begin{align*}
    \gamma^{(T_2)}_{j,r} & \ge \exp(\frac{\eta \| \boldsymbol{\mu} \|_2^2}{4 m (1 + \exp(c_2))} t) \gamma^{(T_1)}_{j,r} \\
     &  = \exp(   \frac{\eta \| \boldsymbol{\mu} \|_2^2}{4 m (1 + \exp(c_2))}  \log(6/(\sigma_0\| \boldsymbol{\mu}\|_2)) 4m (1+\exp(c_2)) \eta^{-1} \| \boldsymbol{\mu} \|_2^{-2}) \gamma^{(T_1)}_{j,r} \\
     & = C_0/(\sigma_0 \| \boldsymbol{\mu} \|_2) \gamma^{(T_1)}_{j,r} \\
     & \ge 0.01.
\end{align*}

\end{proof}

\subsection{Proof of Theorem \ref{thm:lnGD}}

\begin{proof} [Proof of Theorem \ref{thm:lnGD}]
For the population loss, we expand the expression as follows:
\begin{align*}
  \mathcal{L}^{0-1}_\mathcal{D}(\mathbf{W}^{(t)}) &  = \mathbb{E}_{(\mathbf{x}, y) \sim \mathcal{D}} [ y \neq f(\mathbf{W}^{(t)}, \mathbf{x}))] = \mathbb{P} ( y f(\mathbf{W}^{(t)}, \mathbf{x}) < 0) \\
      &= \mathbb{P} \Big(   \frac{1}{m} \sum_{r=1}^m  \sigma(\langle \mathbf{w}_{-y, r}^{(t)}, \boldsymbol{\xi} \rangle) -  \frac{1}{m}\sum_{r=1}^m \sigma(\langle \mathbf{w}_{y, r}^{(t)}, \boldsymbol{\xi} \rangle)  \geq  \\
      & \qquad  \qquad \frac{1}{m}  \sum_{r=1}^m \sigma(\langle \mathbf{w}_{y, r}^{(t)}, y \boldsymbol{\mu} \rangle) -  \frac{1}{m}\sum_{r=1}^m \sigma(\langle \mathbf{w}_{-y, r}^{(t)}, y \boldsymbol{\mu} \rangle) \Big). 
\end{align*}
Recall the weight decomposing
\begin{align*} 
    \mathbf{w}_{j,r}^{(t)}  = \mathbf{w}_{j,r}^{(0)} + j \gamma_{j,r}^{(t)} \| \boldsymbol{\mu} \|^{-2}_2 \boldsymbol{\mu}  + \sum_{i = 1}^n \overline{\rho}^{(t)}_{j,r,i} \| \boldsymbol{\xi}_i \|^{-2}_2 \boldsymbol{\xi}_i + \sum_{i = 1}^n \underline{\rho}^{(t)}_{j,r,i} \| \boldsymbol{\xi}_i \|^{-2}_2 \boldsymbol{\xi}_i.
\end{align*}
From this, we obtain:
\begin{align*}
\langle \mathbf{w}_{-y, r}^{(t)}, y \boldsymbol{\mu} \rangle &  = \langle \mathbf{w}_{-y,r}^{(0)}, y \boldsymbol{\mu}    \rangle  -  \gamma^{(t)}_{-y,r}, \\
    \langle \mathbf{w}_{y, r}^{(t)}, y \boldsymbol{\mu} \rangle &  = \langle \mathbf{w}_{y,r}^{(0)}, y \boldsymbol{\mu}    \rangle +    \gamma^{(t)}_{y,r}.
\end{align*}
By Lemma \ref{lem:lngd_stage2}, we conclude that 
\begin{align*}
     \langle \mathbf{w}_{y, r}^{(t)}, y \boldsymbol{\mu} \rangle  = \Theta(1), \quad \langle \mathbf{w}_{-y, r}^{(t)}, y \boldsymbol{\mu} \rangle  = - \Theta(\gamma^{(t)}_{y,r}) < 0.
\end{align*}
Therefore, it holds that
\begin{align*}
     & \frac{1}{m}  \sum_{r=1}^m \sigma(\langle \mathbf{w}_{y, r}^{(t)}, y \boldsymbol{\mu} \rangle) - \frac{1}{m}  \sum_{r=1}^m \sigma(\langle \mathbf{w}_{-y, r}^{(t)}, y \boldsymbol{\mu} \rangle) \\
     & =    \frac{1}{m}   \sum_{r=1}^m \sigma(\langle \mathbf{w}_{y, r}^{(t)}, y \boldsymbol{\mu} \rangle) \\
     & =  \frac{1}{m}  \sum_{r=1}^m  \sigma(\langle \mathbf{w}_{y,r}^{(0)}, y \boldsymbol{\mu}    \rangle +    \gamma^{(t)}_{y,r}) \\
     & = \Theta(1),
\end{align*}
where the last inequity is by Lemma \ref{lem:lngd_stage2}. 

Next, we provide the bound for the noise memorization part. Define that $g(\boldsymbol{\xi}) = \sum_{r=1}^m \sigma (\langle \mathbf{w}^{(t)}_{-y,r}, \boldsymbol{\xi} \rangle)$. By Theorem 5.2.2 in \cite{vershynin2018introduction}, for any $\tau>0$, it holds 
\begin{align*}
    \mathbb{P}(g(\boldsymbol{\xi}) - \mathbb{E}[g(\boldsymbol{\xi}] \ge \tau) \le \exp(-\frac{c\tau^2}{\sigma^2_p \|g \|^2_{\rm Lip} } ),
\end{align*}
where $c$ is a constant and $\|g \|_{\rm Lip}$ is the Lipschitz norm of function $g(\boldsymbol{\xi})$, which can be calculated as follows:
\begin{align*}
   |g(\boldsymbol{\xi}_i) - g(\boldsymbol{\xi}')| & = |\sum_{r=1}^m \sigma(\langle \mathbf{w}^{(t)}_{-y,r},\boldsymbol{\xi} \rangle) - \sum_{r=1}^m \sigma(\langle \mathbf{w}^{(t)}_{-y,r},\boldsymbol{\xi}' \rangle) | \\
    & \le \sum_{r=1}^m |\sigma(\langle \mathbf{w}^{(t)}_{-y,r},\boldsymbol{\xi}\rangle) - \sigma(\langle \mathbf{w}^{(t)}_{-y,r},\boldsymbol{\xi}' \rangle) | \\
    & { \le 2 \sum_{r=1}^m |  \langle \mathbf{w}^{(t)}_{-y,r},\boldsymbol{\xi} \rangle | \cdot |\langle \mathbf{w}^{(t)}_{-y,r},\boldsymbol{\xi} - \boldsymbol{\xi}'  \rangle |} \\
    & \le 2 \sum_{r=1}^m \| \mathbf{w}^{(t)}_{-y,r} \|^2_2  \cdot \| \boldsymbol{\xi}  \|_2 \cdot \| \boldsymbol{\xi} - \boldsymbol{\xi}' \|_2  \\
    & \le   3  \sum_{r=1}^m \| \mathbf{w}^{(t)}_{-y,r} \|^2_2 \sigma_p \sqrt{d} \| \boldsymbol{\xi} - \boldsymbol{\xi}' \|_2,
\end{align*}
where the first inequality is by the triangle inequality, the second inequality follows from the {the convexity} of the activation function, the third inequality is by the Cauchy-Schwarz inequality, and the last inequality follows from \ref{lem_xi_bound}. Therefore we conclude that 
\begin{align*}
    \|g \|_{\rm Lip} \le 3  \sum_{r=1}^m \| \mathbf{w}^{(t)}_{-y,r} \|^2_2 \sigma_p \sqrt{d}.
\end{align*} 
Furthermore, given that $ \langle \mathbf{w}^{(t)}_{-y,r}, \boldsymbol{\xi}  \rangle \sim \mathcal{N}( 0, \sigma^2_p \| \mathbf{w}^{(t)}_{-y,r}\|^2_2) $ we have:
\begin{align*}
    \mathbb{E}[g(\boldsymbol{\xi})] & = \sum_{r=1}^m \mathbb{E}[\sigma(\langle \mathbf{w}^{(t)}_{-y,r},\boldsymbol{\xi}' \rangle) ]   =  \sum_{r=1}^m \sigma^2_p/2 \| \mathbf{w}^{(t)}_{-y,r} \|^2_2. 
\end{align*} 
To obtain the the upper bound of $g(\boldsymbol{\xi})$, we show that:
\begin{align*}
    \|  \mathbf{w}^{(t)}_{-y,r} \|^2_2 & = \left \| \sum_{i=1}^n  \rho^{(t)}_{j,r,i} \| \boldsymbol{\xi}_i \|^{-2}_2 \boldsymbol{\xi}_i \right \|^2_2 \\
    & = \sum_{i=1}^n  (\rho^{(t)}_{j,r,i})^2  \| \boldsymbol{\xi}_i \|^{-2}_2 \boldsymbol{\xi}_i   + 2 \sum_{i=1}^n \sum_{j \neq i}  \rho^{(t)}_{j,r,i} \rho^{(t)}_{j,r,j}  \|\boldsymbol{\xi}_i\|^{-2}_2   \| \boldsymbol{\xi}_j \|^{-2}_2 \langle \boldsymbol{\xi}_i, \boldsymbol{\xi}_j \rangle \\
    & \le 3n C (\sigma^2_pd)^{-1} + 2n^2 (\sigma^2_pd)^{-2} \sigma^2_p \sqrt{d \log(4n^2/\delta)} \\
    & \le 4n C (\sigma_p^2 d)^{-1},
\end{align*}
where the first inequality is by Lemma \ref{lem_xi_bound}, and the second inequality is by the condition on $d$ in Assumption \ref{ass:main}. With the results above, we conclude that
\begin{align*}
  \mathcal{L}^{0-1}_\mathcal{D}(\mathbf{W}^{(t)}) &  = \mathbb{E}_{(\mathbf{x}, y) \sim \mathcal{D}} [ y \neq f(\mathbf{W}^{(t)}, \mathbf{x}))] = \mathbb{P} ( y f(\mathbf{W}^{(t)}, \mathbf{x}) < 0) \\
    &  \le \mathbb{P} (\sum_{r=1}^m \sigma(\langle \mathbf{w}^{(t)}_{-y,r}, \boldsymbol{\xi} \rangle) \ge  \sum_{r=1}^m \sigma(\langle \mathbf{w}^{(t)}_{y,r}, y \boldsymbol{\mu} \rangle ) )  \\
    & = \mathbb{P}( g(\boldsymbol{\xi}) - \mathbb{E}[g(\boldsymbol{\xi})] \ge  \sum_{r=1}^m \sigma(\langle \mathbf{w}^{(t)}_{y,r}, y \boldsymbol{\mu} \rangle ) -    \sum_{r=1}^m \sigma^2_p/2 \| \mathbf{w}^{(t)}_{-y,r} \|^2_2 ) \\
    & \le \exp \left( - \frac{c (\sum_{r=1}^m \sigma(\langle \mathbf{w}^{(t)}_{y,r}, y \boldsymbol{\mu} \rangle ) -    \sum_{r=1}^m \sigma^2_p/2 \| \mathbf{w}^{(t)}_{-y,r} \|^2_2 )^2 }{ \sigma^2_p   (3 \sum_{r=1}^m \| \mathbf{w}^{(t)}_{-y,r} \|^2_2 \sigma_p \sqrt{d})^2} \right) \\
    & \le \exp \left( -\left(\frac{ C_1 - \sigma^2_p/2  \cdot 4nC(\sigma^2_pd)^{-1} }{ 3 \sigma^2_p \sqrt{d} 4nC(\sigma^2_pd)^{-1}  }    \right)^2     \right) \\
    & \le \exp(  \frac{1}{36d}) \exp( - \frac{ C^2_1 d}{12^2 n^2 C^2 }) \\
    & \le 2 \exp \left( - \frac{ C^2_1 d}{12^2 n^2 C^2 } \right),
\end{align*}
which corresponds to the second bullet point of Theorem \ref{thm:lnGD}. Combined with Lemma \ref{lem:lngd_stage2}, which establishes the first bullet point, this completes the proof of Theorem \ref{thm:lnGD}.
\end{proof}

\section{Experimental Details for Figure \ref{fig:snr}}
\label{sec:figure1}

In this section, we provide a detailed description of the experimental setup used to generate the results shown in Figure \ref{fig:snr}, which compares the performance of Label Noise GD and Standard GD on the CIFAR-10 dataset under varying SNR conditions.

\subsection{Dataset and Noise Injection}

We used the CIFAR-10 dataset, selecting 1,000 images in total, with 100 images per class, to perform both Standard GD and Label Noise GD training. The random seed used for selecting training samples was fixed to ensure a fair comparison across different hyperparameters.

To simulate varying SNR conditions, inspired by \cite{ghorbani2020neural}, we introduced noise to the high-frequency Fourier components of the images using the following procedure: 

\begin{itemize} \item Each image was transformed into the frequency domain using a 2D Fourier transform. \item Gaussian noise was added to the high-frequency components, excluding the low-frequency region near the center of the Fourier spectrum. The intensity of the noise was controlled by a \textit{noise level} parameter, where higher values correspond to noisier data and lower SNR. \item Finally, the image was transformed back into the spatial domain using an inverse Fourier transform. \end{itemize}

The noise level was adjusted to control the SNR factor, which is represented on the x-axis in Figure \ref{fig:snr}. 

\subsection{Model and Training Setup}

The experiments were conducted using a VGG-16 model trained from scratch on the CIFAR-10 dataset. The final fully connected layer of the model was modified to output predictions for the 10 classes in CIFAR-10. Both Standard GD and Label Noise GD were trained using cross-entropy loss and gradient descent (GD) with a learning rate of 0.05. Training was performed with a full-batch setup over 5,000 epochs. For Label Noise GD, labels were flipped randomly with a probability of 20\% at each iteration to simulate label noise.

\subsection{Results Analysis}
The results shown in Figure \ref{fig:snr} demonstrate that Label Noise GD consistently achieves higher test accuracy than Standard GD across all SNR levels. The performance gap is most evident under low SNR conditions, where Standard GD suffers significant accuracy degradation due to noise memorization, while Label Noise GD effectively suppresses noise and promotes robust feature learning.

\subsection{Reproducibility}
To ensure reproducibility, all experiments were implemented in PyTorch. The codebase, including dataset preprocessing, model training, and evaluation, is provided in the supplementary material.

\section{Additional Experiments}
\label{sec:additional}

In this section, we provide additional experiments to further support our theoretical findings.

\subsection{Deeper Neural Network}

\begin{figure}[!htb]
    \centering
    \includegraphics[width=0.98\textwidth]{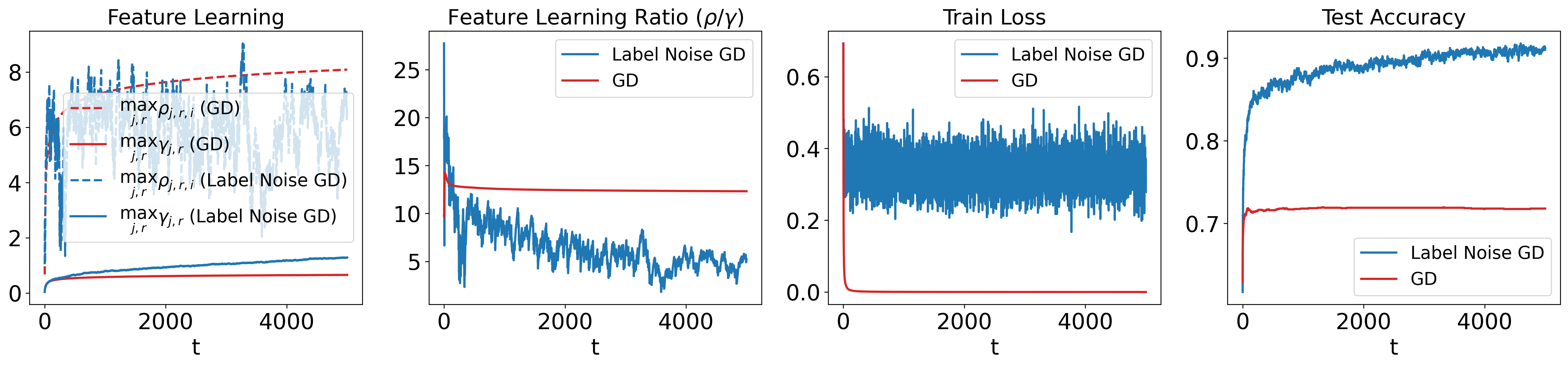}
    \vspace{-2mm}
    \caption{\small Performance of a 3-layer ReLU neural network: The ratio of noise memorization to signal learning, along with training loss and test accuracy, for standard GD and label noise GD.}
    \label{fig:deeper_layer}
\end{figure}

We have conducted additional experiments using a 3-layer neural network with ReLU activation. The network is defined as $f(\mathbf{W}, \mathbf{x}) = F_{+1}(\mathbf{W}_{+1}, \mathbf{W}, \mathbf{x}) - F_{-1} (\mathbf{W}_{-1}, \mathbf{W}, \mathbf{x})$, where 
\begin{align*}
    F_{j} (\mathbf{W}_j, \mathbf{W}, \mathbf{x}) = \frac{1}{m} \sum_{r = 1}^m \sum_{p = 1}^2 \sigma \big( \langle \mathbf{w}_{j,r}, \mathbf{z}^{{(p)}} \rangle \big), \quad \mathbf{z}^{{(p)}} = \sigma(\mathbf{W}^\top \mathbf{x}^{(p)}),
\end{align*}
in which $\sigma(\cdot)$ is the ReLU activation, $\mathbf{W} \in \mathbb{R}^{d \times m}$ denotes the weight in the first layer, and $\mathbf{W}_{\pm 1} \in \mathbb{R}^{m \times m}$ are weights in the second layer. The last layer is fixed.

Specifically, we train the first two layers. The number of training samples is $n=200$, and the number of test samples is $n_{\rm test} = 2000$. The input dimension was set to $d=2000$. We set the width to $m=20$, the learning rate to $\eta = 0.5$, and the noise flip rate to $p = 0.1$. The data model follows our theoretical setting, where $\boldsymbol{\mu} = [1, 0, 0 , \cdots, 0 ]$ and the noise strength is $\sigma_p = 1$. The experimental results, shown in Figure \ref{fig:deeper_layer}, are consistent with our original findings: compared to standard gradient descent, label noise GD boosts signal learning (as shown in the first plot) and achieves better generalization (as shown in the last plot).

\subsection{{Real World Dataset}}

\begin{figure}[!htb]
    \centering
    \includegraphics[width=0.98\textwidth]{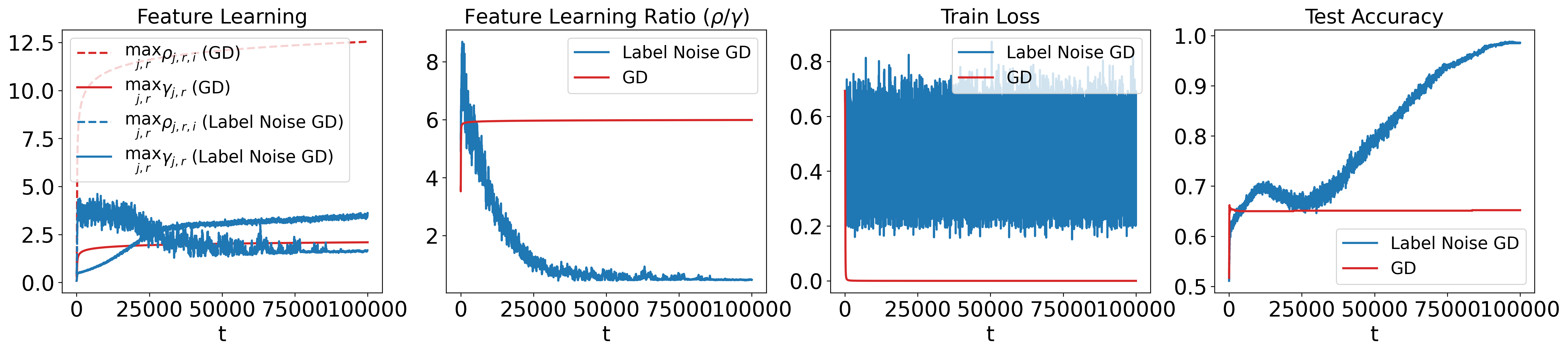}
    \vspace{-2mm}
    \caption{\small Performance on the modified MNIST dataset: The ratio of noise memorization to signal learning, along with training loss and test accuracy, for standard GD and label noise GD.}
    \label{fig:mnist}
\end{figure}

We conducted an experiment using the MNIST dataset, in which Gaussian noise was added to the borders of the images while retaining the digits in the middle. The noise level was set to $\sigma_p = 5$. Moreover, the original pixel values of the digits ranged from 0 to 255, and we chose a normalization factor of 80. In this setup, the added noise formed a ``noise patch" and the digits formed a ``signal patch". We focused on the digits `0' and `1', using $n = 100$ samples for training and 200 samples for testing. The learning rate was set to $\eta = 0.001$, and the width was set to $m=20$, with a label noise level of $p = 0.15$. The results, shown in Figure \ref{fig:mnist}, were consistent with our theoretical conclusions, reinforcing the insights derived from our analysis.

\begin{figure*}[t!]
\centering
    \subfigure[Performance of standard GD]{\includegraphics[width=0.45\textwidth]{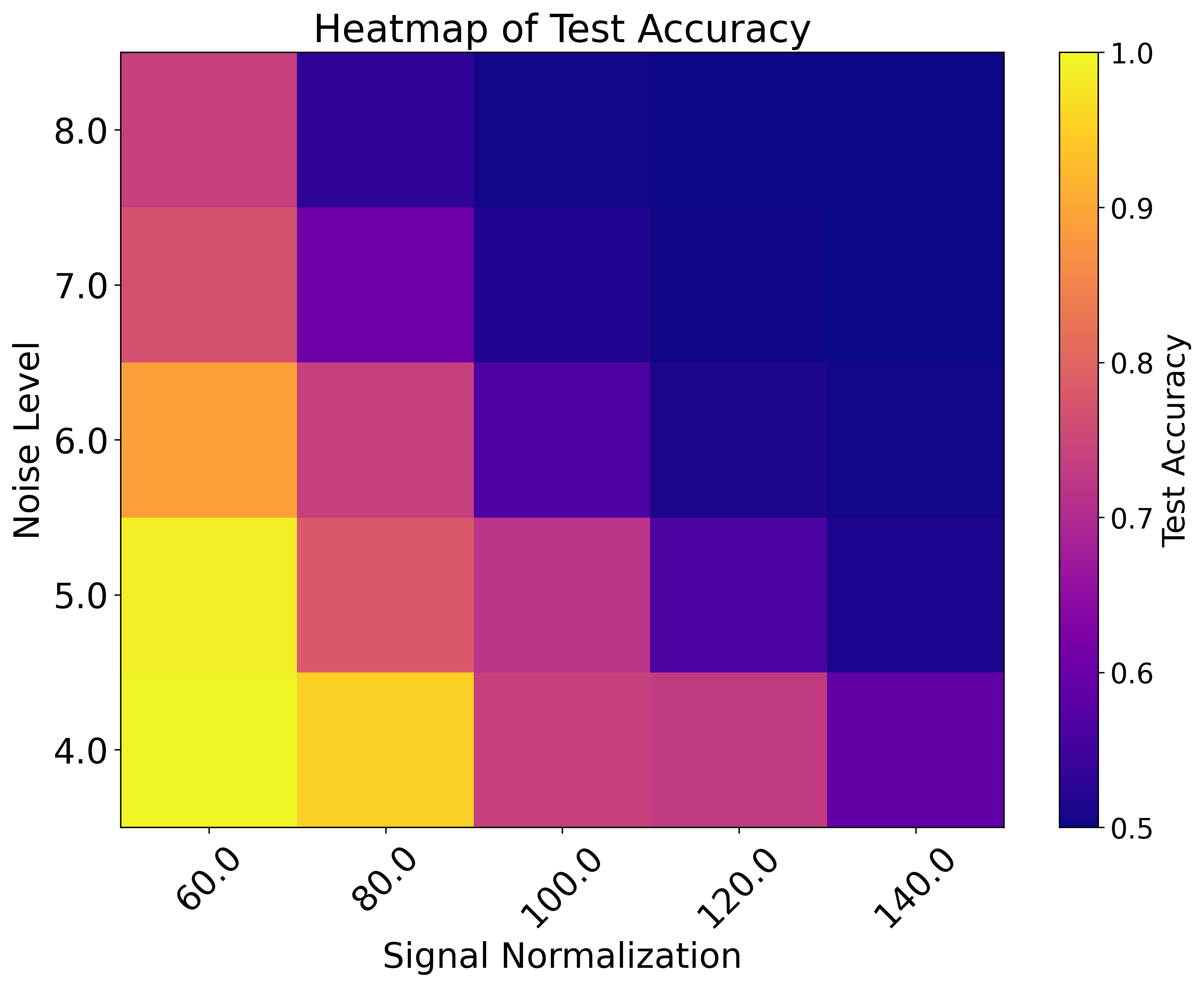}}
\subfigure[Performance of Label Noise GD]{\includegraphics[width=0.45\textwidth]{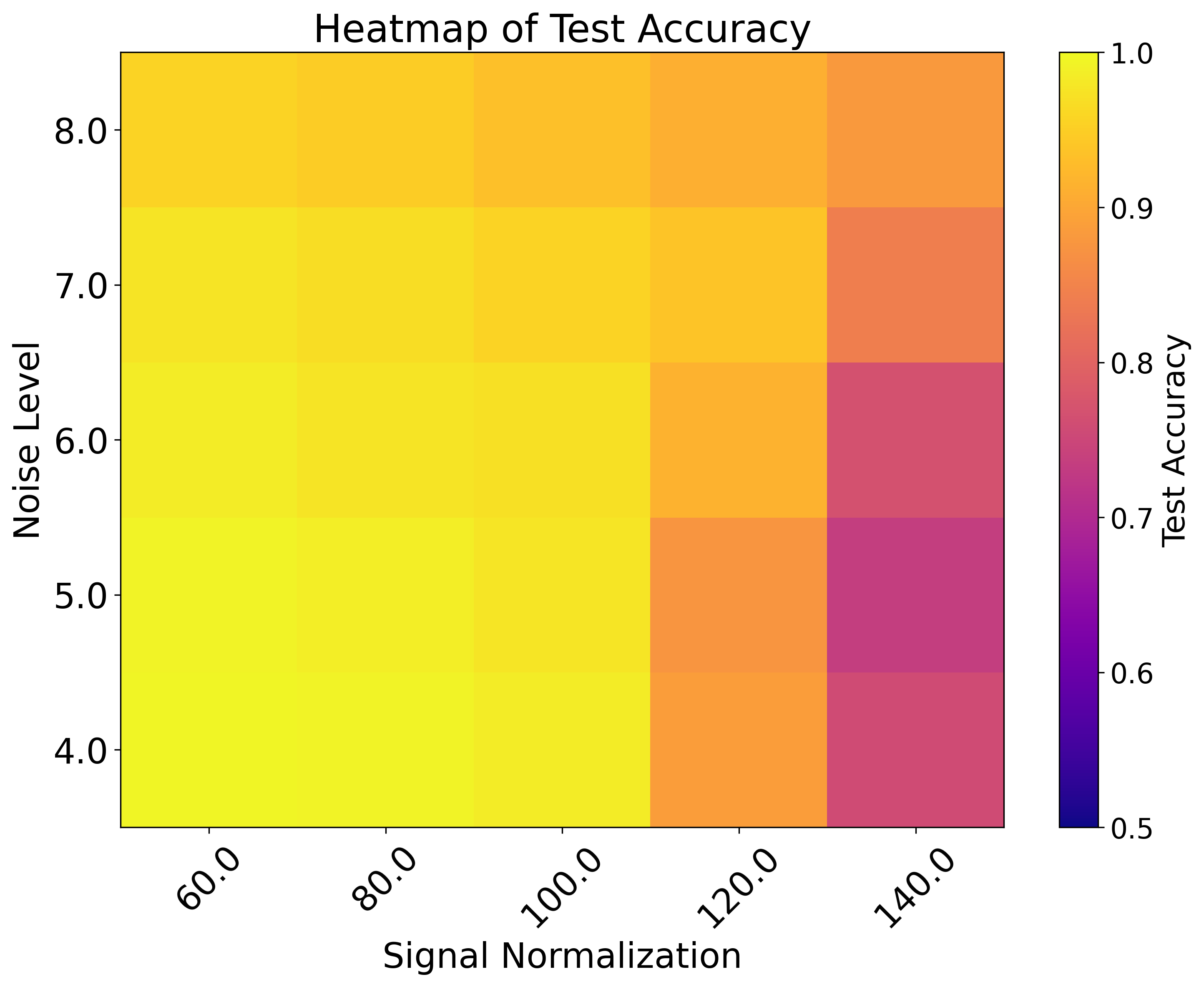}}
\vspace{-2mm}
\caption{\small Test accuracy heatmap of standard GD (left) and Label Noise GD (right) after training on modified MNIST dataset.}
\label{fig:heatmap_mnist}
\end{figure*}

To assess the sensitivity of the methods to the choice of noise parameters and signal normalization, we conducted additional experiments on a modified MNIST dataset. The signal normalization values were varied from 60 to 140, while the noise levels ranged from 4 to 8. For each combination of noise level and signal normalization, we trained the neural network for 200,000 steps with a learning rate $\eta = 0.001$, using either standard gradient descent (GD) or label noise GD. 

The resulting test errors are visualized in Figure \ref{fig:heatmap_mnist}. Notably, label noise GD (right) consistently achieves higher test accuracy than standard GD (left) across all configurations. This demonstrates the robustness of label noise GD to variations in noise and signal normalization parameters.

The motivation behind using MNIST was its clearer signal, which allows us to more directly observe the effects of label noise without other confounding factors. However, we also conducted experiments on a subset of CIFAR-10, using two classes: \textit{airplane} and \textit{automobile}. Gaussian noise was added to a portion of the images, following a similar setup to MNIST. For these experiments, we set $q=2$, the number of neurons $m=20$, the learning rate $\eta = 0.001$, the signal norm $\text{signal\_norm} = 64$, the noise level $\text{noise\_level} = 5$, the number of samples $n = 100$, the label noise probability $p = 0.15$, and the input dimension $d = 6144$. 

\begin{figure}[!htb]
    \centering
    \includegraphics[width=0.98\textwidth]{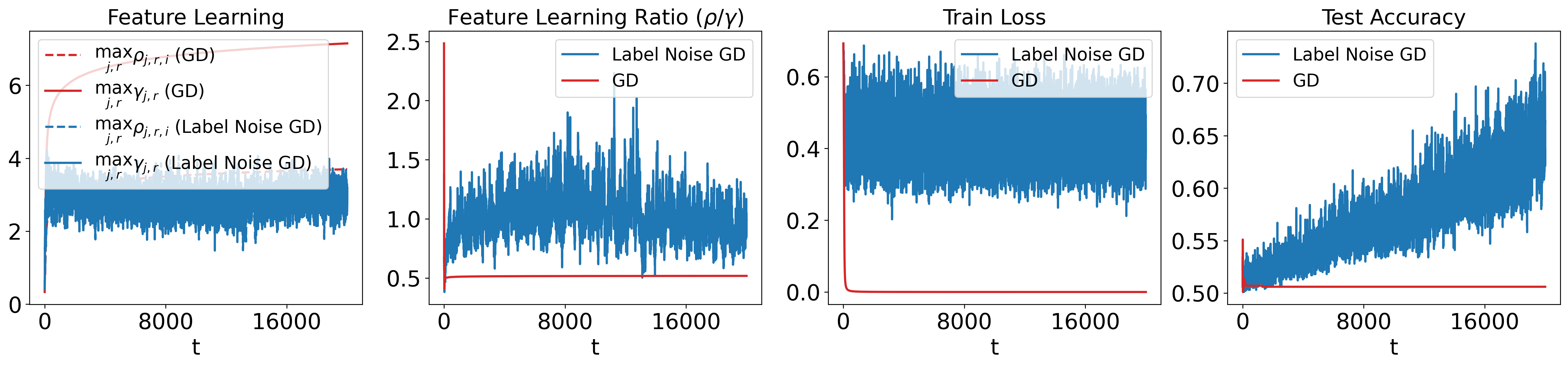}
    \vspace{-2mm}
    \caption{\small Performance on the modified CIFAR-10 dataset: The ratio of noise memorization to signal learning, along with training loss and test accuracy, for standard GD and label noise GD.}
    \label{fig:cifar}
\end{figure}

The results shown in Figure \ref{fig:cifar} indicate that label noise GD continues to provide benefits in terms of generalization compared to standard GD. We believe these extended experiments help establish a broader applicability of our findings to more complex benchmarks.

\subsection{{Different Type of Label  Noise}}

To validate the robustness of label noise GD under different noise forms, we varied $p$ across different values. For example, we show the results for $p = 0.3$ in Figure \ref{fig:p7} and $p = 0.4$ in Figure \ref{fig:p6}. The results consistently indicate that label noise helps reduce overfitting and boost generalization, especially in low SNR settings.

In addition, we extended our empirical analysis to include Gaussian noise and uniform distribution noise added to the labels. For Gaussian noise, we used two examples, namely $\epsilon^{(t)}_i \sim \mathcal{N}(1,1)$ and $\epsilon^{(t)}_i \sim \mathcal{N}(1,1)$, with the results shown in Figures \ref{fig:gaussian10} and \ref{fig:gaussian6}, respectively. Furthermore, for the uniform distribution, we simulated the noise with $\epsilon^{(t)}_i \sim \mathrm{unif}[-1, 2]$ and $\epsilon^{(t)}_i \sim \mathrm{unif}[-2, 3]$. The results are shown in Figures \ref{fig:unifm1p2} and \ref{fig:unifm2p3}, respectively.

Our results indicate that label noise GD still performs effectively, achieving better generalization compared to standard GD, providing further evidence of the robustness of label noise GD under different noise forms.

\begin{figure}[!htb]
    \centering
    \includegraphics[width=0.98\textwidth]{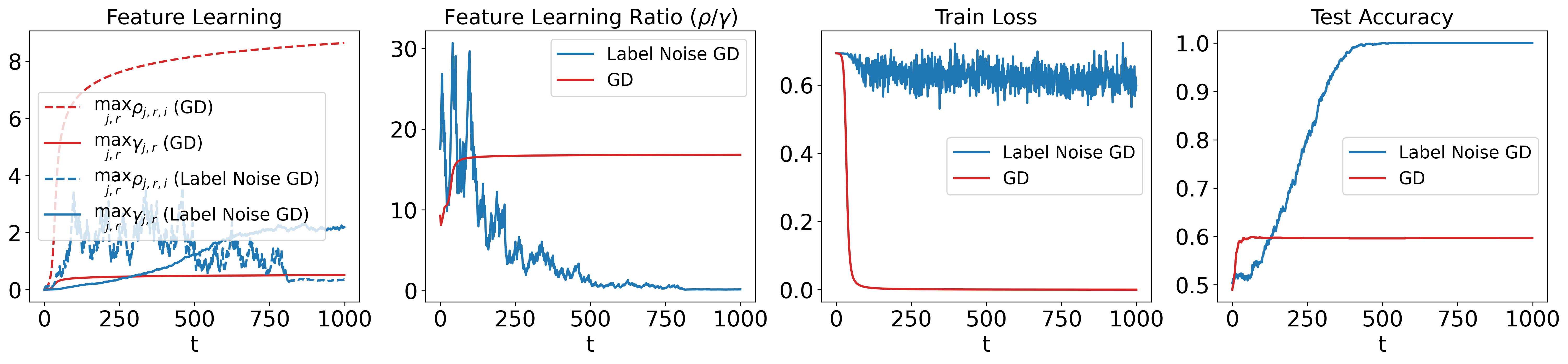}
    \vspace{-2mm}
    \caption{\small Performance with flip noise $p = 0.3$: The ratio of noise memorization to signal learning, training loss, and test accuracy of standard GD and label noise GD.}
    \label{fig:p7}
\end{figure}

\begin{figure}[!htb]
    \centering
    \includegraphics[width=0.98\textwidth]{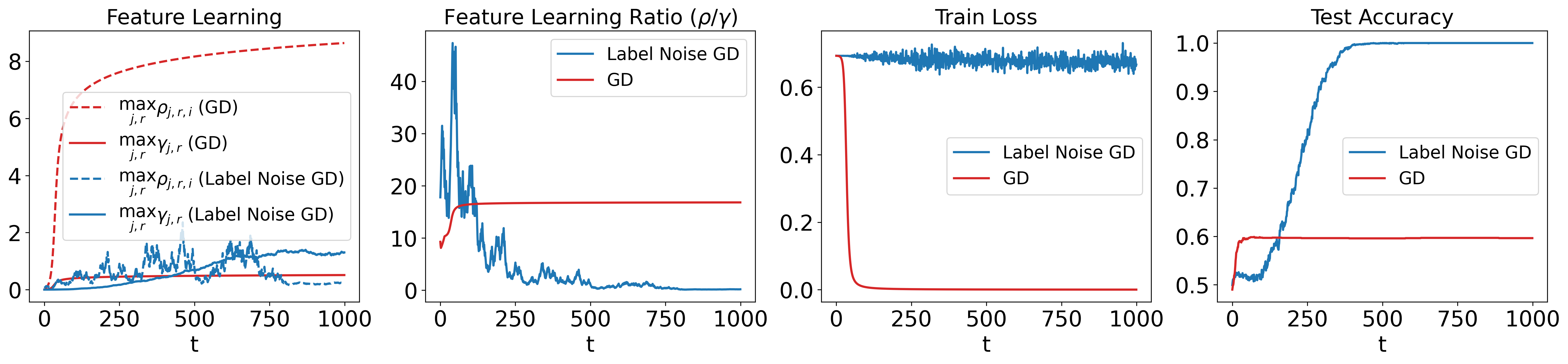}
    \vspace{-2mm}
    \caption{\small Performance with flip noise $p = 0.4$: The ratio of noise memorization to signal learning, training loss, and test accuracy of standard GD and label noise GD.}
    \label{fig:p6}
\end{figure}

\begin{figure}[!htb]
    \centering
    \includegraphics[width=0.98\textwidth]{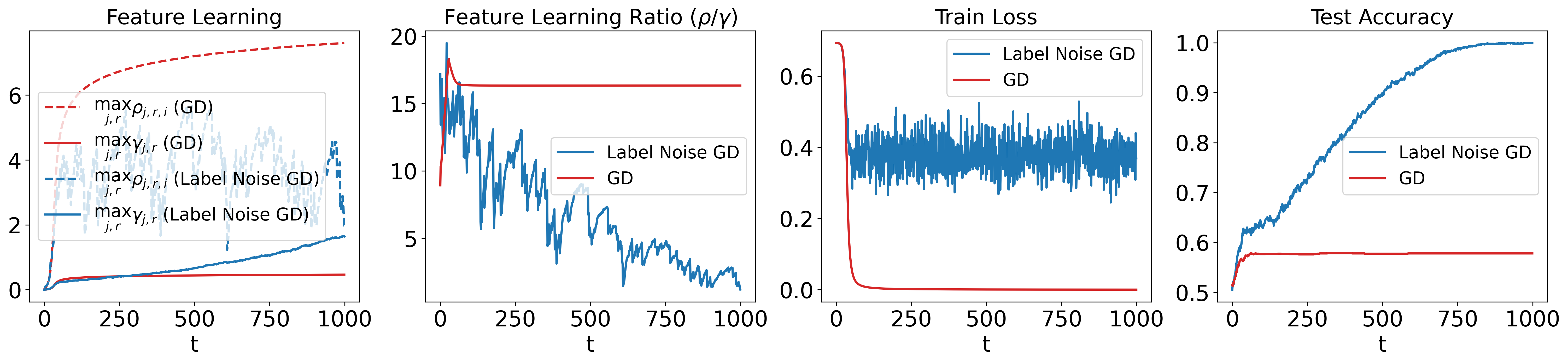}
    \vspace{-2mm}
    \caption{\small Performance with Gaussian noise $\mathcal{N}(1,1)$: The ratio of noise memorization to signal learning, training loss, and test accuracy of standard GD and label noise GD.}
    \label{fig:gaussian10}
\end{figure}

\begin{figure}[!htb]
    \centering
    \includegraphics[width=0.98\textwidth]{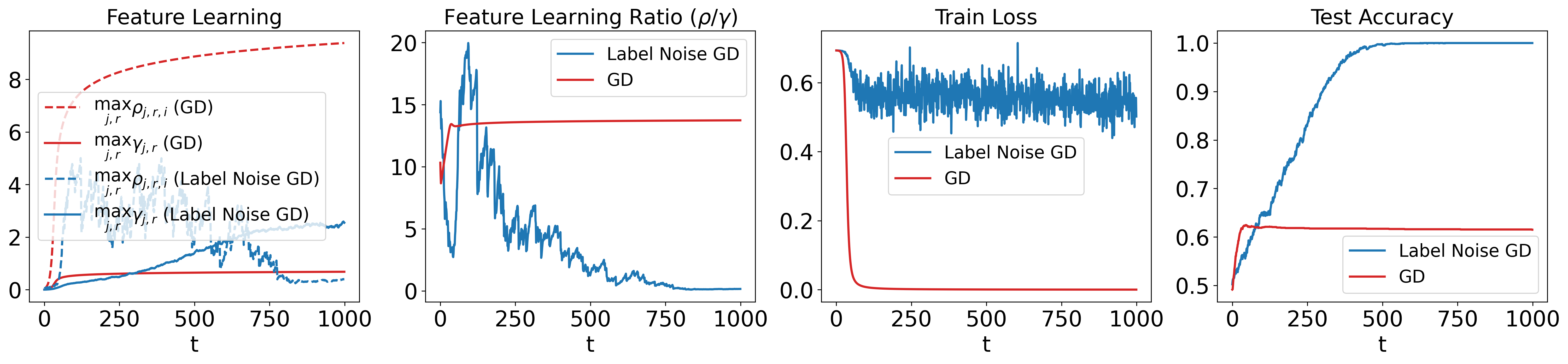}
    \vspace{-2mm}
    \caption{\small Performance with Gaussian noise $\mathcal{N}(0.6,1)$: The ratio of noise memorization to signal learning, training loss, and test accuracy of standard GD and label noise GD.}
    \label{fig:gaussian6}
\end{figure}

\begin{figure}[!htb]
    \centering
    \includegraphics[width=0.98\textwidth]{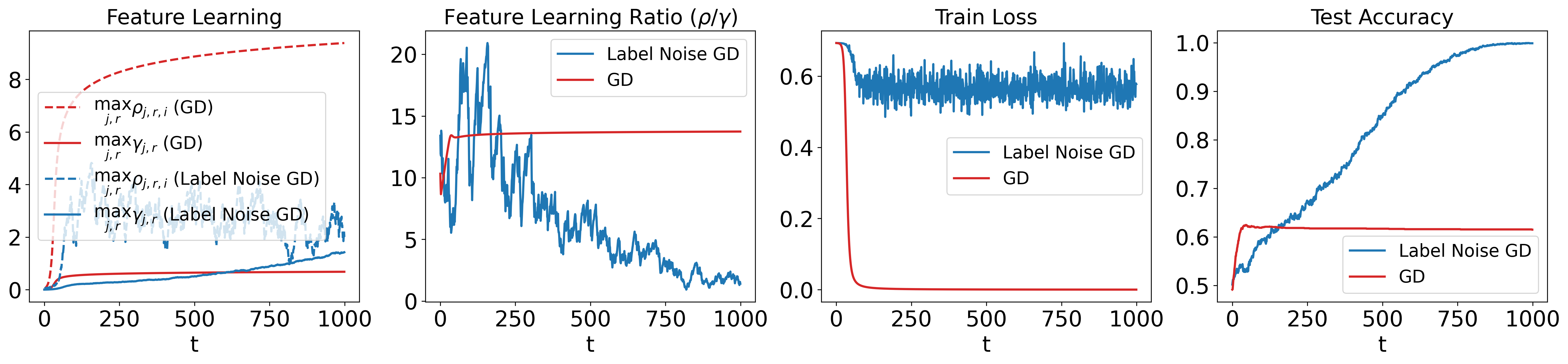}
    \vspace{-2mm}
    \caption{\small  Performance with uniform distribution noise $\mathrm{unif}[-1,2]$: The ratio of noise memorization to signal learning, training loss, and test accuracy of standard GD and label noise GD.}
    \label{fig:unifm1p2}
\end{figure}

\begin{figure}[!htb]
    \centering
    \includegraphics[width=0.98\textwidth]{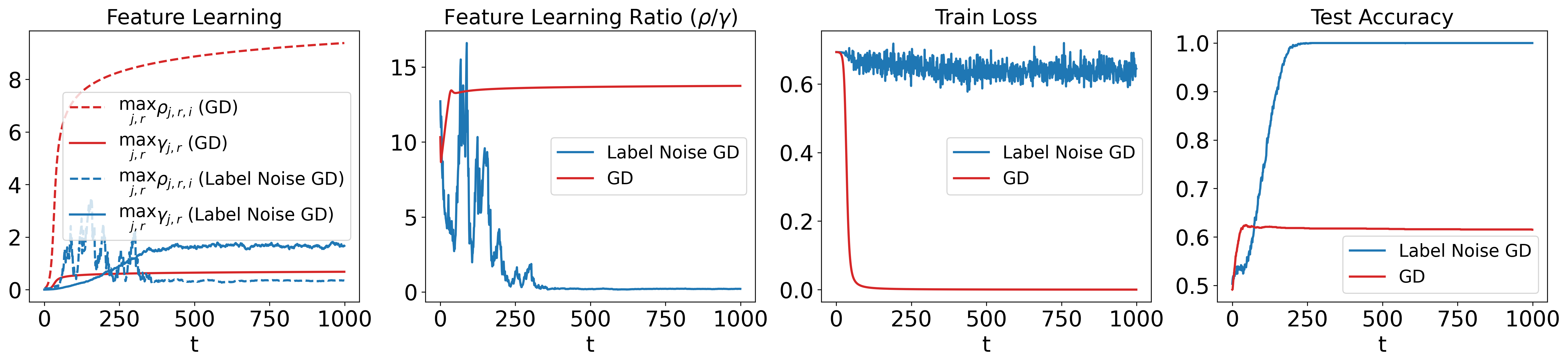}
    \vspace{-2mm}
    \caption{\small Performance with uniform distribution noise $\mathrm{unif}[-2,3]$: The ratio of noise memorization to signal learning, training loss, and test accuracy of standard GD and label noise GD.}
    \label{fig:unifm2p3}
\end{figure} 

\subsection{{Higher Order Polynomial ReLU}}

In this work, we set the activation function as squared ReLU. This choice makes $q=2$ a particularly interesting and challenging case to analyze, as it allows us to study the interaction between signal and noise in a setting that closely resembles practical two-layer ReLU networks.

For higher values of $q$, we also conducted experiments with $q=3$ and $q=4$. For $q=3$, we set the learning rate $\eta = 0.5$, the number of neurons $m = 20$, the number of samples $n = 200$, the signal mean $\boldsymbol{\mu} = [2, 0, 0, \cdots, 0]$, and the noise strength $\sigma_p = 0.5$. The results are shown in Figure \ref{fig:q3}. For $q=4$, the parameters were set as $\eta = 0.1$, $m = 20$, $n = 50$, $\boldsymbol{\mu} = [5, 0, 0, \cdots, 0]$, and $\sigma_p = 0.5$. The results are shown in Figure \ref{fig:q4}.

\begin{figure}[!htb]
    \centering
    \includegraphics[width=0.98\textwidth]{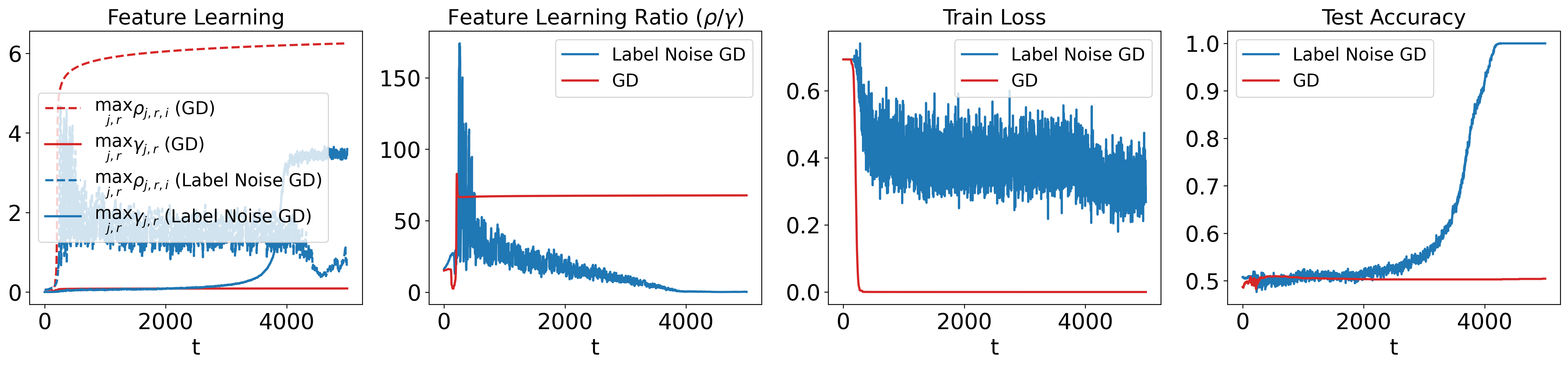}
    \vspace{-2mm}
    \caption{\small Performance with $q=3$ for polynomial ReLU: The ratio of noise memorization to signal learning, training loss, and test accuracy of standard GD and label noise GD.}
    \label{fig:q3}
\end{figure}

\begin{figure}[!htb]
    \centering
    \includegraphics[width=0.98\textwidth]{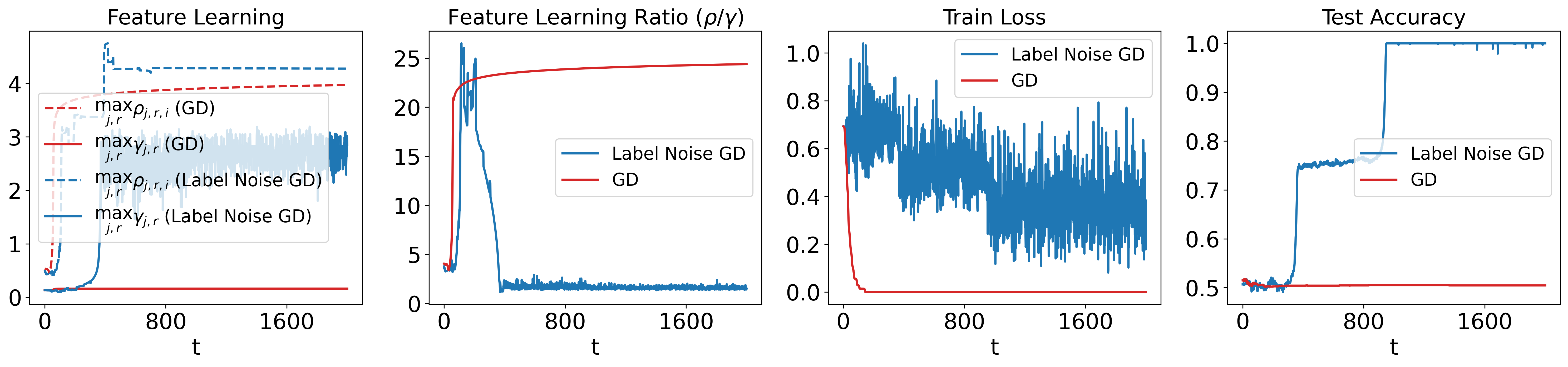}
    \vspace{-2mm}
    \caption{\small Performance with $q=4$ for polynomial ReLU: The ratio of noise memorization to signal learning, training loss, and test accuracy of standard GD and label noise GD.}
    \label{fig:q4}
\end{figure}

In all these cases, the experimental results consistently show that using a higher polynomial ReLU activation helps label noise GD suppress noise memorization while enhancing signal learning. This ultimately leads to improved test accuracy compared to standard GD.

\end{document}